\newcommand{\bs}{\boldsymbol}
\DeclareMathOperator{\Tr}{Tr}
\newtheorem{theorem}{Theorem}
\newtheorem{lemma}[theorem]{Lemma}
\newenvironment{proof}{{\bf Proof:}}{\hfill\rule{2mm}{2mm}}
\newenvironment{remark}{{\bf Remark:}}
\newcommand{\ewsg}[1]{{\color{RoyalBlue} #1}}
\newcommand{\SGHMC}[1]{{\color{RedOrange} #1}}
\begin{document}

\title{Improving Sampling Accuracy of Stochastic Gradient MCMC Methods via Non-uniform Subsampling of Gradients}
\author[1]{Ruilin Li}
\author[2]{Xin Wang}
\author[3]{Hongyuan Zha}
\author[1]{Molei Tao\thanks{\url{mtao@gatech.edu}}}
\affil[1]{Georgia Institute of Technology}
\affil[2]{Google Inc.}
\affil[3]{The Chinese University of Hong Kong, Shenzhen}
\date{}

\maketitle

\begin{abstract}
Many Markov Chain Monte Carlo (MCMC) methods leverage gradient information of the potential function of target distribution to explore sample space efficiently. However, computing gradients can often be computationally expensive for large scale applications, such as those in contemporary machine learning. Stochastic Gradient (SG-)MCMC methods approximate gradients by stochastic ones, commonly via uniformly subsampled data points, and achieve improved computational efficiency, however at the price of introducing sampling error. We propose a non-uniform subsampling scheme to improve the sampling accuracy. The proposed exponentially weighted stochastic gradient (EWSG) is designed so that a non-uniform-SG-MCMC method mimics the statistical behavior of a batch-gradient-MCMC method, and hence the inaccuracy due to SG approximation is reduced. EWSG differs from classical variance reduction (VR) techniques as it focuses on the entire distribution instead of just the variance; nevertheless, its reduced local variance is also proved. EWSG can also be viewed as an extension of the importance sampling idea, successful for stochastic-gradient-based optimizations, to sampling tasks. 
In our practical implementation of EWSG, the non-uniform subsampling is performed efficiently via a Metropolis-Hastings chain on the data index, which is coupled to the MCMC algorithm. Numerical experiments are provided, not only to demonstrate EWSG's effectiveness, but also to guide hyperparameter choices, and validate our \emph{non-asymptotic global error bound} despite of approximations in the implementation. Notably, while statistical accuracy is improved, convergence speed can be comparable to the uniform version, which renders EWSG a practical alternative to VR (but EWSG and VR can be combined too).
\end{abstract}

\section{Introduction} \label{introduction}
Consider the construction of algorithms that sample a target probability distribution $\pi\sim Z^{-1} \rho(\bs{x}) d\bs{x}$, where $Z$ is a normalization constant and the unnormalized density $\rho$ is assumed to be nonzero on the domain. Let $V(\bs{x}):=-\log \rho(\bs{x})$ and then the target density can be rewritten in the form of Gibbs distribution, i.e. $Z^{-1} \exp(-V(\bs{x}))$, where $V$ will be referred to as the potential function. 

For this purpose, many MCMC algorithms use physics-inspired evolution such as Langevin dynamics \citep{brooks2011handbook} to utilize gradient information (i.e., $\nabla V$) in order to efficiently explore the target distribution over continuous parameter space. However, gradient-based MCMC methods are often limited by the computational cost of evaluating the gradient on large data sets, which often correspond to specific potentials of the form $V(\bs{x}) = \sum_{i=1}^n V_i(\bs{x})$, where $n$ is very large; this type of additive potential with many terms will be the setup of this paper.

Motivated by the great success of stochastic gradient methods for optimization, which uses a stochastic estimator of the batch gradient $\nabla V$ instead of evaluating all $\nabla V_i$ terms, stochastic gradient MCMC methods (SG-MCMC) for sampling have also been gaining increasing attention. More precisely, when the accurate but expensive-to-evaluate batch gradients in a MCMC method are replaced by computationally cheaper estimates based on a subset of the data, the method is turned to a stochastic gradient version. Classical examples include SG (overdamped) Langevin Dynamics \citep{welling2011bayesian} and SG Hamiltonian Monte Carlo \citep{chen2014stochastic}, both designed for scalability suitable for machine learning tasks.

However, directly replacing the batch gradient by a (uniform) stochastic one without additional mitigation generally causes a MCMC method to sample from a statistical distribution different from the target, because the transition kernel of the MCMC method gets corrupted by the noise of subsampled gradient. In general, the additional noise is tolerable if the learning rate/step size is tiny or decreasing. However, when large steps are used for better efficiency, the extra noise is non-negligible  and undermines the performance of downstream applications such as Bayesian inference. 

In this paper, we present a state-dependent non-uniform SG-MCMC algorithm termed \textbf{E}xponentially \textbf{W}eighted \textbf{S}tochastic \textbf{G}radients method (EWSG), which continues the efforts of uniform SG-MCMC methods for scalable sampling. 
Our approach is based on designing the transition kernel of a SG-MCMC method to approximate the transition kernel of a full-gradient-based MCMC method. This approximation leads to non-uniform (in fact, exponential) weights that aim at capturing the entire state-variable distribution of the full-gradient-based MCMC method, rather than providing unbiased gradient estimator and reducing its variance. Nevertheless, if focusing on the variance, the advantage of EWSG is the following: recall the stochasticity of a SG-MCMC method can be decomposed into the \textit{intrinsic} randomness of MCMC and the \textit{extrinsic} randomness introduced by gradient subsampling; in conventional uniform subsampling treatments, the latter randomness is independent of the former, and thus when they are coupled together, variances add up; EWSG, on the other hand, dynamically chooses the weight of each datum according to the current state of the MCMC, and thus the variances do not add up due to dependence. However, the gained accuracy is beyond reduced variance, as EWSG, when converged, samples from a distribution close to the invariant distribution of the full-gradient MCMC method (which has no variance contributed by the extrinsic randomness), because its transition kernel (of the corresponding Markov process) is close to that of the full-gradient-MCMC method. 
This is how better sampling accuracy can be achieved.

Our main demonstration of EWSG is based on 2nd-order Langevin equations (a.k.a. inertial, kinetic, or underdamped Langevin), although it works for other MCMC methods too (e.g., Appendix \ref{ewsg:sgld},\ref{variance reduction}). To concentrate on the role of non-uniform SG weights, we will work with constant step sizes only. The fact that EWSG has locally reduced variance than its uniform counterpart is rigorously shown in Theorem \ref{thm:local}. Furthermore, a global non-asymptotic error analysis is given in Theorem \ref{thm:mse} to quantify the convergence and improved accuracy of EWSG, as well as to provide insights about hyperparameter choices.

Practically, the non-uniform gradient subsampling of EWSG is efficiently implemented via a Metropolis-Hastings chain over the data index. A number of experiments on synthetic and real world data sets, across downstream tasks including Bayesian logistic regression and Bayesian neural networks, are conducted to demonstrate the effectiveness of EWSG and validate our theoretical results, despite the approximation used in the implementation. In addition to improved accuracy, the convergence speed was empirically observed, in a fair comparison setup based on the same data pass, to be comparable to its uniform counterpart when hyper-parameters are appropriately chosen. The convergence (per data pass) was also seen to be clearly faster than a classical Variance Reduction (VR) approach (note: for sampling, not optimization), and EWSG hence provides a useful alternative to VR. Additional theoretical study of EWSG convergence speed is provided in Appendix \ref{sec:speed}.

Notation-wise, $\nabla V$ will be referred to as the full/batch-gradient, $n \nabla V_I$ with random $I \in [n]$, which is a statistical estimator of $\nabla V$, will be called stochastic gradient (SG), and when $I$ is uniformly distributed it will be called a uniform SG/subsampling, otherwise non-uniform. When uniform SG is used to approximate the batch-gradient in underdamped Langevin, the method will be referred to as (vanilla) Stochastic Gradient Underdamped Langevin Dynamics (SGULD/SGHMC\footnote{SGULD is the same as the well-known SGHMC with $\hat{B} = 0$, see eq. (13) and Sec. 3.3 in \cite{chen2014stochastic} for details. To be consistent with existing literature, we will refer SGULD as SGHMC in the sequel.}), and it serves as a baseline in experiments. 





\section{Related Works}\label{related}
\paragraph{Stochastic Gradient MCMC Methods (SG-MCMC)}
Based on approximating gradients by uniformly subsampled ones, stochastic gradient methods are computationally more favorable than their full gradient counterparts and have been widely studied and used in the field of optimization. Inspired by the great success of stochastic gradient methods in optimization, people also have also applied stochastic gradient methods to sampling problems. Since the seminal work of Stochastic Gradient Langevin Dynamics (SGLD) \citep{welling2011bayesian}, much progress \citep{ahn2012bayesian, patterson2013stochastic} has been made in the field of SG-MCMC.
\cite{teh2016consistency} theoretically justified the convergence of SGLD and offered practical guidance on tuning step size. \cite{li2016preconditioned} introduced a preconditioner and improved stability of SGLD. We also refer to \cite{maclaurin2015firefly} and \cite{fu2017cpsg} which will be discussed in Sec. \ref{experiments}. While these work were mostly based on 1st-order (overdamped) Langevin, other dynamics were considered too. For instance, \cite{chen2014stochastic} proposed Stochastic Gradient Hamiltonian Monte Carlo (SGHMC), which is closely related to 2nd-order Langevin dynamics \citep{bou2018geometric,bou2018coupling}, and \cite{ma2015complete} put it in a more general framework. 2nd-order Langevin was recently shown to be faster than the 1st-order version in appropriate setups \citep{cheng2017underdamped,cheng2018sharp,li2021mean} and began to gain more attention.

\paragraph{Variance Reduction (VR)} 
For \underline{optimization}, vanilla SG methods usually find approximate solutions quickly but the convergence slows down (due to variance) when an accurate solution is needed \citep{bach2013stochastic,johnson2013accelerating}. SAG \citep{schmidt2013minimizing} improved the convergence speed of stochastic gradient methods to linear, which is the same as gradient descent methods with full gradient, at the expense of large memory overhead. SVRG \citep{johnson2013accelerating} successfully reduced this memory overhead. SAGA \citep{defazio2014saga} furthers improved convergence speed over SAG and SVRG. For \underline{sampling}, \cite{dubey2016variance} applied VR techniques to SGLD (see also \citep{baker2019control,chatterji2018theory}). However, many VR methods have large memory overhead and/or periodically use the whole data set for gradient estimation calibration, and hence can be resource-demanding.

EWSG is derived based on matching transition kernels of MCMC and improves the accuracy of the entire distribution rather than just the variance. However, it does have a consequence of variance reduction and thus can be implicitly regarded as a VR method. When compared to the classic work on VR for SG-MCMC \citep{dubey2016variance}, EWSG converges faster when the same amount of data pass is used, although its sampling accuracy is below that of VR for Gaussian targets (but well above vanilla SG; see Sec. \ref{subsec:simple_gaussian}). In this sense, EWSG and VR suit different application domains: EWSG can replace vanilla SG for tasks in which the priority is speed and then accuracy, as it keeps the speed but improves the accuracy; on the other hand, VR remains to be the heavy weapon for accuracy-demanding scenarios. Importantly, EWSG, as a generic way to improve SG-MCMC methods, can be combined with VR too (e.g., Sec. \ref{variance reduction}); thus, they are not exclusive or competing with each other.


\paragraph{Importance Sampling (IS)}
IS methods employ nonuniform weights to improve the convergence speed of stochastic gradient methods for \underline{optimization}. Traditional IS methods use fixed weights that do not change along iterations, and the weight computation requires prior information of gradient terms, e.g., Lipschitz constant of the gradient \citep{needell2014stochastic,schmidt2015non,csiba2018importance}, which are usually unknown or difficult to estimate. Adaptive IS was also  proposed in which the importance was re-evaluated at each iteration, whose computation usually required the entire data set per iteration and may also require information like the upper bound of gradient \citep{zhao2015stochastic,zhu2016gradient}.

For \underline{sampling}, it is not easy to combine IS with SG \citep{fu2017cpsg}; the same paper is, to our knowledge, the closest to this goal and will be compared with in Sec. \ref{subsec:bnn}.
EWSG can be viewed as a way to combine (adaptive) IS with SG for efficient sampling. It require no oracle about the gradient, nor any evaluation over the full data set. Instead, an inner-loop Metropolis chain maintains a random index that  approximates a state-dependent non-uniform distribution (i.e. the weights/importance).

\paragraph{Other Mini-batch MCMC Methods}
Besides SG-MCMC methods, there are also many non-gradient-based MCMC methods 
that use only a subset of data in each iteration so that the MCMC methods can scale to large data sets. For example, austerity MH \citep{korattikara2014austerity} formulates Metropolis-Hastings step as a statistical hypothesis testing problem and proposes to use only a subset of data to make statistically significant accept/reject decision. Using a subsampled unbiased estimator of the likelihood in a pseudo-marginal framework to accelerate the Metropolis-Hastings algorithm is proposed in \cite{bardenet2017markov}. A notable \emph{exact} MCMC method is FlyMC \cite{maclaurin2015firefly}, which introduces an auxiliary binary random variable for each datum and only the subset of data whose corresponding auxiliary binary indicator "light" up, are used in iteration. Some more recent advances on exact MCMC methods include \cite{zhang2019poisson, zhang2020asymptotically}. We also refer to \cite{bardenet2017markov} for an excellent review on subsampling MCMC methods.

\section{Underdamped Langevin: the continuous time backbone of a MCMC method} \label{background} 
Underdamped Langevin Dynamics (ULD) is given by the SDE
\begin{align}
\label{eq:uld}
    \begin{cases}
      d\bs{\bs{\theta}} &= \bs{r}dt\\
      d\bs{r} &= -(\nabla V(\bs{\bs{\theta}}) + \gamma \bs{r}) dt + \sigma d\bs{W}
    \end{cases}
    &
\end{align}
where $\bs{\bs{\theta}}, \bs{r} \in \mathbb{R}^d$ are state and momentum variables, $V$ is a potential energy function which in our context is, as originated from cost minimization or Bayesian inference over many data, the sum of many terms $V(\bs{\bs{\theta}}) = \sum_{i=1}^n V_i(\bs{\theta})$, $\gamma$ is a friction coefficient, $\sigma$ is intrinsic noise amplitude, and $\bs{W}$ is a standard $d$-dimensional Wiener process. Under mild assumptions on $V$, Langevin dynamics admits a unique invariant distribution $\pi(\bs{\theta}, \bs{r}) \sim \exp\left(-\frac{1}{T}(V(\bs{\theta}) + \frac{\|\bs{r}\|^2}{2})\right)$
and is in many cases geometric ergodic \citep{pavliotis2014stochastic}. $T$ is the temperature of system determined via the fluctuation dissipation theorem $\sigma^2 = 2 \gamma T$ \citep{kubo1966fluctuation}.

We consider ULD instead of the overdamped version mainly for two reasons: (i) one may think ULD is more complicated, and we'd like to show it is still easy to be paired with EWSG (EWSG can work for many MCMC methods; Appendix \ref{ewsg:sgld} has an overdamped version); (ii) it is believed that ULD has faster convergence than overdamped Langevin for instance in high-dimensions where (local) condition number is likely to be larger (e.g., \cite{cheng2017underdamped,cheng2018sharp,tao2020variational}). Like the overdamped version, numerical integrators for ULD with well captured statistical properties of the continuous process have been extensively investigated (e.g, \cite{roberts1996exponential,bou2010long}), and both the overdamped and underdamped integrators are friendly to derivations that will allow us to obtain explicit expressions of the non-uniform weights. 

\section{Method} \label{main}
\subsection{Motivation: An  Illustration of Non-optimality of Uniform Subsampling}
Uniform subsampling of gradients have long been the dominant way of stochastic gradient approximations mainly because it is intuitive, unbiased and easy to implement.

However, uniform gradient subsampling can introduce large noise, and is sub-optimal even in the family of unbiased stochastic gradient estimator, as the following Theorem \ref{thm:1} will show. One intuition is, consider for example cases where data size $n$ is larger than dimension $d$. In such cases, $\{\nabla V_i\}_{i=1,2,\cdots, n} \subset \mathbb{R}^d$ are linearly dependent and hence it is likely that there exist probability distributions $\{p_i\}_{i=1,2,\cdots,n}$ other than the uniform one such that the gradient estimate is unbiased, however with smaller variance because linearly dependent terms need not to be all used. This is a motivation for us to develop non-uniform subsampling schemes (weights may be $\bs{\theta}$ dependent), although we will not require $n>d$ later.

\begin{theorem}\label{thm:1}
Suppose given $\bs{\theta} \in \mathbb{R}^d$, the errors of SG approximation 
$
\bs{b}_i = n\nabla V_i(\bs{\theta}) - \nabla V(\bs{\theta}) , 1\le i \le n
$
are i.i.d. absolutely continuous random vectors with possibly-$\bs{\theta}$-dependent density $p(\cdot|\bs{\theta})$ and $n > d$. We call $\bs{p} \in \mathbb{R}^n$ a sparse vector if the number of non-zero entries in $\bs{p}$ is no greater than $d + 1$, i.e. $\|\theta\|_0 \le d+1$. Then with probability $1$,  the optimal probability distribution $\bs{p}^\star$ that is unbiased and minimizes the trace of the covariance of $n \nabla V_I(\bs{\theta})$, i.e. $\bs{p}^\star$ which solves the following, is a sparse vector. 
\begin{align} \label{eq:minimizetrace}
    \min_{\bs{p}}  \text{\rm Tr}(\mathbb{E}_{I \sim \bs{p}}[\bs{b}_I \bs{b}_I^T]) \quad 
    \text{s.t. } \mathbb{E}_{I \sim \bs{p}}[\bs{b}_I] = \bs{0},
\end{align}

\end{theorem}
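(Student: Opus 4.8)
The plan is to recognize \eqref{eq:minimizetrace}, together with the implicit probability-simplex constraints $p_i \ge 0$ and $\sum_{i=1}^n p_i = 1$, as a \emph{linear program} in $\bs{p} \in \mathbb{R}^n$. The objective is linear, since $\Tr(\mathbb{E}_{I\sim\bs{p}}[\bs{b}_I\bs{b}_I^\top]) = \sum_{i=1}^n p_i \|\bs{b}_i\|^2 = \bs{q}^\top\bs{p}$ with $\bs{q} := (\|\bs{b}_1\|^2,\dots,\|\bs{b}_n\|^2)^\top$, and the unbiasedness constraint $\sum_i p_i \bs{b}_i = \bs{0}$ is also linear. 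Collecting all equality constraints as $A\bs{p}=\bs{c}$, where $A \in \mathbb{R}^{(d+1)\times n}$ has $i$-th column $(\bs{b}_i^\top,1)^\top$ and $\bs{c}=(\bs{0}^\top,1)^\top$, the feasible set is the polytope $\{\bs{p}: A\bs{p}=\bs{c},\ \bs{p}\ge\bs{0}\}$.

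First I would verify well-posedness. The program is always feasible because the uniform distribution $p_i = 1/n$ satisfies $\tfrac1n\sum_i\bs{b}_i = \tfrac1n\sum_i\big(n\nabla V_i(\bs{\theta}) - \nabla V(\bs{\theta})\big) = \bs{0}$, using $\sum_i \nabla V_i(\bs{\theta}) = \nabla V(\bs{\theta})$; and the feasible set lies inside the probability simplex, hence is compact, so the minimum is attained. The structural heart is then standard linear-programming theory: a linear objective over a nonempty compact polyhedron attains its minimum at a vertex (basic feasible solution), and the support of a basic feasible solution of $\{A\bs{p}=\bs{c},\bs{p}\ge\bs{0}\}$ indexes linearly independent columns of $A$, which lie in $\mathbb{R}^{d+1}$; hence at most $d+1$ entries are nonzero. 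This already yields, deterministically, the \emph{existence} of a sparse optimizer.

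The role of the probability-$1$ qualifier is to promote ``a sparse optimizer exists'' to ``\emph{the} optimizer is sparse,'' i.e.\ to exclude a positive-dimensional optimal face harboring non-sparse minimizers. I would obtain this from genericity of the i.i.d.\ absolutely continuous $\bs{b}_i$, in two steps. (i) \emph{General position:} for each index set $S$ of size $d+1$, the determinant of the corresponding columns of $A$ is a polynomial in $\{\bs{b}_i\}_{i\in S}$ that is not identically zero (one explicit configuration makes it nonzero), so it vanishes with probability $0$; a union bound over the finitely many such $S$ gives, almost surely, $\mathrm{rank}(A)=d+1$ and nondegeneracy of every basic feasible solution. (ii) \emph{No ties:} each vertex is, by Cramer's rule, a rational function of the $\bs{b}_i$ determined by its basis, so for any two distinct vertices the equal-value event $\bs{q}^\top(\bs{p}^1-\bs{p}^2)=0$ is a nontrivial algebraic condition on the $\bs{b}_i$ and thus has probability $0$; a union bound over the finitely many vertex pairs forces the optimum to be a single vertex almost surely, and a bounded polytope with a unique optimal vertex has no higher-dimensional optimal face.

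I expect the main obstacle to be step (ii): one must confirm the equal-value condition is a \emph{nontrivial} relation in the $\bs{b}_i$ rather than an identity accidentally forced by the coupling between the cost vector $\bs{q}$ and the constraint matrix $A$, which share the same $\bs{b}_i$. The cleanest route is likely to fix a candidate basis and perturb a single $\bs{b}_i$ along a generic direction, exhibiting that the induced change in $\bs{q}^\top\bs{p}$ differs across competing bases; alternatively one can invoke that for a generic cost an LP over a fixed polytope has a unique optimum, and argue this genericity survives the coupling. Everything else---feasibility, attainment, and the vertex-support count---is routine once the linear-program reformulation is in place.
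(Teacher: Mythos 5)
Your reformulation as a linear program, the compactness/feasibility argument, and the deterministic conclusion that the minimum is attained at a basic feasible solution with at most $d+1$ nonzero entries are all correct (the paper sets up the same LP, so the skeleton is shared). But the theorem, as both you and the paper read it, requires ruling out \emph{dense} minimizers almost surely, and the step you yourself flag as the main obstacle---that two distinct vertices of the random polytope almost surely do not tie in objective value, even though the cost vector $(\|\bs{b}_1\|^2,\dots,\|\bs{b}_n\|^2)$ and the constraint matrix are built from the same random vectors---is precisely the part you have not supplied. As written, this is the gap: without it you only get \emph{existence} of a sparse optimizer, which holds deterministically and is strictly weaker than the claim. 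The gap is fillable along the lines you sketch: for each pair of distinct bases, the cost difference is a rational function of the $\bs{b}_i$'s; its numerator is not the zero polynomial (e.g., rescale one $\bs{b}_{i^*}$ with $i^*$ in one basis but not the other---the first basis's cost blows up while the second's is unchanged), so each tie event is a Lebesgue-null set, and a union bound over the finitely many pairs of bases finishes. But this must actually be carried out.

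The paper's proof avoids the tie-breaking question entirely by working with KKT conditions instead of vertices. Since the problem is a convex (linear) program, every minimizer satisfies stationarity and complementary slackness, which together force, for every index $j$ in the support of a minimizer, the equation $\|\bs{b}_j\|^2 - \bs{\lambda}^T\bs{b}_j - \mu = 0$ for common multipliers $(\bs{\lambda},\mu)$; i.e., all support points lie on one hypersphere. A dense minimizer would thus put at least $d+2$ of the i.i.d.\ absolutely continuous vectors $\bs{b}_j$ on a sphere already determined by $d+1$ of them, an event of probability zero by independence. This is exactly where the quadratic structure of the cost pays off: the cost--constraint coupling that complicates your genericity argument is what produces the sphere condition, reducing the probabilistic step to one line. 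Your route buys a cleaner deterministic statement; the paper's buys a much shorter proof of the almost-sure one.
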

Despite the sparsity of $\bs{p}^\star$, which seemingly suggests one only needs at most $d+1$ gradient terms per iteration when using SG methods, it is not practical because $\bs{p}^\star$ requires solving the linear programming problem \eqref{eq:minimizetrace} in Theorem \ref{thm:1}, for which an entire data pass is needed. Nevertheless, this result motivates us to seek alternatives to uniform SG. For example, the EWSG method we will develop will have reduced local variance with high probability, and at the same time remain efficiently implementable without having to use all data per parameter update; it can be biased though, but a global error analysis (Thm.\ref{thm:mse}) will show that trading bias for variance can still be worthy.

\subsection{Exponentially Weighted Stochastic Gradient}
\label{sec:method:EWSG}
MCMC methods are characterized by their transition kernels. In traditional SG-MCMC methods, uniform SG is used, which is independent of the intrinsic randomness of MCMC methods (e.g. diffusion in ULD), as a result, the transition kernel of SG-MCMC is quite different from that with full gradient. Therefore, it is natural to ask --- is it possible to couple the two originally independent randomness, so that the transition kernel of the SG-MCMC better matches that of the batch-gradient-MCMC, and the sampling accuracy is thus improved?

Here is one way to do so. Consider Euler-Maruyama (EM) discretization\footnote{EM is not the most accurate or robust discretization, see e.g., \citep{roberts1996exponential,bou2010long}, but since it may still be the most used method, demonstrations here will be based on EM. The same idea of EWSG can easily apply to most other discretizations such as GLA \citep{bou2010long}.} of Eq. \eqref{eq:uld}:
\begin{equation}\label{EM}
    \begin{cases}
      \bs{\theta}_{k+1} &= \bs{\theta}_k + \bs{r}_k h\\
      \bs{r}_{k+1} &= \bs{r}_k -(\nabla V(\bs{\theta}_k) + \gamma \bs{r}_k) h + \sigma \sqrt{h} \bs{\xi}_{k+1}
    \end{cases}
\end{equation}
where $h$ is step size and $\bs{\xi}_{k+1}$'s are i.i.d. $d$-dimensional standard Gaussian random variables. Denote the transition kernel of EM discretization with full gradient by $P^{EM}(\bs{\theta}_{k+1}, \bs{r}_{k+1} | \bs{\theta}_k, \bs{r}_k)$. 

Then consider a SG version: replace $\nabla V(\bs{\theta}_k)$ by a weighted SG $n \nabla V_{I_k}(\bs{\theta}_k)$, where $I_k$ is the index chosen to approximate full gradient and has p.m.f. $\mathbb{P}(I_k=i | \bs{\theta}_k, \bs{r}_k)=p_i$. Denote the new transition kernel by $\tilde{P}^{EM}(\bs{\theta}_{k+1}, \bs{r}_{k+1} | \bs{\theta}_k, \bs{r}_k)$.

It is not hard to see that
\begin{align}
    & P^{EM}(\bs{\theta}_{k+1}, \bs{r}_{k+1} | \bs{\theta}_k, \bs{r}_k) \nonumber\\
    =&  1_{\{\bs{\theta}_k + \bs{r}_k h\}}(\bs{\theta}_{k+1}) \, \frac{1}{Z} \exp\left( -\frac{\| \bs{r}_{k+1} - \bs{r}_k + (\nabla V(\bs{\theta}_k) + \gamma \bs{r}_k)h\|^2}{2\sigma^2 h}\right) \nonumber\\
    =& 1_{\{\bs{\theta}_k + \bs{r}_k h\}}(\bs{\theta}_{k+1}) \, \frac{1}{Z} \exp\left( -\frac{\| \bs{x} + \sum_{i=1}^n \bs{a}_i\|^2}{2}\right) \nonumber 
\end{align}
and
\[
    \tilde{P}^{EM}(\bs{\theta}_{k+1}, \bs{r}_{k+1} | \bs{\theta}_k, \bs{r}_k) 
    = 1_{\{\bs{\theta}_k + \bs{r}_k h\}}(\bs{\theta}_{k+1}) \, \frac{1}{\tilde{Z}} \sum_{j=1}^n p_i  \exp\left( -\frac{\|\bs{x} + n\bs{a}_i\|^2}{2}\right),
\]
where $Z$ and $\tilde{Z}$ are normalization constants, $ \bs{x} \triangleq \frac{\bs{r}_{k+1} - \bs{r}_k + h \gamma \bs{r}_k}{\sigma \sqrt{h}}$ and  $\bs{a}_i \triangleq \frac{\sqrt{h} \nabla V_i (\bs{\theta}_k)}{\sigma}$. From these two expressions, one can see that if we could choose
\begin{equation}
    p_i \propto \exp\left( -\frac{\| \bs{x} + \sum_{i=1}^n \bs{a}_i\|^2}{2} + \frac{\|\bs{x} + n\bs{a}_i\|^2}{2} \right),
    \label{eq:transition-kernel}
\end{equation}
we would have $P^{EM}(\bs{\theta}_{k+1}, \bs{r}_{k+1} | \bs{\theta}_k, \bs{r}_k) = \tilde{P}^{EM}(\bs{\theta}_{k+1}, \bs{r}_{k+1} | \bs{\theta}_k, \bs{r}_k)$ and  be able to recover the transition kernel of full gradient with that of stochastic gradient. However, Eq.\eqref{eq:transition-kernel} is only formal and infeasible, because $\bs{x}$ is dependent on the future state variable $\bs{r}_{k+1}$ which we do not know. Therefore, to obtain a practically implementable algorithm, we will fix $\bs{x}$ as a hyper-parameter and hope that the approximation is good enough so that we still have $P^{EM}(\bs{\theta}_{k+1}, \bs{r}_{k+1} | \bs{\theta}_k, \bs{r}_k) \approx \tilde{P}^{EM}(\bs{\theta}_{k+1}, \bs{r}_{k+1} | \bs{\theta}_k, \bs{r}_k)$.


We refer to the choice of $p_i$ in eq.\ref{eq:transition-kernel} Exponentially Weighted Stochastic Gradient (\textbf{EWSG}). Unlike Thm.\ref{thm:1}, EWSG does not require $n > d$ to work. Note the idea of designing non-uniform weights of SG-MCMC to match the transition kernel of full gradient can be suitably applied to a wide class of gradient-based MCMC methods; for example, Sec. \ref{ewsg:sgld} shows how EWSG can be applied to Langevin Monte Carlo (overdamped Langevin), and Sec. \ref{variance reduction} shows how it can be combined with VR. Therefore, EWSG complements a wide range of SG-MCMC methods.

Since the weight choice of EWSG is motivated by approximating the transition kernel of a full-gradient MCMC method, we anticipate EWSG to be statistically more accurate than a uniformly-subsampled stochastic gradient estimator. As a special but commonly interested accuracy measure, the smaller variance of EWSG is shown with high probability\footnote{`With high probability' but not almost surely because Theorem \ref{thm:local} in fact suits a class of weights, which includes but is not limited to EWSG.}:
\begin{theorem}\label{thm:local}
Assume $\{\nabla V_i(\bs{\theta})\}_{i=1,2,\cdots,n}$ are i.i.d random vectors and  $|\nabla V_i(\bs{\theta})|\leq R$ for some constant $R$ almost surely. Denote the uniform distribution over $[n]$ by $\bs{p}^U$, the exponentially weighted distribution by $\bs{p}^E$, and let $\Delta = \Tr[\text{\rm cov}_{I\sim\bs{p}^E}[n\nabla V_I(\bs{\theta})|\bs{\theta}] - \text{\rm cov}_{I\sim\bs{p}^U}[n\nabla V_I(\bs{\theta})|\bs{\theta}]]$. If $\bs{x}=\mathcal{O}(\sqrt{h})$, we have 
$\mathbb{E} [\Delta] < 0$, and $\exists C>0$ independent of $n$ or $h$ such that $\forall \epsilon > 0$,
\[
\mathbb{P}(|\Delta - \mathbb{E}[\Delta]| \ge \epsilon) \le 2\exp\left(-\frac{\epsilon^2}{n C h^2}\right).
\]
\end{theorem}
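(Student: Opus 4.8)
The plan is to treat $\Delta$ as a smooth function of the $n$ i.i.d.\ bounded gradients $\bs g_i := \nabla V_i(\bs\theta)$ and to exploit that the EWSG weights are a small (order-$\sqrt h$) perturbation of the uniform weights. First I would make the weights explicit: since the factor $\exp(-\|\bs x + \sum_j \bs a_j\|^2/2)$ in \eqref{eq:transition-kernel} is constant in $i$, it cancels in the normalization, leaving $p_i^E \propto \exp(\|\bs x + n\bs a_i\|^2/2)$ with $\bs a_i = \sqrt h\,\bs g_i/\sigma$. Writing $w_i := \|\bs x + n\bs a_i\|^2/2$ and using $\|\bs g_i\|\le R$ together with $\bs x = \mathcal O(\sqrt h)$, each $w_i$ is small, so a softmax (Taylor) expansion gives $p_i^E = \tfrac1n\big(1 + (w_i - \bar w) + \mathcal O(w^2)\big)$ with $\bar w = \tfrac1n\sum_j w_j$. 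I would also record the elementary identity $\Tr\,\text{cov}_{I\sim\bs p}[n\bs g_I] = n^2\big(\sum_i p_i\|\bs g_i\|^2 - \|\sum_i p_i \bs g_i\|^2\big)$, so that $\Delta$ is the difference of this functional evaluated at $\bs p^E$ and at $\bs p^U$.

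Next, for the mean I would substitute the perturbative form of $p_i^E$ into the identity above and subtract the uniform value, so that the $\mathcal O(1)$ terms cancel and $\Delta$ is expressed through the deviations $w_i - \bar w$ to leading order. Taking $\mathbb E$ over the i.i.d.\ gradients and using boundedness to control the remainder, the claim $\mathbb E[\Delta]<0$ reduces to checking the sign of a single leading-order coefficient — intuitively negative because exponentially tilting the index toward gradients aligned with the kernel's drift direction concentrates $n\bs g_I$ around the batch gradient and shrinks its spread; this is precisely the kernel-matching mechanism specialized to second moments.

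For the concentration bound I would use bounded differences (McDiarmid). Viewing $\Delta = \Delta(\bs g_1,\dots,\bs g_n)$ with independent arguments, I would show that replacing a single $\bs g_i$ by any other admissible vector changes $\Delta$ by at most $c = \mathcal O(h)$, uniformly in $n$. The essential point is that the nominally large prefactors (the $n^2$ from $n\bs g_I$ and the $n$-dependence inside $w_i$) must cancel against the $1/n$ scaling of $p^E-p^U$ and against the matching cancellation between the two covariance traces, so that a single-coordinate change propagates to $\Delta$ only at order $h$. Granting $c = \mathcal O(h)$, McDiarmid gives $\mathbb P(|\Delta - \mathbb E\Delta|\ge\epsilon)\le 2\exp(-2\epsilon^2/\sum_i c_i^2) \le 2\exp(-\epsilon^2/(nCh^2))$ after absorbing the $h$-independent constant into $C$, which is exactly the stated form.

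The main obstacle is the uniform-in-$n$ bookkeeping in both halves: establishing that the per-coordinate sensitivity of $\Delta$ is genuinely $\mathcal O(h)$ with a constant free of $n$, and pinning down the sign of the leading-order term of $\mathbb E[\Delta]$. Both rest on the same smallness forced by $\bs x = \mathcal O(\sqrt h)$ and $\|\nabla V_i\|\le R$, and both require uniform control of the second-order softmax remainder so that the leading cancellations are not spoiled. I would therefore spend most of the effort proving a clean Lipschitz estimate for $\bs p^E$ in each $\bs g_i$ with the correct $n$-power, since that single lemma drives both the bounded-difference constant and the remainder bound in the mean computation.
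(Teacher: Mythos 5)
Your second half (the concentration bound) is essentially the paper's own argument: the paper also views $\Delta$ as a function of the $n$ i.i.d.\ bounded vectors, proves a per-coordinate sensitivity $C_i=\mathcal{O}(h)$ uniformly in $n$ (via the estimate $\partial\widetilde{p}_j/\partial\bs{b}_1=\mathcal{O}(h)\,\widetilde{p}_j$ for the unnormalized weights, followed by the cancellation bookkeeping you anticipate), and then applies McDiarmid to get $2\exp\bigl(-2\epsilon^2/(n\,\mathcal{O}(h^2))\bigr)$. So that part of your plan is sound and matches the paper.

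The genuine gap is in the mean. You reduce $\mathbb{E}[\Delta]<0$ to ``checking the sign of a single leading-order coefficient'' and declare it ``intuitively negative,'' but that sign check is the whole content of the claim, and the first-order term of your softmax expansion is \emph{not} sign-definite. Concretely, with $\bs{b}_i=n\nabla V_i$, $w_i=\tfrac{h}{2\sigma^2}\|\bs{y}+\bs{b}_i\|^2$ (note the perturbation is $\mathcal{O}(h)$, not $\mathcal{O}(\sqrt{h})$), and the pairwise identity $\Tr \mathrm{cov}_{I\sim\bs{p}}[\bs{b}_I]=\sum_{i<j}\|\bs{b}_i-\bs{b}_j\|^2p_ip_j$, your expansion $p_i^E=\tfrac1n(1+(w_i-\bar w)+\cdots)$ gives the leading contribution
\begin{equation*}
\Delta_1=\frac{1}{n^2}\sum_{i<j}\|\bs{b}_i-\bs{b}_j\|^2\,(w_i+w_j-2\bar w),
\qquad
\mathbb{E}[\Delta_1]\propto \mathrm{Cov}\bigl(\|\bs{b}_1-\bs{b}_2\|^2,\ w_1\bigr)\ \ (n\ge 3),
\end{equation*}
and this covariance is generically nonzero and can be strictly \emph{positive} (e.g.\ scalar $b_i\in\{0,1\}$ with $\mathbb{P}(b_i=1)=p<1/2$ and $\bs{y}=0$ gives $p(1-p)(1-2p)>0$), so the leading term of your expansion does not deliver negativity. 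The paper's mean argument is structurally different and non-perturbative: using the same pairwise identity, it writes $\Delta=\sum_{i<j}\|\bs{b}_i-\bs{b}_j\|^2(p_ip_j-\tfrac{1}{n^2})$, evaluates the expectation with the weight vector treated as fixed (decoupled from the data, i.e.\ $\mathbb{E}[\Delta]=\mathbb{E}\|\bs{b}_1-\bs{b}_2\|^2\bigl(\sum_{i<j}p_ip_j-\tfrac{n-1}{2n}\bigr)$ --- this decoupling is exactly why the result is stated for a ``class of weights''), and then uses the exact simplex fact $\sum_{i<j}p_ip_j-\tfrac{n-1}{2n}=-\tfrac12\sum_i(p_i-\tfrac1n)^2\le 0$, strict unless $\bs{p}$ is uniform. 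In other words, negativity there is a \emph{second-order} effect in $\bs{p}-\bs{p}^U$, obtained after the first-order coupling between weights and data has been removed; your plan, which keeps that coupling and stops at first order, stalls at precisely the point you deferred to intuition.
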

It is not surprising that less non-intrinsic local variance correlates with better global statistical accuracy, which will be made explicit and rigorous in the next subsection.

\subsection{Non-asymptotic Error Bound} \label{sec:theory}

We now establish a non-asymptotic global sampling error bound (in mean square distance between arbitrary test observables) of SG underdamped Langevin algorithms (the bound applies to both EWSG and other methods e.g., SGHMC). The full proof is deferred to the Appendix \ref{sec:globalErrorProof}, but the main tool we will be using is the Poisson equation machinery \citep{mattingly2010convergence,vollmer2016exploration,chen2015convergence}. A brief overview is the following:

Let $\bs{X} = \left( \begin{matrix} \bs{\theta} \\ \bs{r} \end{matrix} \right)$. The generator $\mathcal{L}$ of diffusion process \eqref{eq:uld} is 
\begin{align*}
    \mathcal{L} (f(\bs{X}_t)) =& \lim_{h \to 0} \frac{\mathbb{E}[f(\bs{X}_{t+h})] - \mathbb{E}[f(\bs{X}_t)]}{h} \\
    =& \bs{r}^T \nabla_{\bs{\theta}}f  - (\gamma \bs{r} + \nabla V(\bs{\theta}))^T \nabla_{\bs{r}}f + \gamma \Delta_{\bs{r}}f.
\end{align*}
 
Given a test function $\phi(\bs{x})$, its posterior average is $\bar{\phi} = \int \phi(\bs{x}) \pi(\bs{x}) d\bs{x}$, approximated by its time average of samples $\widehat{\phi}_K = \frac{1}{K} \sum_{k=1}^K \phi(\bs{X}^E_k)$, where $\bs{X}_k^{E}$ is the sample path given by EM integrator. Then the Poisson equation $\mathcal{L}\psi = \phi - \bar{\phi}$ can be a useful tool for the weak convergence analysis of SG-MCMC.
The solution $\psi$ characterizes the difference between $\phi$ and its posterior average $\bar{\phi}$.

Our main theoretical result is the following:
\begin{theorem} \label{thm:mse}
Assume $\mathbb{E}[\|\nabla V_i(\bs{\theta}^E_k)\|^l] < M_1, \mathbb{E}[\|\bs{r}^E_k\|^l] < M_2, \forall l = 1, 2, \cdots, 12, \forall i=1,2,\cdots,n$ and $\forall k \ge 0$. Assume the solution to the Poisson equation, $\psi$, exists, and its derivatives up to 3rd-order are uniformly bounded $ \|D^l \psi\|_\infty < M_3, l=0,1,2,3$. Then there exist constants $C_1, C_2, C_3>0$ depending on $M_1, M_2, M_3$, such that 
\begin{equation} \label{eq:mse}
    \mathbb{E}\big( \widehat{\phi}_K - \bar{\phi} \big)^2 \leq
    C_1 \frac{1}{T} + C_2 \frac{h}{T} \frac{\sum_{k=0}^{K-1} \mathbb{E}[\Tr[\mbox{\rm cov}(n\nabla V_{I_k} | \mathcal{F}_k)]]}{K} + C_3 h^2
\end{equation}
where $T = Kh$ is the corresponding time in the underlying continuous dynamics, $I_k$ is the index of the datum used to estimate the gradient at $k$-th iteration, and $\mbox{cov}(n\nabla V_{I_k} | \mathcal{F}_k)$ is the covariance of stochastic gradient at $k$-th iteration conditioned on the current sigma algebra $\mathcal{F}_k$ in the filtration.
\end{theorem}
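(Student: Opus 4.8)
The plan is to use the Poisson-equation machinery to convert the ergodic-average error into a sum of generator actions, and then to peel off the discretization, the intrinsic noise, and the stochastic-gradient effects through a telescoping-plus-martingale decomposition. Concretely, let $\psi$ solve $\mathcal{L}\psi = \phi - \bar{\phi}$, so that $\widehat{\phi}_K - \bar{\phi} = \frac{1}{K}\sum_{k=1}^K \mathcal{L}\psi(\bs{X}^E_k)$. The first step is to replace each $\mathcal{L}\psi(\bs{X}^E_k)$ by a one-step increment of $\psi$ along the SG-EM chain. Taylor-expanding $\psi$ to third order around $\bs{X}^E_k$ and taking the conditional expectation $\mathbb{E}[\,\cdot\mid\mathcal{F}_k]$ (recall $\mathcal{F}_k$ contains $\bs{X}^E_k$ but neither $I_k$ nor $\bs{\xi}_{k+1}$), I would write $\frac{1}{h}\big(\mathbb{E}[\psi(\bs{X}^E_{k+1})\mid\mathcal{F}_k] - \psi(\bs{X}^E_k)\big) = \mathcal{L}\psi(\bs{X}^E_k) + \Delta_k$, where $\Delta_k$ collects (i) the local Euler--Maruyama discretization error and (ii) the drift discrepancy $-\bs{b}_k^\top\nabla_{\bs{r}}\psi$ caused by replacing $\nabla V$ by the conditional-mean stochastic gradient, with $\bs{b}_k := \mathbb{E}[n\nabla V_{I_k}\mid\mathcal{F}_k] - \nabla V(\bs{\theta}^E_k)$.

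Second, I would rearrange and telescope: writing $\mathbb{E}[\psi(\bs{X}^E_{k+1})\mid\mathcal{F}_k] = \psi(\bs{X}^E_{k+1}) - M_{k+1}$ with the martingale difference $M_{k+1} := \psi(\bs{X}^E_{k+1}) - \mathbb{E}[\psi(\bs{X}^E_{k+1})\mid\mathcal{F}_k]$, the sum $\sum_k\big(\mathbb{E}[\psi(\bs{X}^E_{k+1})\mid\mathcal{F}_k] - \psi(\bs{X}^E_k)\big)$ splits into the telescoping boundary term and $-\sum_k M_{k+1}$, yielding
\[
\widehat{\phi}_K - \bar{\phi} = \frac{\psi(\bs{X}^E_{K+1})-\psi(\bs{X}^E_1)}{Kh} - \frac{1}{Kh}\sum_{k=1}^K M_{k+1} - \frac{1}{K}\sum_{k=1}^K\Delta_k.
\]
Squaring and applying $(a+b+c)^2\le 3(a^2+b^2+c^2)$ reduces the task to bounding three expectations.

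Third, I would bound each piece using the regularity assumptions. The boundary term is controlled by $\|\psi\|_\infty\le M_3$, giving $\mathcal{O}(1/(Kh)^2)=\mathcal{O}(1/T^2)$, no larger than the $C_1/T$ term. For the martingale sum, orthogonality of the $M_{k+1}$ gives $\mathbb{E}[(\sum_k M_{k+1})^2]=\sum_k\mathbb{E}[M_{k+1}^2]$; the crucial computation is expanding $\mathbb{E}[M_{k+1}^2\mid\mathcal{F}_k]$, whose leading contributions come from two \emph{independent} sources of one-step randomness --- the Gaussian $\bs{\xi}_{k+1}$, producing a term of order $h$ (hence $\sum_k h/(Kh)^2=\mathcal{O}(1/T)$, the intrinsic $C_1/T$ contribution), and the random index $I_k$, producing a term of order $h^2(\nabla_{\bs{r}}\psi)^\top\mathrm{cov}(n\nabla V_{I_k}\mid\mathcal{F}_k)(\nabla_{\bs{r}}\psi)$. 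Since $\bs{\xi}_{k+1}$ and $I_k$ are independent there is no cross term and the two variances simply add; bounding the quadratic form by $M_3^2\,\Tr[\mathrm{cov}(n\nabla V_{I_k}\mid\mathcal{F}_k)]$ and summing gives exactly $\frac{\sum_k h^2\,\mathbb{E}\Tr[\mathrm{cov}_k]}{(Kh)^2}=\frac{h}{T}\frac{\sum_k\mathbb{E}\Tr[\mathrm{cov}_k]}{K}$, the middle term. Finally the discrepancy sum splits as $\Delta_k=\Delta_k^{\mathrm{disc}}+\Delta_k^{\mathrm{bias}}$, where $\Delta_k^{\mathrm{disc}}=\mathcal{O}(h)$ per step, so $(\frac{1}{K}\sum_k\Delta_k^{\mathrm{disc}})^2=\mathcal{O}(h^2)$ after using the moment bounds $M_1,M_2$ on the remainder, producing $C_3 h^2$.

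The main obstacle, and where method-specific structure enters, is the bias contribution $\Delta_k^{\mathrm{bias}}=-\bs{b}_k^\top\nabla_{\bs{r}}\psi$: it vanishes identically for uniform SG (so the bound is immediate for SGHMC), but for EWSG it is nonzero. I expect controlling it to be the delicate step --- one must argue, using the exponential-weight construction together with the regime $\bs{x}=\mathcal{O}(\sqrt{h})$ of Theorem \ref{thm:local}, that $\bs{b}_k$ is small enough that $(\frac{1}{K}\sum_k\Delta_k^{\mathrm{bias}})^2$ is dominated by the $C_3 h^2$ term rather than surviving as a separate non-vanishing contribution. A secondary but unavoidable technical burden is that squaring the decomposition and expanding the third-order Taylor remainders produces high-degree polynomials in the increments $\bs{r}^E_k$ and $n\nabla V_{I_k}$; repeated Cauchy--Schwarz doubling of the polynomial degree is precisely why the hypotheses require finite moments up to order $12$, and verifying that every remainder stays bounded by the assumed constants $M_1,M_2,M_3$ is the bulk of the routine work.
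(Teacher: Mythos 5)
Your proposal is correct and arrives at the same three-term bound, but through a genuinely different decomposition than the paper's. The paper Taylor-expands $\psi$ using the \emph{stochastic-gradient} generator $\mathcal{L}_k$, so the index randomness is kept inside the drift-discrepancy sum $\frac{1}{K}\sum_k \Delta\mathcal{L}_k\psi_k$ with $\Delta\mathcal{L}_k\psi_k = \langle \nabla V - n\nabla V_{I_k}, \nabla_{\bs{r}}\psi_k\rangle$, while its martingale terms involve only the Gaussian noise; for EWSG the state-dependent weights destroy the martingale property of that sum, forcing a separate cross-term argument via the conditional-bias estimate $\mathbb{E}[\Delta\mathcal{L}_j\psi_j \mid \mathcal{F}_j] = \mathcal{O}(h)$, after which the covariance-trace term emerges from the diagonal via a bias-variance split. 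You instead condition on $\mathcal{F}_k$ \emph{before} decomposing, so both sources of one-step randomness (Gaussian and index) are absorbed into the martingale differences $M_{k+1}$, which stay orthogonal regardless of state-dependence of the weights; the covariance trace then appears in the quadratic variation $\mathbb{E}[M_{k+1}^2\mid\mathcal{F}_k]$, and only the conditional bias $\bs{b}_k$ survives in $\Delta_k$. This gives a uniform treatment of SGHMC and EWSG with no cross-term argument, which is arguably cleaner. The one step you flag but leave open---that $\|\bs{b}_k\| = \mathcal{O}(h)$ for EWSG so its contribution is swallowed by $C_3 h^2$---is precisely the estimate the paper imports from the proof of Theorem \ref{thm:local} (the exponential weights deviate from uniform by $\mathcal{O}(h)$ when $\bs{x} = \mathcal{O}(\sqrt{h})$); with that supplied, your argument closes. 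One small point worth recording: the fluctuation of the second-order Gaussian term also enters $\mathbb{E}[M_{k+1}^2]$ at order $h^2$ per step, contributing $\mathcal{O}(h/T)$ overall, which is absorbed into $C_1/T$ exactly as the paper's $M_{1,K}$ term is.
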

\begin{remark} (\emph{interpreting the three terms in the bound})
    Unlike a typical VR method which aims at finding unbiased gradient estimator with reduced variance, EWSG aims at bringing the entire density closer to that of a batch-gradient MCMC. As a consequence, its practical implementation may correspond to SG that has reduced variance but a small bias too. Eq.\eqref{eq:mse} quantifies this bias-variance trade-off. How the extrinsic local variance and bias contribute to the global error is respectively reflected in the 2nd and 3rd terms, although the 3rd term also contains a contribution from the numerical discretization error. With or without bias, the 3rd term remains $\mathcal{O}(h^2)$ because of this discretization error. However, for moderate $T$, the 2nd term is generally larger than the 3rd due to its lower order in $h$, which means reducing local variance can improve sampling accuracy even if at the cost of introducing a small bias. Since EWSG has a smaller local variance than uniform SG (Thm.\ref{thm:local}, as a special case of improved overall statistical accuracy), its global performance is also favorable. The 1st term is for the convergence of the continuous process (eq.\ref{eq:uld} in this case).
\end{remark}
\begin{remark} (\emph{innovation and relation with the literature})
    Thm.\ref{thm:mse}, to the best of our knowledge, is the first that incorporates the effects of both local bias and local variance of a SG approximation (previous SOTA bounds are only for unbiased SG). It still works when restricting to unbiased SG, and in this case our bound reduces to SOTA \cite{vollmer2016exploration, chen2015convergence}. Some more facts include: \cite{mattingly2010convergence}, being the seminal work from which we adapt our proof, only discussed the batch gradient case, whereas our theory has additional (non-uniform) SG. \cite{vollmer2016exploration,chen2015convergence} studied the effect of SG, but the SG considered there did not use state-dependent weights, which would destroy several martingales used in their proofs. Unlike in \cite{mattingly2010convergence} but like in \cite{vollmer2016exploration,chen2015convergence}, our state space is not the compact torus but $\mathbb{R}^d$. Also, the time average $\widehat{\phi}_K$, to which our results apply, is a commonly used estimator, particularly when using a long time trajectory of Markov chain for sampling. However, if one is interested in an alternative of using an ensemble for sampling, techniques in \cite{cheng2017underdamped, dalalyan2017user} might be useful to further bound difference between the law of $\bs{X}_k$ and the target distribution.
\end{remark}

\subsection{Practical Implementation}
In EWSG, the probability of each gradient term is
$
    p_i = \widehat{Z}^{-1} \exp\left\{-\frac{\|\bs{x} + \sum_{j=1}^n \bs{a}_j\|^2}{2} + \frac{\|\bs{x} + n \bs{a}_i\|^2}{2}\right\}
$.
Although the term $\|\bs{x} + \sum_{j=1}^n \bs{a}_j\|^2/2$ depends on the full data set, it is shared by all $p_i$'s and can be absorbed into the normalization constant $\hat{Z}^{-1}$ (we still included it explicitly due to the needs in proofs); unique to each $p_i$ is only the term $\|\bs{x} + n \bs{a}_i\|^2/2$. This motivates us to run a Metropolis-Hastings chain over the possible indices $i\in \{1,2\,\cdots,n\}$: at each inner-loop step, a proposal of index $j$ is uniformly drawn, and then accepted with probability $P(i \to j ) =$
\begin{equation} \label{eq:transition}
    \min\left\{1,  \exp\left(\frac{\|\bs{x} + n \bs{a}_j\|^2}{2} - \frac{\|\bs{x} + n \bs{a}_i\|^2}{2}\right)\right\};
\end{equation}
if accepted, the current index $i$ is replaced by $j$. When the chain converges, the index will follow the distribution given by $p_i$. The advantage is, we avoid passing through the entire data sets to compute each $p_i$, but the index will still approximately sample from the non-uniform distribution.

In practice, we often perform only $M=1$ step of the Metropolis index chain per integration step, especially if $h$ is not too large. The rationale is, when $h$ is small, the outer iteration evolves slower than the index chain, and as $\theta$ does not change much in, say, $N$ outer steps, effectively $N\times M$ inner steps take place on almost the same index chain, which makes the index r.v. equilibrate better. Regarding the larger $h$ case (where the efficacy of local variance reduction via non-uniform subsampling is more pronounced; see e.g., Thm. \ref{thm:mse}), $M=1$ may no longer be optimal, but improved sampling with large $h$ and $M=1$ is still clearly observed in various experiments (Sec.  \ref{experiments}). 

Another hyper-parameter is $\bs{x}$ (see earlier discussion in Sec.\ref{sec:method:EWSG}). Our heuristic recommendation is $\bs{x}=\frac{\sqrt{h} \gamma \bs{r}_k}{\sigma}$.
The rationale is, as long as $r_{k+1}-r_k$'s density is maximized at 0 (which will be the case at least for large $k$ as $r_k$ will converge to a Gaussian), this choice of $\bs{x}$ is a maximum likelihood estimator. This approximation appeared to be a good one in all our experiments with medium $h$ and $M=1$. 


Sec. \ref{subsec:simple_gaussian} further investigates hyperparameter selection empirically and shows that approximations due to $M$ and $\bs{x}$ is not detrimental to our non-asymptotic theory in Sec. \ref{sec:theory}.

Practical EWSG is summarized in Algorithm \ref{alg:EWSG}. For simplicity of notation, we restrict the description to mini batch size $b = 1$, but an extension to $b > 1$ is straightforward. See Sec.   \ref{ewsg:minibatch} in appendix. Practical EWSG has reduced variance but does not completely eliminate the extrinsic noise created by SG due to its approximations. A small bias was also created by these approximations, but its effect is dominated by the variance effect (see Sec.  \ref{sec:theory}). In practice, if needed, one can combine EWSG with other VR technique to further improve accuracy. Appendix \ref{variance reduction} describes how EWSG can be combined with SVRG.

\setlength{\textfloatsep}{5pt}
\begin{algorithm}[tb]
   \caption{EWSG}\label{alg:EWSG}
\begin{algorithmic}
   \STATE {\bfseries Input:} \{the number of data terms $n$, gradient functions $V_i(\cdot), i=1,2,\cdots,n$, step size $h$, the number of data passes $K$, index chain length $M$, friction and noise coefficients $\gamma$ and  $\sigma$\}
   \STATE Initialize $\bs{\theta}_0, \bs{r}_0$ (arbitrarily, or use an informed guess)
   \FOR{$k = 0,1,\cdots, \lceil \frac{K n}{ M + 1}  \rceil$}
   \STATE $i \gets$ uniformly sampled from ${1,\cdots,n}$, \, compute and store $n\nabla V_i(\bs{\theta}_k)$
   \STATE $I \gets i$
        \FOR{$m = 1,2,\cdots, M$}
         \STATE $j \gets$ uniformly sampled from ${1,\cdots,n}$, \, compute and store $n\nabla V_j(\bs{\theta}_k)$
         \STATE $I \gets j$ with probability in Equation \ref{eq:transition}
        \ENDFOR
        \STATE Evaluate $\tilde{V}(\bs{\theta}_k) = n V_I(\bs{\theta}_k)$
        \STATE Update $(\bs{\theta}_{k+1}, \bs{r}_{k+1}) \gets (\bs{\theta}_k, \bs{r}_k)$ via one step of Euler-Maruyama integration using $\tilde{V}(\bs{\theta}_k)$
   \ENDFOR
\end{algorithmic}
\end{algorithm}

\section{Experiments} \label{experiments}
In this section, the proposed EWSG algorithm will be compared with SGHMC \citep{chen2014stochastic}, SGLD  \citep{welling2011bayesian}, as well as several more recent popular approaches, including  FlyMC \citep{maclaurin2015firefly}, pSGLD \citep{li2016preconditioned},  CP-SGHMC \citep{fu2017cpsg} (a method closest to the goal of applying IS idea to SG-based sampling) and SVRG-LD \citep{dubey2016variance} (overdamped Langevin improved by VR). Sec.   \ref{subsec:simple_gaussian} is a detailed empirical study of EWSG on simple models, with comparison and implication of two important hyper-parameters $M$ and $\bs{x}$, and verification of the non-asymptotic theory (Theorem \ref{thm:mse}). Sec.   \ref{subsec:blr} demonstrates  EWSG for Bayesian logistic regression on a large-scale data set. Sec.   \ref{subsec:bnn} is a Bayesian Neural Network (BNN) example. It serves only as a high-dimensional, multi-modal test case, and we do not intend to compare Bayesian and non-Bayesian neural nets. As FlyMC requires a tight lower bound of likelihood, known for only a few cases, it will only be compared against in Sec.    \ref{subsec:blr} where such a bound is obtainable. CP-SGHMC requires heavy tuning on the number of clusters which differs across data sets/algorithms, so it will only be included in the BNN example, for which the authors empirically found a good hyper parameter for MNIST \citep{fu2017cpsg}.
SVRG-LD is only compared to in Sec.   \ref{subsec:simple_gaussian}, because SG-MCMC methods can converge within only one data pass in Sec.   \ref{subsec:blr}, rendering control-variate based VR technique inapplicable, and it was suggested that VR leads to poor results for deep models (e.g., Sec.  \ref{subsec:bnn}) \citep{defazio2019ineffectiveness}.

For fair comparison, all algorithms use constant step sizes and are allowed fixed computation budget, i.e., for $L$ data passes, all algorithms can only call gradient function $nL$ times.  All experiments are conducted on a machine with a 2.20GHz Intel(R) Xeon(R) E5-2630 v4 CPU and an Nvidia GeForce GTX 1080 GPU. If not otherwise mentioned, $\sigma = \sqrt{2\gamma}$ so only $\gamma$ needs specification, the length of the index chain is set $M=1$ for EWSG and the default values of two hyper-parameters required in pSGLD are set $\lambda=10^{-5}$ and $\alpha=0.99$, as suggested in \cite{li2016preconditioned}.

\subsection{Gaussian Examples} \label{subsec:simple_gaussian}
Consider sampling from a simple 2D Gaussian whose potential function is
$$
V(\bs{\theta}) = \sum_{i=1}^n V_i(\bs{\theta}) = \sum_{i=1}^n \frac{1}{2}\|\bs{\theta} - \bs{c}_i\|^2.
$$
We set $n = 50$ and randomize $\bs{c}_i$ from a two-dimensional standard normal $\mathcal{N}(\bs{0}, I_2)$.
Due to the simplicity of $V(\bs{\theta})$, we can write the target density analytically 
and will use KL divergence
$
  \mathrm{KL}(p \| q) = \int p(\bs{\theta}) \log \frac{p(\bs{\theta})}{q(\bs{\theta})}d\bs{\theta}
$
to measure the difference between the target distribution and generated samples. 

\begin{figure*}[!t]
\centering
    \begin{subfigure}{0.30\textwidth}
		\centering
		\includegraphics[width=\textwidth]{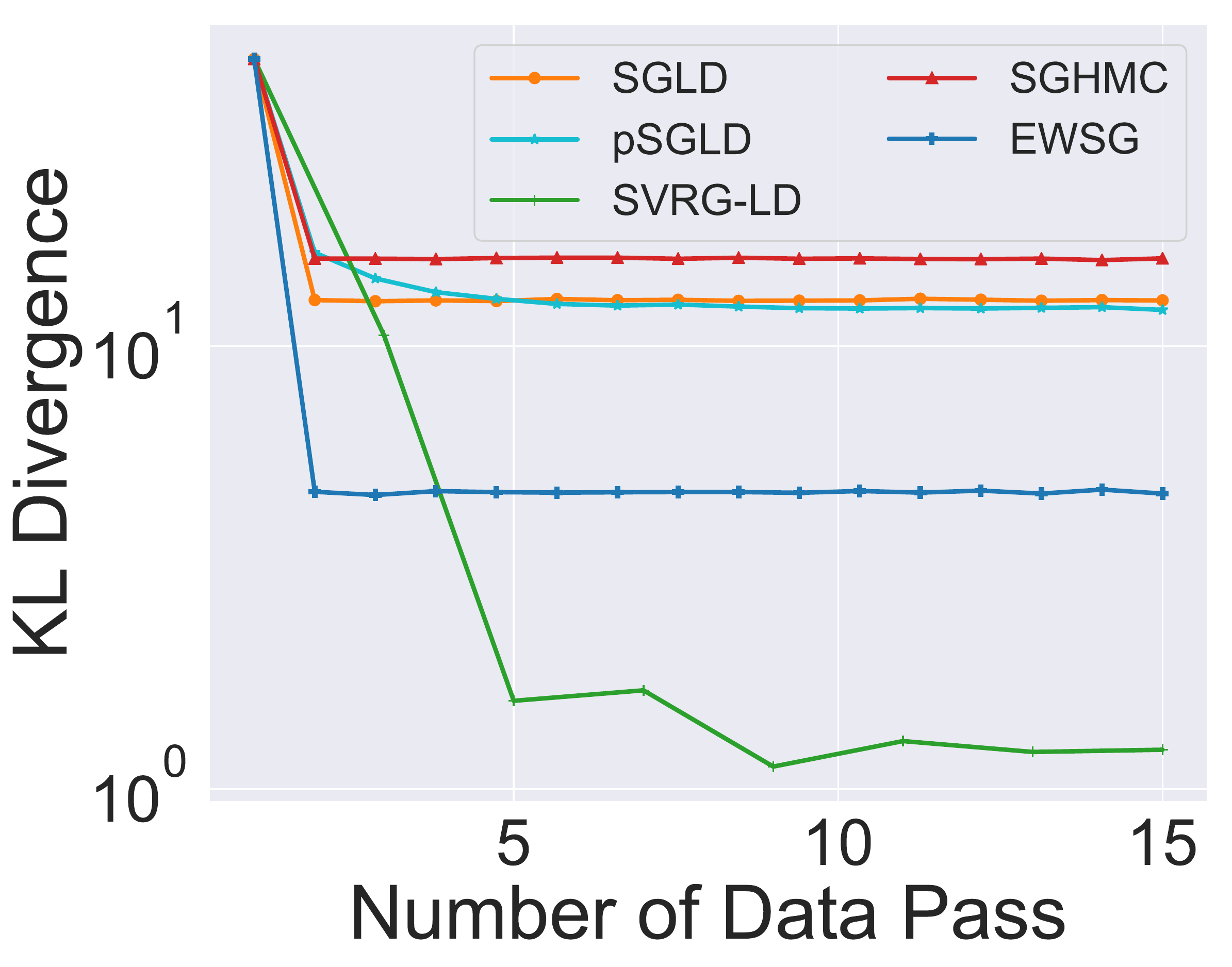}
		\caption{Sample quality in KL} \label{fig:gaussian_kl}	
	\end{subfigure}
    \begin{subfigure}{0.30\textwidth}
		\centering
		\includegraphics[width=\textwidth]{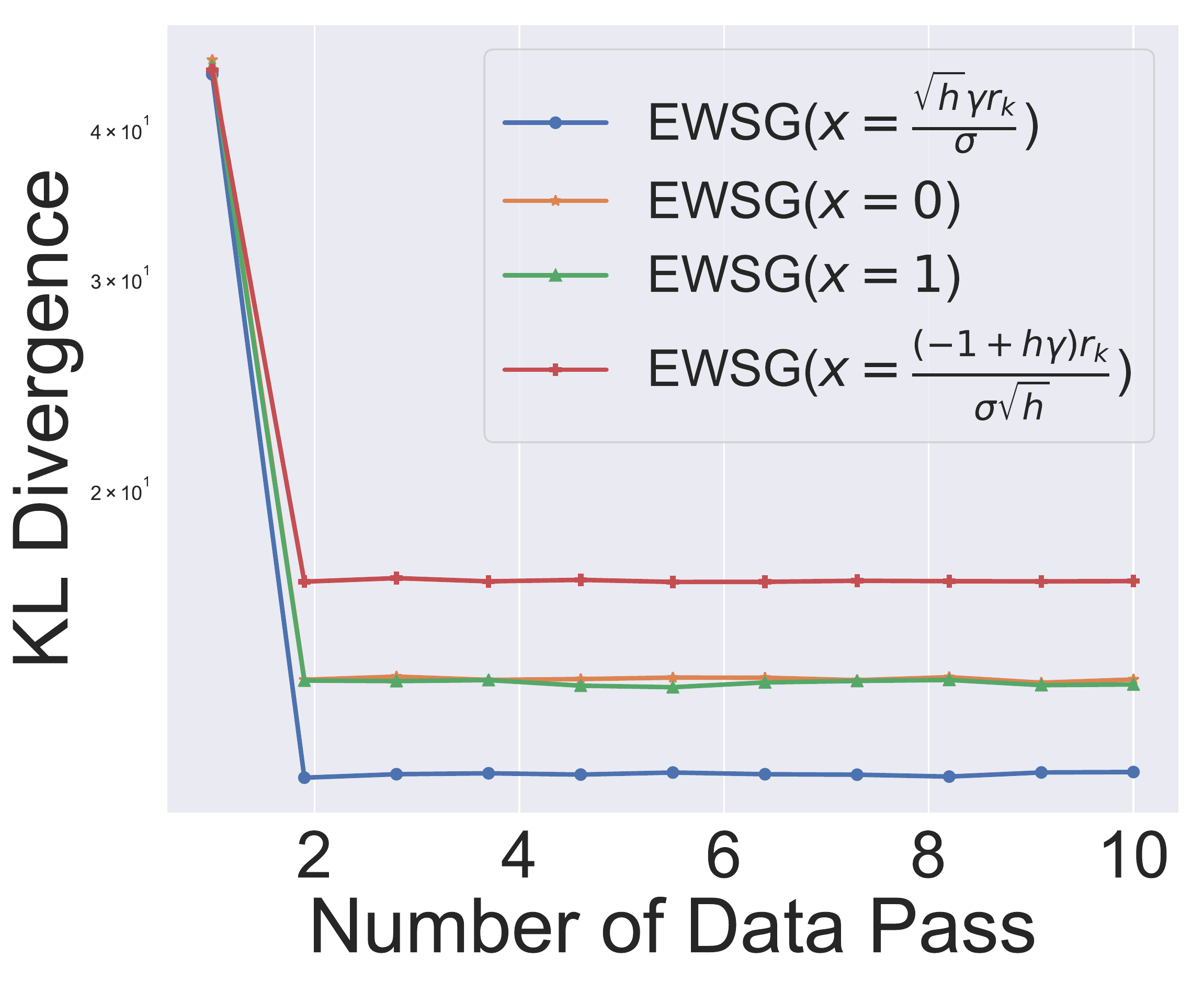}
		\caption{Performance for various $\bs{x}$} \label{fig:gaussian_x}
	\end{subfigure}
    \begin{subfigure}{0.30\textwidth}
		\centering
		\includegraphics[width=\textwidth]{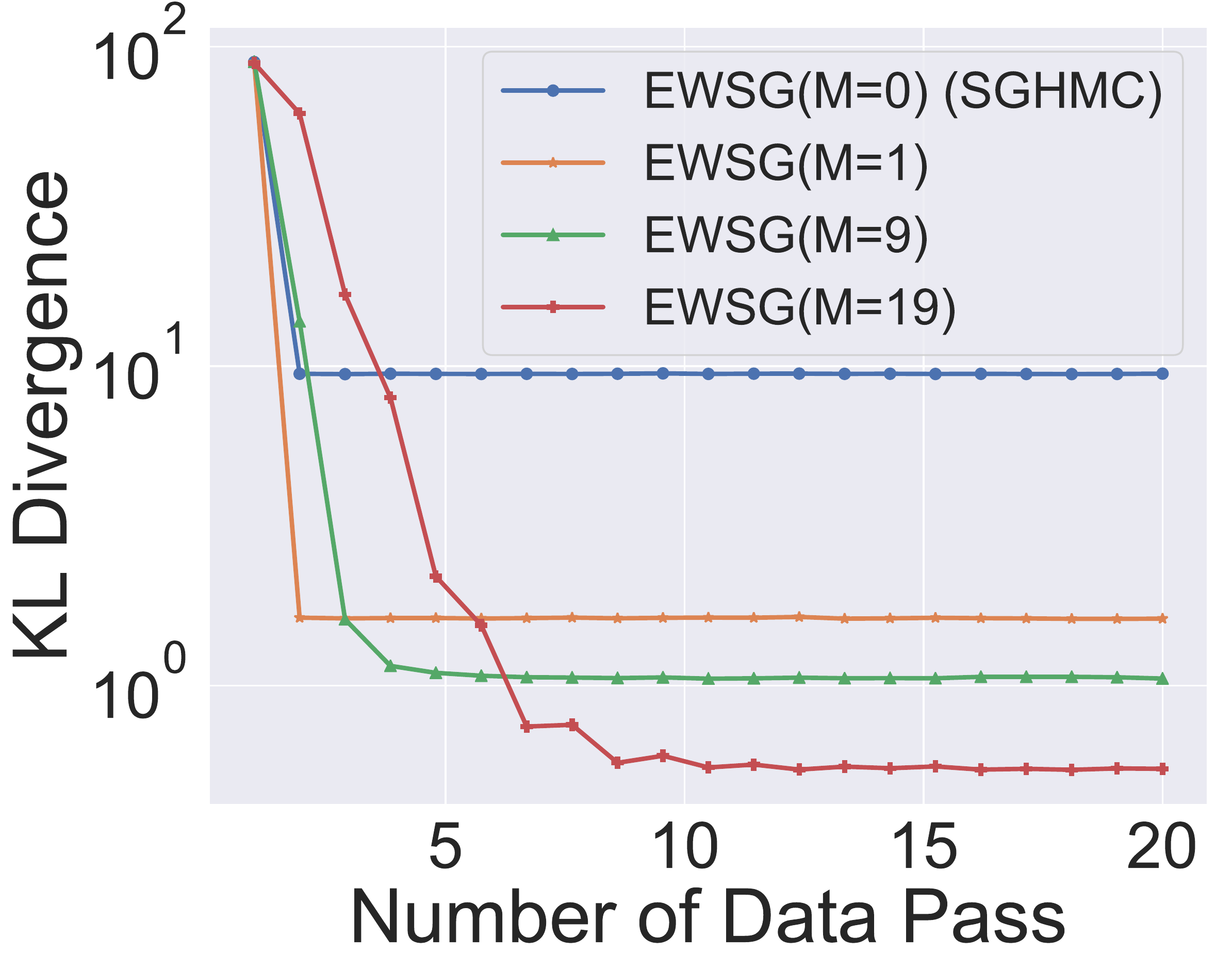}
		\caption{Performance for various $M$} \label{fig:gaussain_M_KL}
	\end{subfigure}
    \begin{subfigure}{0.30\textwidth}
        \centering
        \includegraphics[width=\textwidth]{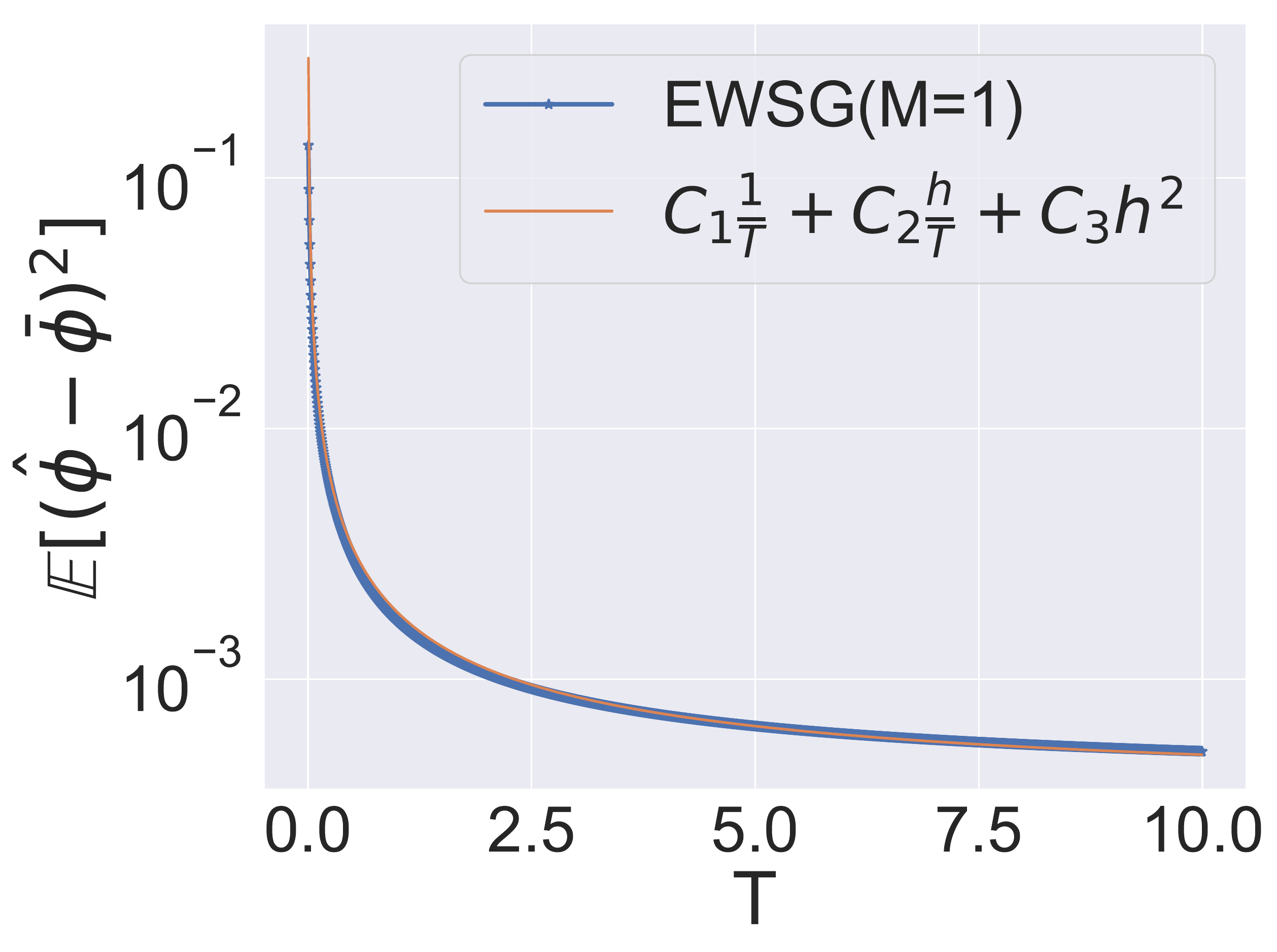}
        \caption{MSE against time $T$ (1\textsuperscript{st} and 2\textsuperscript{nd} terms in Eq. \eqref{eq:mse})} \label{fig:rebuttal_2_T}
    \end{subfigure}
    \begin{subfigure}{0.30\textwidth}
        \centering
        \includegraphics[width=\textwidth]{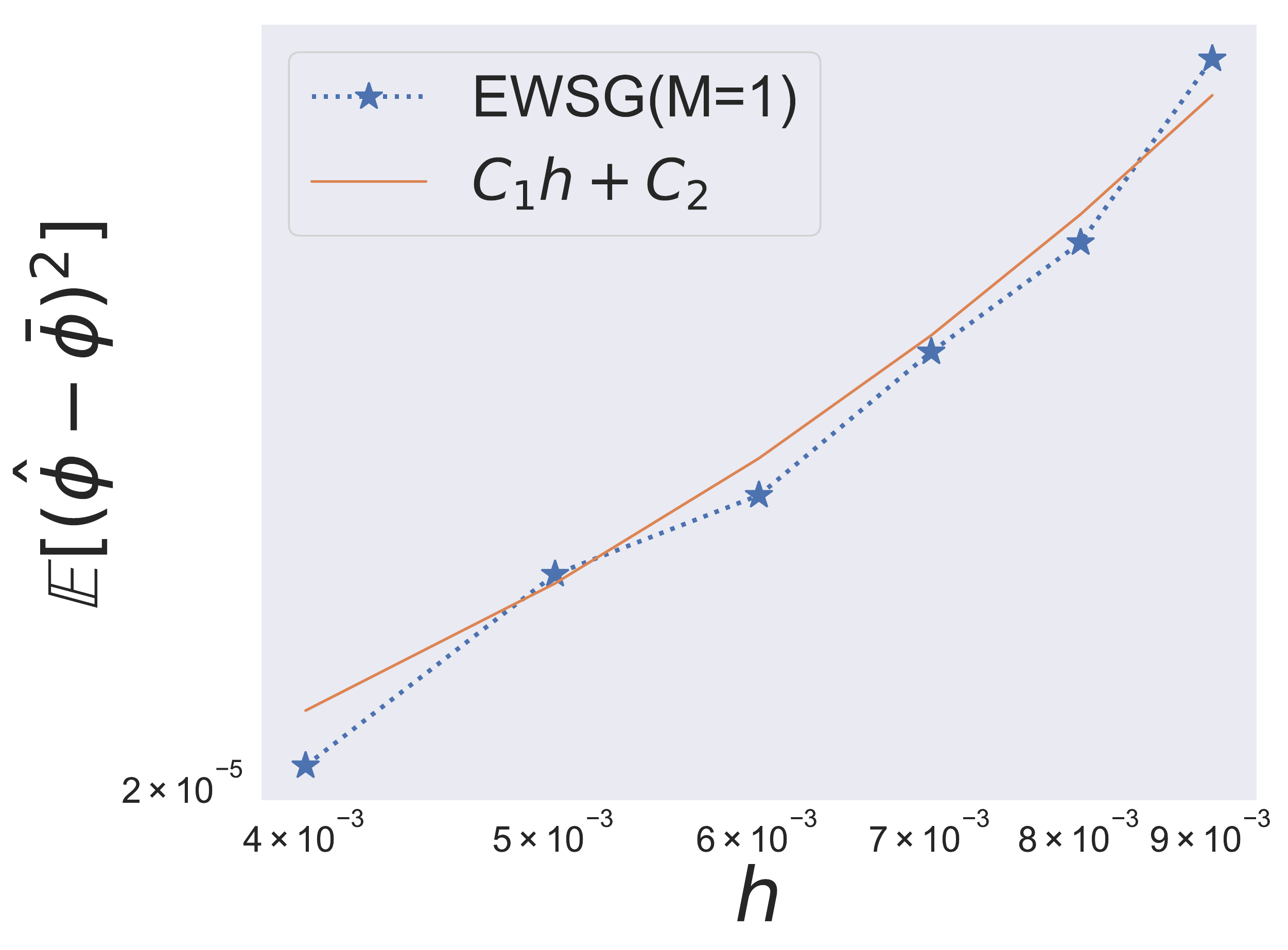}
        \caption{MSE against step size $h$ with fixed finite
        $T$ 
        (2\textsuperscript{nd} term in Eq. \eqref{eq:mse})} \label{fig:rebuttal_2_second_term_h}
    \end{subfigure}
    \begin{subfigure}{0.30\textwidth}
        \centering
        \includegraphics[width=\textwidth]{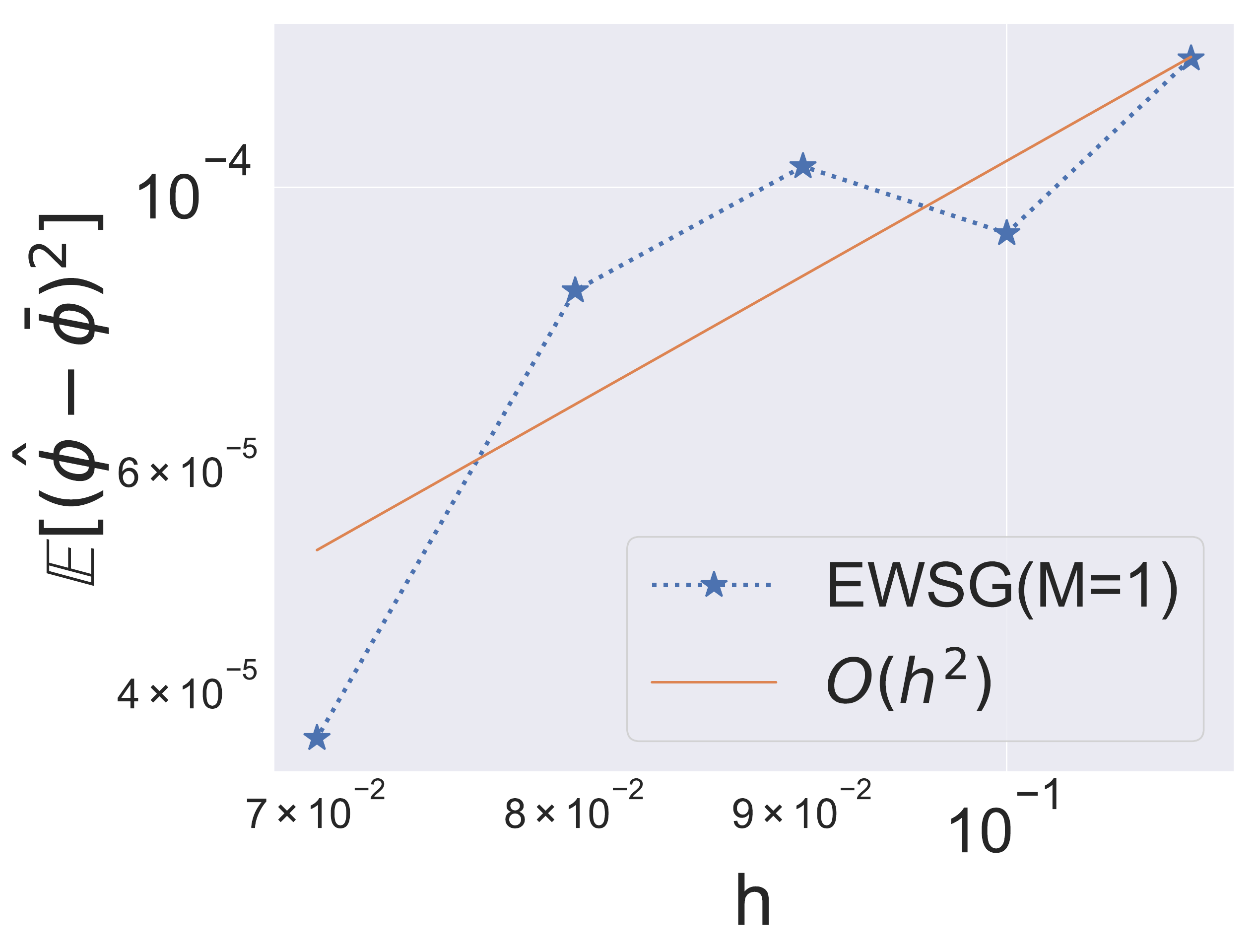}
        \caption{MSE against step size $h$ with $T\approx\infty$ (3\textsuperscript{rd} term in Eq. \eqref{eq:mse})} \label{fig:rebuttal_2_h}
    \end{subfigure}
	
    \caption{Sampling from Gaussian target}
\end{figure*}

For each algorithm, we generate 10,000 independent realizations for empirical estimation. All algorithms are run for 30 data passes with minibatch size of 1. Step size is tuned from $5 \times \{10^{-1}, 10^{-2}, 10^{-3}, 10^{-4}\}$ and $5\times10^{-3}$ is chosen for SGLD and pSGLD, $5\times10^{-2}$ for SGHMC and EWSG and $5\times10^{-4}$ for SVRG-LD. SGHMC and EWSG use $\gamma=10$. Results are shown in Fig. \ref{fig:gaussian_kl} and EWSG outperforms SGHMC, SGLD and pSGLD in terms of accuracy. Note SVRG-LD has the best accuracy\footnote{For Gaussians, mean and variance completely determine the distribution, so appropriately reduced variance leads to great accuracy for the entire distribution.} but
the slowest convergence, and that is why EWSG is a useful alternative to VR: its light-weight suits situations with limited computational resources better.



Figure \ref{fig:gaussian_x} shows the performance of several possible choices of the hyper-parameter $\bs{x}$, including the recommended option $\bs{x} = \sqrt{h}\gamma \bs{r}_k / \sigma$, and $\bs{x}=\bs{0}$, $\bs{x}=\bs{1}$, $\bs{x} = (-1+h\gamma)\bs{r}_k / \sigma\sqrt{h}$ (which corresponds to $\bs{r}_{k+1} = \bs{0}$). Step size $h = 7 \times 10^{-2}$ is used for this experiment. The recommended option performs better than the others.

Another important hyper-parameter in EWSG is $M$. As the length of index chain $M$ increases, the subsampling distribution approaches that given by Eq.\ref{eq:transition-kernel}. 
Considering that larger $M$ means more gradient evaluations per step\footnote{in each iteration of the outer MCMC loop, EWSG consumes $M+1$ data points, and hence in a fair comparison with fixed computation budget (e.g. $E$ total gradient calls), EWSG runs $\frac{E}{M + 1}$ iterations which is decreasing in $M$.}, there could be some $M$ value that achieves the best balance between speed and accuracy. Fig.\ref{fig:gaussain_M_KL} shows a fair comparison of four values of $M=0, 1, 9, 19$, and the recommended $M=1$ case converges as fast as SGHMC (when $M=0$, EWSG does not run the Metropolis-Hastings index chain and hence degenerates to SGHMC) but improves its accuracy. It is also clear that as $M$ increases, sampling accuracy gets improved.

As approximations are used in Algorithm \ref{alg:EWSG}, it is natural to ask if results of Thm. $\ref{thm:mse}$ still hold. We  empirically investigate this question (using $M=1$ and variance as the test function $\phi$). Eq.\ref{eq:mse} in Thm.\ref{thm:mse} is a nonasymptotic error bound consisting of three parts, namely an $\mathcal{O}(\frac{1}{T})$ term corresponding to the convergence at the continuous limit, an $\mathcal{O}(h/T)$ term coming from the SG variance, and an $\mathcal{O}(h^2)$ term due to bias and numerical error. Fig.\ref{fig:rebuttal_2_T} plots the mean squared error (MSE) against time $T = Kh$ to confirm the 1st term. Fig.\ref{fig:rebuttal_2_second_term_h} plots the MSE against $h$ with fixed $T$ in the small $h$ regime (so that the 3rd term is negligible when compared to the 2nd) to confirm that the 2nd term scales like $\mathcal{O}(h)$. 
For the 3rd term in Eq. \eqref{eq:mse}, we run sufficiently many iterations to ensure all chains are well-mixed, and Fig.\ref{fig:rebuttal_2_h} confirms the final MSE to scale like $\mathcal{O}(h^2)$ even for large $h$ (as the 2nd term vanishes due to $T\to\infty$). In this sense, despite the  approximations introduced by the practical implementation, the performance of Algorithm \ref{alg:EWSG} is still approximated by Thm. \ref{thm:mse}, even when $M=1$. Thm. \ref{thm:mse} can thus guide the choices of $h$ and $T$ in practice.

\begin{table}[!t]
    \centering
    {
    \small
    \begin{tabular}{cccccc}
        \Xhline{3\arrayrulewidth}
        Method  & SGLD &  pSGLD & SGHMC & EWSG & FlyMC   \\
        Accuracy(\%) & 75.283 $\pm$ 0.016 & 75.126 $\pm$ 0.020 & 75.268 $\pm$ 0.017 & \textbf{75.306} $\pm$ \textbf{0.016} & 75.199 $\pm$ 0.080 \\
        Log Likelihood & -0.525 $\pm$ 0.000 & -0.526 $\pm$ 0.000 & -0.525 $\pm$ 0.000 & \textbf{-0.523} $\pm$ \textbf{0.000} & \textbf{-0.523} $\pm$ \textbf{0.000}\\
        Wall Time (s) & \textbf{3.085 $\pm$ 0.283} &  4.312 $\pm$ 0.359 & 3.145 $\pm$ 0.307 & 3.755 $\pm$ 0.387 & 291.295 $\pm$ 56.368 \\ 
        \Xhline{3\arrayrulewidth}
    \end{tabular}   
    }
    \caption{Accuracy, log likelihood and wall time of various algorithms on test data after one data pass (mean $\pm$ std).}\label{tab:blr}   
\end{table}

\subsection{Bayesian Logistic Regression (BLR)} \label{subsec:blr}
Consider Bayesian logistic regression for the binary classification problem.
The probabilistic model for predicting a label $y_k$ given a feature vector $x_k$ is $p(y_k=1|\bs{x}_k, \bs{\theta}) = 1 / (1 + \exp(-\bs{\theta}^T\bs{x}_k))$. We set a Gaussian prior with zero mean and covariance $\Sigma = 10I_d$ for $\bs{\theta}$, and hence the potential function of the posterior distribution of $\bs\theta$ is 
$$
    V(\bs{\theta}) = 5\|\bs{\theta}\|^2 -\sum_{i=1}^n y_i \log p(y_i=1|\bs{x}_i, \bs{\theta}) + (1-y_i) \log \left(1-p(y_i=1|\bs{x}_i, \bs{\theta}) \right).
$$ 
We conduct our experiments on Covertype data set\footnote{https://archive.ics.uci.edu/ml/datasets/covertype}, which contains $n=581,012$ data points and 54 features (which is the dimension of $\bs\theta$). Given the large size of this data set, SG is needed to scale up MCMC methods. We use 80\% of data for training and the rest 20\% for testing.

The FlyMC algorithm\footnote{https://github.com/HIPS/firefly-monte-carlo/tree/master/flymc} uses a lower bound derived in \cite{maclaurin2015firefly} for likelihood function. For underdamped Langevin based algorithms, we set friction coefficient $\gamma=50$. After tuning, we set the step size as $\{1, 3, 0.02, 5, 5\} \times 10^{-3}$ for SGULD, EWSG, SGLD, pSGLD and FlyMC. All algorithms are run for one data pass, with minibatch size of 50 (for FlyMC, it means 50 data are sampled in each iteration to switch state). 100 independent samples are drawn from each algorithm to estimate statistics. To further smooth out noise, all experiments are repeated 1000 times with different seeds.



Results are in Fig. \ref{fig:blr_acc} and \ref{fig:blr_llh} and Table \ref{tab:blr}. EWSG outperforms others, except for log likelihood being comparable to FlyMC, which is an \textit{exact} MCMC method. The wall time consumed by EWSG is only slightly more than that of SGLD and SGHMC, but less than pSGLD and orders-of-magnitude less than FlyMC.

\begin{figure}
\begin{minipage}{0.52\textwidth}
    \begin{subfigure}{0.48\textwidth}
		\centering
		\includegraphics[width=\textwidth]{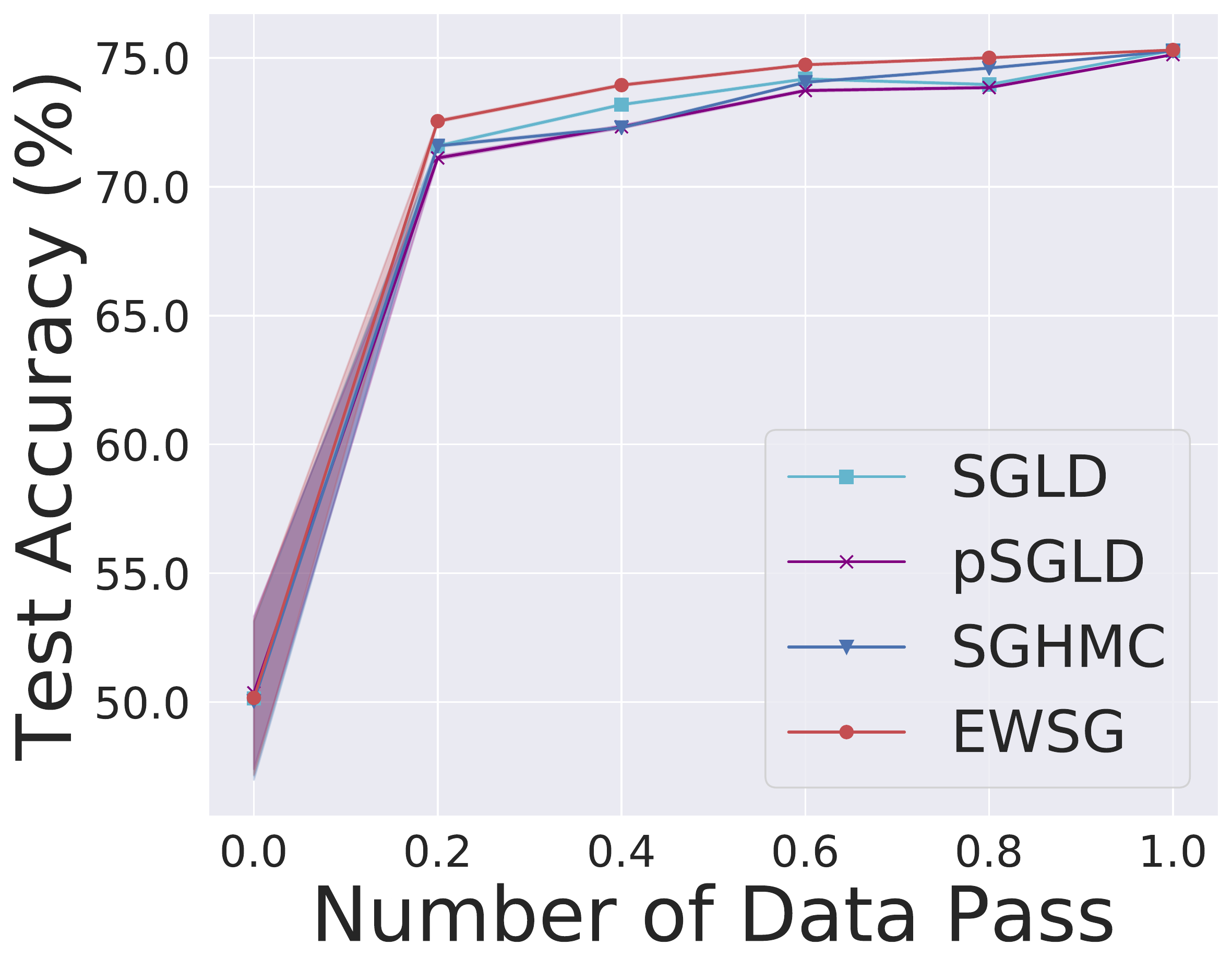}
		\caption{Test Accuracy} \label{fig:blr_acc}
	\end{subfigure}
    \begin{subfigure}{0.48\textwidth}
		\centering
		\includegraphics[width=\textwidth]{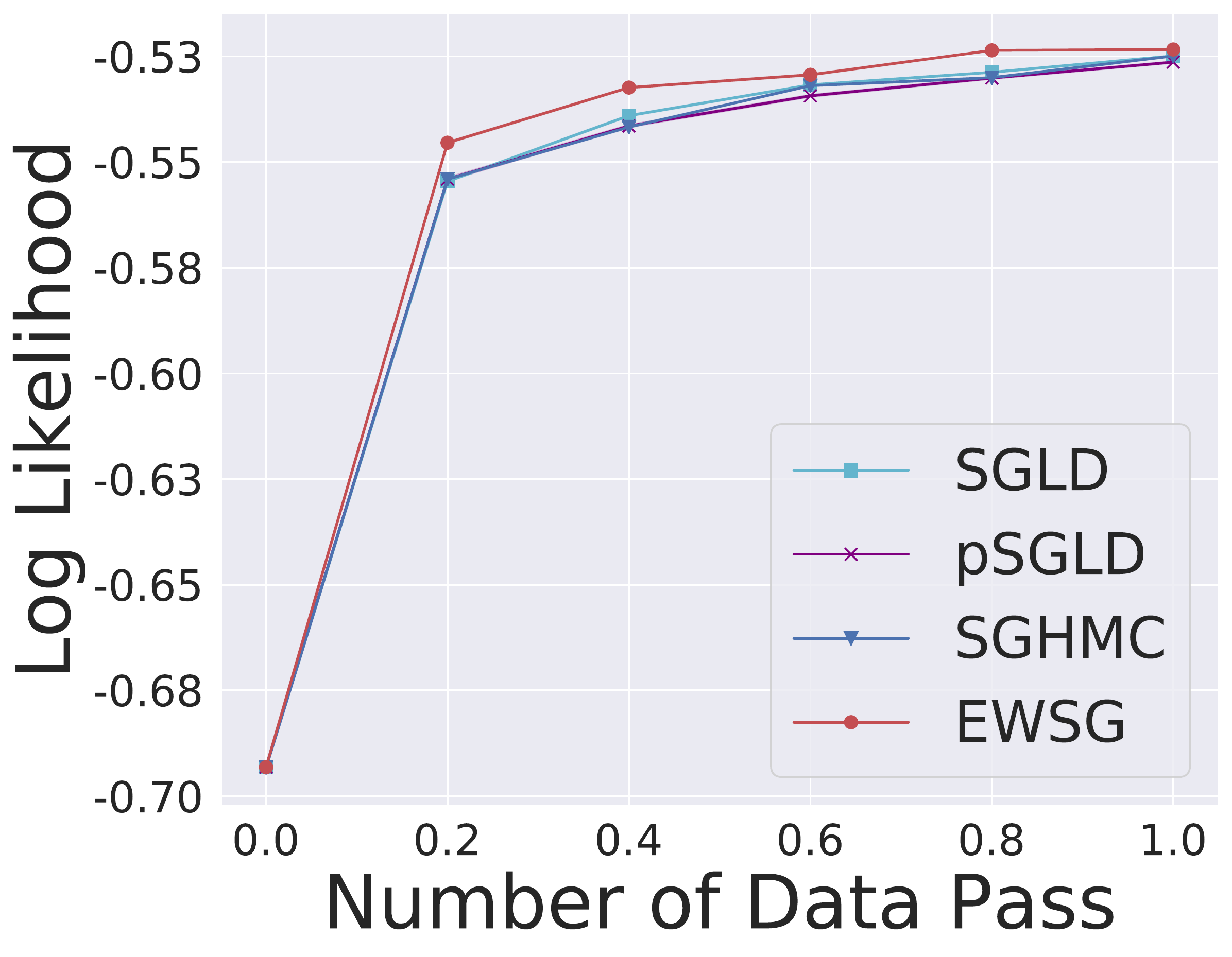}
		\caption{Test Log Likelihood} \label{fig:blr_llh}
	\end{subfigure}
	\caption{BLR learning curve}
\end{minipage}\hfill
\begin{minipage}{0.48\textwidth}
    \begin{subfigure}{0.48\textwidth}
		\centering
		\includegraphics[width=\textwidth]{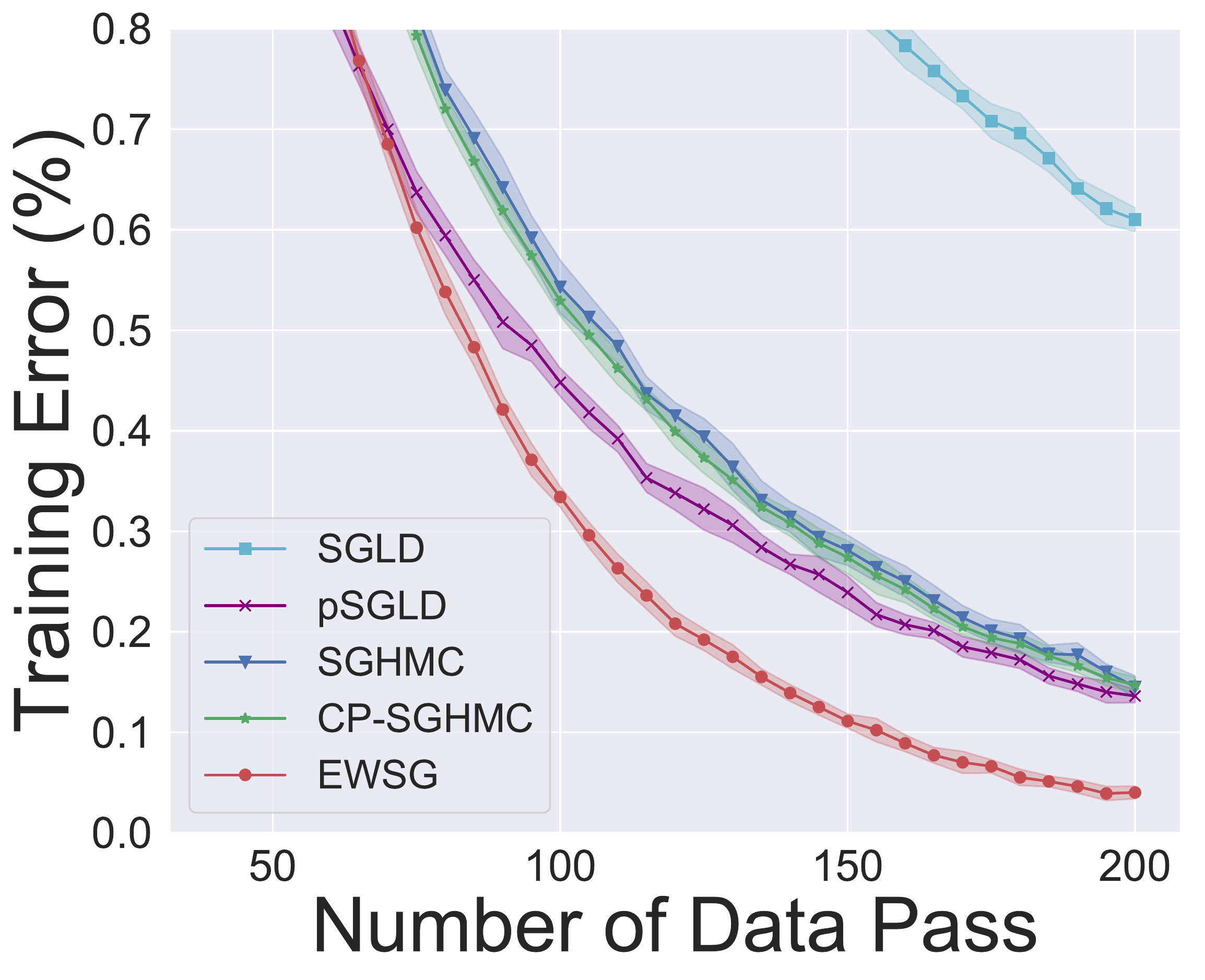}
		\caption{MLP architecture} \label{fig:mlp}
	\end{subfigure}
    \begin{subfigure}{0.48\textwidth}
		\centering
		\includegraphics[width=\textwidth]{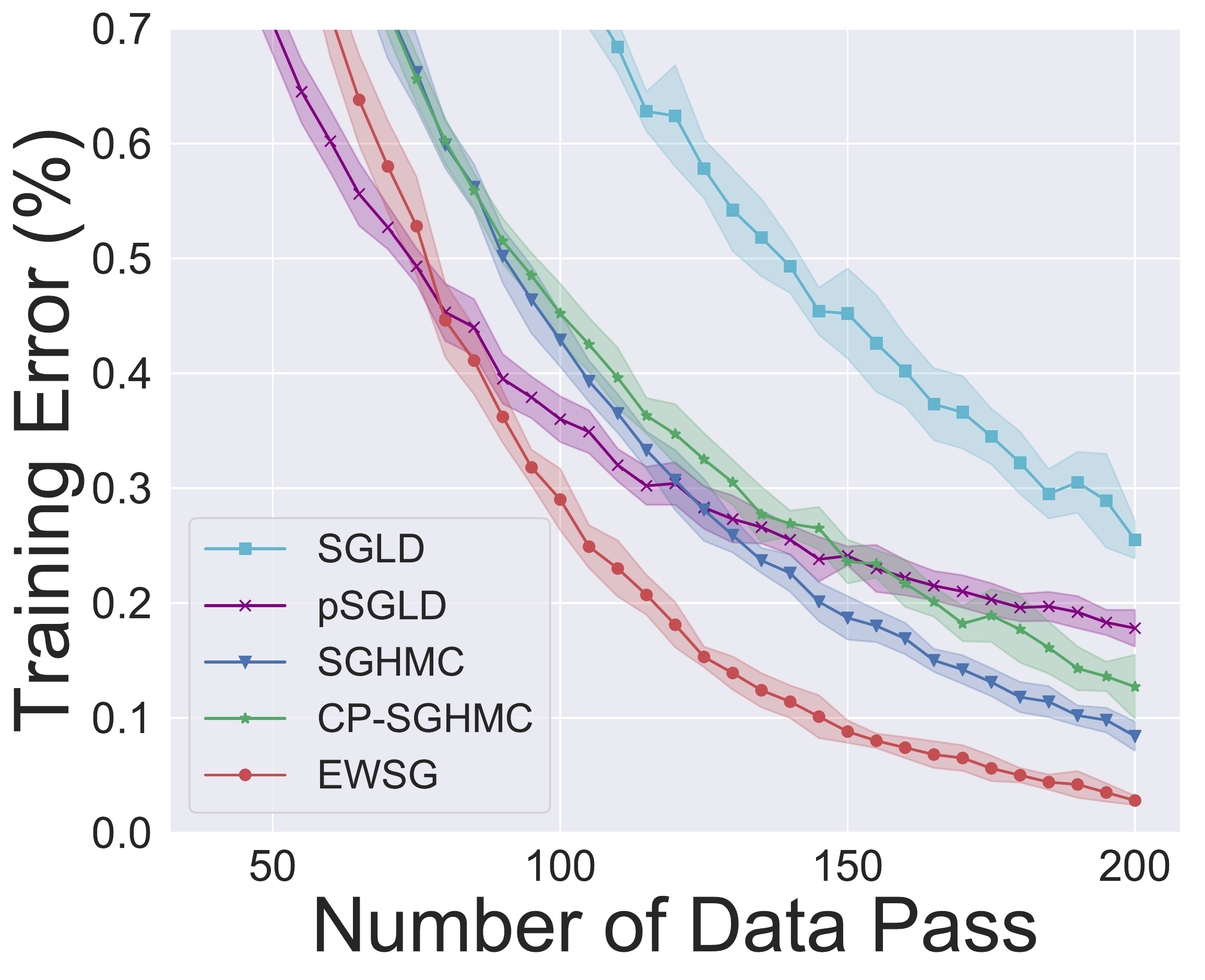}
		\caption{CNN architecture} \label{fig:cnn}
	\end{subfigure}
    \caption{BNN learning curve. Shade: one standard deviation.} \label{fig:bnn_learning_curve}
\end{minipage}
\end{figure}	

\subsection{Bayesian Neural Network (BNN)}\label{subsec:bnn}
Bayesian neural network is a compelling model for deep learning \citep{wilson2020case}. Here two popular architectures of BNN are experimented -- multilayer perceptron (MLP) and convolutional neural nets (CNN). In MLP, a hidden layer with 100 neurons followed by a softmax layer is used. In CNN, we use standard network configuration with 2 convolutional layers followed by 2 fully connected layers \citep{jarrett2009best}. Both convolutional layers use $5 \times 5$ convolution kernel with 32 and 64 channels, $2\times2$ max pooling layers follow immediately after convolutional layer. The last two fully-connected layers each has 200 neurons. We set the standard normal as prior for all weights and bias.

We test algorithms on the MNIST data set, consisting of 60,000 training data and 10,000 test data, each datum is a $28 \times 28$ gray-scale image with one of the ten possible labels (digits $0\sim9$). For ULD based algorithms , we set friction coefficient $\gamma=0.1$ in MLP and $\gamma=1.0$ in CNN. In MLP, the step sizes are set $h= \{4, 2, 2\} \times 10^{-3}$ for EWSG, SGHMC and CP-SGHMC, and $h = \{0.001, 1\}\times 10^{-4}$ for SGLD and pSGLD, via grid search. For CP-SGHMC ,
(clustering-based preprocessing is conducted \citep{fu2017cpsg} before SGHMC)
we use K-means with 10 clusters to preprocess the data set. In CNN, the step sizes are set $h= \{4, 2, 2\} \times 10^{-3}$ for EWSG, SGHMC and CP-SGHMC, and $h = \{0.02, 8\} \times 10^{-6}$ for SGLD and pSGLD, via grid search. All algorithms use minibatch size of 100 and are run for 200 epoches. For each algorithm, we generate 100 independent samples to make posterior prediction. To smooth out noise and obtain more significant results, we repeat experiments 10 times with different seeds.

The learning curve of training error is shown in Fig. \ref{fig:mlp} and \ref{fig:cnn}. EWSG consistently improves over its uniform counterpart (i.e., SGHMC) and CP-SGHMC (an approximate IS SG-MCMC). Moreover, EWSG also outperforms two standard benchmarks SGLD and pSGLD. The improvement over baseline on MNIST data set is comparable to some of the early works \citep{chen2014stochastic, li2016preconditioned}. 

Note: in the MLP setup, the model has $d > 78,400$ parameters whereas there are $n=60,000$ data points, which shows EWSG does not require $n > d$ to work and can still outperform its uniform counterpart in the overparametrized regime (Thm.\ref{thm:1} demonstrates the underparametrized case only because the sparsity result is easy to understand, but EWSG doesn't only work for underparameterized models).  

\section*{Acknowledgment}
We thank Yian Ma and Xuefeng Gao for insightful discussions. MT is grateful for the partial support by NSF DMS-1847802 and ECCS-1936776. This work was mainly conducted when HZ was a professor at Georgia Institute of Technology.

\newpage
\bibliography{reference}
\bibliographystyle{plainnat}

\appendix
\clearpage
\onecolumn
\section{Proof of Theorem \ref{thm:1}}
\begin{proof}
Denote the set of all $n$-dimensional probability vectors by $\Sigma^n$, the set of sparse probability vectors by $\mathcal{S}$, and the set of non-sparse (dense) probability vectors by $\mathcal{D} = \Sigma^n \setminus \mathcal{S}$.
Denote $B=[\bs{b}_1, \cdots, \bs{b}_n]$, then the optimization problem can be written as 
\begin{align*}
    &\min \sum_{i=1}^n p_i \|\bs{b}_i\|^2 \\
    &\text{s.t.}  
    \begin{cases}
      \displaystyle
      B\bs{p} = \bs{0}\\
      \bs{p}^T\bs{1}_n = 1 \\
      p_i \ge 0, i=1,2,\cdots,n
    \end{cases}
\end{align*}
Note that the feasible region is always non-empty (take $\bs{p}$ to be a uniform distribution) and is also closed and bounded, hence this linear programming is always solvable. Denote the set of all minimizers by $\mathcal{M}$. Note that $\mathcal{M}$ depends on $\bs{b}_1, \cdots, \bs{b}_n$ and is in this sense random.

The Lagrange function is 
\[ L(\bs{p}, \bs{\lambda}, \mu, \bs{\omega}) = \bs{p}^T\bs{s} - \bs{\lambda}^TB\bs{p} - \mu(\bs{p}^T\bs{1}_n) - \bs{\omega}^T\bs{p}\]
where $ \bs{s}= [\|\bs{b}_1\|^2, \|\bs{b}_2\|^2,\cdots, \|\bs{b}_n\|^2]^T$ and $\bs{\lambda}, \mu, \bs{\omega}$ are dual variables. The optimality condition reads as 
\begin{equation*}
\frac{\partial L}{\partial \bs{p}}  = \bs{s} - B^T\bs{\lambda} - \mu \bs{1}_n - \bs{\omega} = \bs{0}
\end{equation*}
Dual feasibilty and complementary slackness require
\begin{align*}
    & \omega_i \leq 0, i=1,2,\cdots,n \\
    & \bs{\omega}^T\bs{p} = 0
\end{align*}

Consider the probability of the event \{a dense probability vector can solve the above minimization problem\}, i.e., $\mathbb{P}(\mathcal{M} \cap \mathcal{D} \neq \emptyset)$. It is upper bounded by
\[
    \mathbb{P}(\mathcal{M} \cap \mathcal{D} \neq \emptyset) \le \mathbb{P}(\bs{p} \in \mathcal{D} \mbox{ and } \bs{p} \mbox{ solves KKT condition})
\]

Since $\bs{p} \in \mathcal{D}$, complementary slackness implies that at least $d + 2$ entries in $\bs{\omega}$ are zero. Denote the indices of these entries by $\mathcal{J}$. For every $j \in \mathcal{J}$, by optimality condition, we have $s_j - \bs{\lambda}^T \bs{b}_j - \mu = 0$, i.e.,
\[
    \|\bs{b}_j\|^2 - \bs{\lambda}^T\bs{b}_j - \mu = 0
\]
Take the first $d + 1$ indices in $\mathcal{J}$, and note a geometric fact that $d + 1$ points in a $d$-dimensional space must be on the surface of a hypersphere of at most $d-1$ dimension, which we denote by $\mathcal{S}=S^{q-1}+\bs{x}$ for some vector $\bs{x}$ and integer $q\leq d$. Because $b_i$'s distribution is absolutely continuous, we have 
\begin{align*}
    & \mathbb{P}(\bs{p} \in \mathcal{D} \mbox{ and } \bs{p} \mbox{ solves KKT condition}) \\
    \le& \mathbb{P}(\bs{p} \in \mathcal{D} \mbox{ and } \bs{b}_j \in S , \forall j \in \mathcal{J}) \\
    \le& \mathbb{P}(\bs{b}_j \in S  , \forall j \in \mathcal{J}) \\
    =& \mathbb{P}(\bs{b}_{j_k} \in S  , k = d+2, \cdots, |\mathcal{J}|) \\
    =& \prod_{k=d+2}^{|\mathcal{J}|} \mathbb{P}(\bs{b}_{j_k} \in S) \qquad \mbox{(independence)} \\
    =& 0 \qquad \mbox{(absolute continuous)}
\end{align*}
Hence $\mathbb{P}(\mathcal{M} \cap \mathcal{D} \neq \emptyset) = 0$ and 
\begin{align*}
    1 &= \mathbb{P}(\mathcal{M} \neq \emptyset) \\
      &= \mathbb{P} ((\mathcal{M} \cap \mathcal{S}) \cup (\mathcal{M} \cap \mathcal{D}) \neq \emptyset) \\
      &\le \mathbb{P} (\mathcal{M} \cap \mathcal{S} \neq \emptyset) + \mathbb{P}  (\mathcal{M} \cap \mathcal{D} \neq \emptyset) \\
      &= \mathbb{P} (\mathcal{M} \cap \mathcal{S} \neq \emptyset)
\end{align*}
Therefore we have 
\[
    \mathbb{P} (\mathcal{M} \cap \mathcal{S} \neq \emptyset) = 1
\]
\end{proof}

\section{Proof of Theorem \ref{thm:local}}
\begin{proof}
Let $\bs{b}_i = n \nabla V_i$ and assume $\|\bs{b}_i\|_2 \leq R$ for some constant $R$. Denote $B = [\bs{b}_1, \bs{b}_2, \cdots, \bs{b}_n]$. For any probability distribution $\bs{p}$ over $\{1,\cdots,n\}$, we have 
\begin{align*}
& \text{cov}_{I \sim \bs{p}}[\bs{b}_I|\bs{b}_1,\cdots,\bs{b}_n] \\
=& \sum_{i=1}^n p_i \bs{b}_i\bs{b}_i^T - \left(\sum_{i=1}^n p_i \bs{b}_i\right)\left(\sum_{i=1}^n p_i \bs{b}_i\right)^T \\
=& \sum_{i=1}^n p_i \bs{b}_i\bs{b}_i^T \sum_{i=1}^n p_i - \left(\sum_{i=1}^n p_i \bs{b}_i\right)\left(\sum_{i=1}^n p_i \bs{b}_i\right)^T\\
=& \sum_{i < j} (\bs{b}_i-\bs{b}_j)(\bs{b}_i-\bs{b}_j)^T p_i p_j
\end{align*}
Therefore we let
\begin{align*}
f(B) := &  \Tr\left[\sum_{i < j} (\bs{b}_i-\bs{b}_j)(\bs{b}_i-\bs{b}_j)^T p_i p_j - \sum_{i < j} (\bs{b}_i-\bs{b}_j)(\bs{b}_i-\bs{b}_j)^T \frac{1}{n^2}\right] \\
= & \sum_{i < j} \|\bs{b}_i-\bs{b}_j\|^2 p_i p_j - \sum_{i < j} \|\bs{b}_i-\bs{b}_j\|^2 \frac{1}{n^2} \qquad (\Tr[AB] = \Tr[BA])
\end{align*}
and use it to compare the trace of covariance matrix of uniform- and nonuniform- subsamplings.

First of all,
\begin{align*}
&\mathbb{E}[f(B)] \\
=& \mathbb{E}[\|\bs{b}_i - \bs{b}_j\|^2] \sum_{i<j}\left(p_ip_j-\frac{1}{n^2}\right) \\
=& \mathbb{E}[\|\bs{b}_i - \bs{b}_j\|^2] \left(\sum_{i<j}p_ip_j - \frac{n-1}{2n}\right) \\
=& \mathbb{E}[\|\bs{b}_i - \bs{b}_j\|^2] \left(\frac{1 - \sum_{i=1}^n p_i^2}{2} - \frac{n-1}{2n}\right) \\
\le& \mathbb{E}[\|\bs{b}_i - \bs{b}_j\|^2] \left(\frac{1 - \frac{1}{n}}{2} - \frac{n-1}{2n}\right)\\
=& 0
\end{align*}
where the inequality is due to Cauchy-Schwarz and it is a strict inequality unless all $p_i$'s are equal,
which means uniform subsampling on average has larger variablity than a non-uniform scheme measured by the trace of covariance matrix.

Moreover, concentration inequality can help show $f(B)$ is negative with high probability if $h$ is small. To this end, plug $\bs{x} = \mathcal{O}(\sqrt{h})$ in and rewrite 
\[ \displaystyle p_i = \frac{1}{Z} \exp\left\{Fh\left[\frac{\|\bs{y} + \frac{1}{n}\sum_{i=1}^n \bs{b}_i\|^2}{2} - \frac{\|\bs{y} + \bs{b}_i\|^2}{2}\right]\right\}\]
where $\bs{y} = \frac{\sigma}{\sqrt{h}} \bs{x} = \mathcal{O}(1)$, $F = -\frac{1}{\sigma^2}$ and $Z$ is the normalization constant. Denote the unnormalized probability by \[\widetilde{p}_i = \exp\left\{Fh\left[\frac{\|\bs{y} + \frac{1}{n}\sum_{i=1}^n \bs{b}_i\|^2}{2} - \frac{\|\bs{y} + \bs{b}_i\|^2}{2}\right]\right\}\] and we have
\begin{align*}
f(B) &= \frac{1}{2}\sum_{i=1}^n\sum_{j=1}^n \|\bs{b}_i - \bs{b}_j\|^2 \left(p_ip_j-\frac{1}{n^2}\right)\\
          &= \frac{1}{2}\sum_{i=1}^n\sum_{j=1}^n \|\bs{b}_i - \bs{b}_j\|^2 \frac{\widetilde{p}_i\widetilde{p}_j}{[\sum_{k=1}^n\widetilde{p}_k]^2} - \frac{1}{2}\sum_{i=1}^n\sum_{j=1}^n \|\bs{b}_i - \bs{b}_j\|^2\frac{1}{n^2}
\end{align*}

To prove concentration results, it is useful to estimate
\[C_i = \sup_{\substack{\bs{b}_1,\cdots, \bs{b}_{n} \in B(\bs{0}, R) \\ \widehat{\bs{b}}_i \in B(\bs{0}, R)}} |f(\bs{b}_1, \cdots, \bs{b}_i, \cdots, \bs{b}_n) \]\[- f(\bs{b}_1, \cdots, \widehat{\bs{b}}_i, \cdots, \bs{b}_n)| \]
where $B(\bs{0}, R)$ is a ball centered at origin with radius $R$ in $\mathbb{R}^d$.

Due to the mean value theorem, we have $C_i \le 2R \sup |\frac{\partial f}{\partial \bs{b}_i}|$. By symmetry, it suffices to compute $\sup |\frac{\partial f}{\partial \bs{b}_1}|$ to upper bound $C_1$. Note that 
\[
\frac{\partial \widetilde{p}_j}{\partial \bs{b}_1} = 2 \widetilde{p}_j F h [\frac{1}{n}(\bs{y} + \frac{1}{n}\sum_{i=1}^n \bs{b}_i) - (\bs{y} + \bs{b}_j)\delta_{1j}] = \mathcal{O}(h) \widetilde{p}_j
\]
where $\delta_{1j}$ is the Kronecker delta function. Thus
\begin{align*}
\frac{\partial f}{\partial \bs{b}_1} &= \sum_{j=1}(\bs{b}_1 - \bs{b}_j)\frac{\widetilde{p}_1\widetilde{p}_j}{[\sum_{k=1}^n\widetilde{p}_k]^2} - \sum_{j=1}^n (\bs{b}_1 - \bs{b}_j)\frac{1}{n^2}  + \sum_{i,j=1}^n\|\bs{b}_1 - \bs{b}_j\|^2\frac{ \mathcal{O}(h) \widetilde{p}_i \widetilde{p}_j}{[\sum_{k=1}^n\widetilde{p}_k]^2} \\
&- 2 \sum_{i,j=1}^n\|\bs{b}_1 - \bs{b}_j\|^2\frac{\widetilde{p}_i \widetilde{p}_j}{\left[\sum_{k=1}^n\widetilde{p}_k\right]^3} \sum_{k=1}^n \widetilde{p}_k \mathcal{O}(h)\\
&= \widetilde{p}_1 \sum_{j=1}^n (\bs{b}_1 - \bs{b}_j) \frac{\widetilde{p}_j}{[\sum_{k=1}^n\widetilde{p}_k]^2} - \sum_{j=1}^n (\bs{b}_1 - \bs{b}_j) \frac{1}{n^2} + \frac{\mathcal{O}(n^2)\mathcal{O}(h)}{\mathcal{O}(n^2)} + \frac{\mathcal{O}(n^2)}{\mathcal{O}(n^3)}\mathcal{O}(n)\mathcal{O}(h)\\
&= \mathcal{O}(\frac{h}{n}) + \mathcal{O}(h) + \mathcal{O}(h)\\
&= \mathcal{O}(h)
\end{align*}
where $\mathcal{O}(\frac{h}{n})$ in the 2nd last equation comes from the difference of the first two terms in the 3rd last equation. This estimation shows that $ C_i \le 2R \mathcal{O}(h) = \mathcal{O}(h)$.

Therefore, by McDiarmid's inequality, we conclude for any $\epsilon > 0$,
\[ 
\mathbb{P}(|f - \mathbb{E}[f]| > \epsilon) \le 2\exp\left(\frac{-2\epsilon^2}{\sum_{i=1}^n C_i^2}\right)
= 2\exp\left(\frac{-2\epsilon^2}{n\mathcal{O}(h^2)}\right).
\]
Any choice of $h(n)=o(n^{-1/2})$ will render this probability asymptotically vanishing as $n$ grows, which means that $f$ will be negative with high probability, which is equivalent to reduced variance per step.
\end{proof}

\section{Proof of Theorem \ref{thm:mse}}
\label{sec:globalErrorProof}
\begin{proof}
We rewrite the generator of underdamped Langevin with full gradient as
\[
    \mathcal{L} f(\bs{X}) = \bs{F}(\bs{X})^T \begin{bmatrix} \nabla_{\bs{\theta}} f(\bs{X}) \\ \nabla_{\bs{r}} f(\bs{X})\end{bmatrix} + \frac{1}{2} A:\nabla\nabla f(\bs{X})
\]
where 
\[
    \bs{F}(\bs{X}) = \begin{bmatrix} \bs{r} \\ -\gamma \bs{r} - \nabla V(\bs{\theta}) \end{bmatrix}, \quad A = GG^T \mbox{and}\quad  G =\begin{bmatrix} O_{d\times d} & O_{d\times d}\\
    O_{d\times d} & \sqrt{2\gamma} I_{d \times d} \end{bmatrix}
\]

Rewrite the discretized underdamped Langevin with stochastic gradient in variable $\bs{X}$, i.e.,
\[
    \bs{X}^E_{k+1} - \bs{X}^E_{k} = h\bs{F}_k(\bs{X}^E_{k})  + \sqrt{h} G_k \bs{\eta}_{k+1}
\]
where 
\[
    \bs{F}_{k}(\bs{X}) = \begin{bmatrix} \bs{r} \\ -\gamma \bs{r} - n\nabla V_{I_k} (\bs{\theta}) \end{bmatrix}, \qquad G_{k} = G = \begin{bmatrix} O_{d\times d} & O_{d\times d} \\
    O_{d\times d} & \sqrt{2\gamma} I_{d \times d} \end{bmatrix},
\]
and $\bs{\eta}_{k+1}$ is a $2d$ dimensional standard Gaussian random vector. Note that this representation include both SGHMC and EWSG, for SGHMC $I_k$ follows uniform distribution and for EWSG, $I_k$ follows the MCMC-approximated exponentially weighted distribution.

Denote the generator associated with stochastic gradient underdamped Langevin at the $k$-th iteration by 
\[
    \mathcal{L}_k f(\bs{X}) = \bs{F}_k(\bs{X})^T \begin{bmatrix} \nabla_{\bs{\theta}} f(\bs{X}) \\ \nabla_{\bs{r}} f(\bs{X})\end{bmatrix} + \frac{1}{2} A:\nabla\nabla f(\bs{X})
\]
and the difference of the generators of full gradient and stochastic gradient underdamped Langevin at $k$-th interation is denoted by 
\[
    \Delta \mathcal{L}_k f(\bs{X}) = (\mathcal{L}_k - \mathcal{L}) f(\bs{X}) = (\bs{F}_k(\bs{X}) - \bs{F}(\bs{X}))^T \begin{bmatrix} \nabla_{\bs{\theta}} f(\bs{X}) \\ \nabla_{\bs{r}} f(\bs{X})\end{bmatrix} = \langle \nabla V(\bs{\theta}) - n\nabla V_{I_k}(\bs{\theta}), \nabla_{\bs{r}} f(\bs{X}) \rangle
\]

For brevity, we write $\phi_k = \phi(\bs{X}^E_k)$, $\bs{F}^E_k = \bs{F}_k(\bs{X}^E_k)$, $\psi_k = \psi(\bs{X}^E_k)$ and $D^l\phi_k = (D^l\psi) (\bs{X}^E_k)$ where $(D^l\psi)(z)$ is the $l$-th order derivative. We write $(D^l\psi)[\bs{s}_1, \bs{s}_2,\cdots, \bs{s}_l]$ for derivative evaluated in the direction $\bs{s}_j, j=1,2,\cdots,l$. Define 
\[
    \bs{\delta}_k = \bs{X}^E_{k+1} - \bs{X}^E_k = h \bs{F}^E_k + \sqrt{h} G_k \bs{\eta}_{k+1}
\]

Under the assumptions of Theorem \ref{thm:mse}, we show that the vector field $F^E_k$ also has bounded momentum up to $p$-th order.

\begin{lemma} \label{lemma}
Under the assumption of Theorem \ref{thm:mse}, there exists a constant $M$ such that up to $\frac{p}{2}$-th order moments of random vector field $\bs{F}^E_k$ are bounded
\[
    \mathbb{E} \| \bs{F}^E_k \|^j_2 \le M, \, \forall j=0,1,2,\cdots,\frac{p}{2}, \, \forall k=0,1,2\cdots,
\]
\end{lemma}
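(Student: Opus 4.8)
The plan is to bound $\|\bs{F}^E_k\|_2$ pointwise by a linear combination of $\|\bs{r}^E_k\|_2$ and $\|n\nabla V_{I_k}(\bs{\theta}^E_k)\|_2$, raise to the $j$-th power, and take expectations term by term using the moment assumptions of Theorem~\ref{thm:mse}. Since $\bs{F}^E_k = \big(\bs{r}^E_k,\, -\gamma \bs{r}^E_k - n\nabla V_{I_k}(\bs{\theta}^E_k)\big)^T$, the block structure of the Euclidean norm together with the triangle inequality gives
\[
    \|\bs{F}^E_k\|_2 \le \|\bs{r}^E_k\|_2 + \|\gamma \bs{r}^E_k + n\nabla V_{I_k}(\bs{\theta}^E_k)\|_2 \le (1+\gamma)\|\bs{r}^E_k\|_2 + n\|\nabla V_{I_k}(\bs{\theta}^E_k)\|_2 .
\]
Applying the elementary inequality $(a+b)^j \le 2^{j-1}(a^j+b^j)$ for $a,b\ge 0$ and $j\ge 1$ reduces the task to bounding $\mathbb{E}\|\bs{r}^E_k\|_2^j$ and $\mathbb{E}\|\nabla V_{I_k}(\bs{\theta}^E_k)\|_2^j$ for each $j \le \frac{p}{2}$ (the $j=0$ case is trivial, since $\|\bs{F}^E_k\|_2^0=1$). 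Note that $\frac{p}{2}\le p$, with $p=12$ in Theorem~\ref{thm:mse}, so all moments invoked below lie within the range covered by the hypotheses.

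The first expectation is immediately controlled by the hypothesis $\mathbb{E}\|\bs{r}^E_k\|^l < M_2$, valid for every $l\le p$ and in particular for $l=j\le \frac{p}{2}$. The main subtlety, and the one point that genuinely requires the non-uniform, state-dependent setting, is the second expectation: the index $I_k$ is random with a p.m.f.\ that depends on the current state $\bs{X}^E_k$, so the expectation cannot be evaluated by pretending $I_k$ is uniform or independent of the state. The key step I would use to sidestep this is a crude but distribution-free domination: for \emph{any} realization of $I_k$,
\[
    \|\nabla V_{I_k}(\bs{\theta}^E_k)\|_2^j \le \max_{1\le i \le n} \|\nabla V_i(\bs{\theta}^E_k)\|_2^j \le \sum_{i=1}^n \|\nabla V_i(\bs{\theta}^E_k)\|_2^j ,
\]
whose right-hand side no longer involves $I_k$. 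Taking expectations and invoking $\mathbb{E}\|\nabla V_i(\bs{\theta}^E_k)\|^l < M_1$ for each $i$ and each $l=j\le \frac{p}{2}$ then yields $\mathbb{E}\|\nabla V_{I_k}(\bs{\theta}^E_k)\|_2^j \le n M_1$, uniformly in $k$ and in the (unknown) conditional law of $I_k$ given $\mathcal{F}_k$.

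Combining the two bounds gives $\mathbb{E}\|\bs{F}^E_k\|_2^j \le 2^{j-1}\big((1+\gamma)^j M_2 + n^{j+1} M_1\big)$, and taking $M$ to be the maximum of this right-hand side over the finitely many $j\in\{0,1,\dots,\frac{p}{2}\}$ produces a single constant, depending only on $\gamma$, $n$, $M_1$, $M_2$, and $p$ but uniform in $k$, as claimed. I expect the only real obstacle to be the conceptual one just described, namely ensuring that the state-dependence of the subsampling index does not leak into the estimate; the pointwise domination by $\sum_{i=1}^n \|\nabla V_i\|_2^j$ resolves this cleanly by decoupling the bound from the law of $I_k$, after which everything reduces to elementary norm inequalities and direct application of the given moment hypotheses.
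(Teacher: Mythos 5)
Your proof is correct, and its skeleton matches the paper's: bound $\|\bs{F}^E_k\|_2$ by a linear combination of $\|\bs{r}^E_k\|_2$ and the subsampled gradient norm, then convert to moment bounds using the hypotheses of Theorem~\ref{thm:mse}. The differences are in the two technical steps, and both work in your favor. First, where the paper expands $\left(\sqrt{1+\gamma^2}\,\|\bs{r}^E_k\|_2 + \|\nabla V_{I_k}(\bs{\theta}^E_k)\|_2\right)^{p/2}$ by the binomial theorem and applies Cauchy--Schwarz to each cross term $\mathbb{E}\bigl[\|\nabla V_{I_k}\|_2^{p/2-i}\|\bs{r}^E_k\|_2^{i}\bigr]$ --- which forces it to invoke moments of order up to $p$ --- you use the elementary inequality $(a+b)^j \le 2^{j-1}(a^j+b^j)$, so you only ever need moments of order $j \le p/2$; your argument is thus slightly more economical in its use of the hypotheses. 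Second, and more substantively, the paper simply asserts that $\mathbb{E}\|\nabla V_{I_k}(\bs{\theta}^E_k)\|_2^l$ is bounded ``by assumption,'' even though the assumption in Theorem~\ref{thm:mse} is stated for each \emph{fixed} index $i$, while $I_k$ is random with a state-dependent (and, for EWSG, non-uniform) law; this gap is exactly what your pointwise domination $\|\nabla V_{I_k}\|_2^j \le \sum_{i=1}^n \|\nabla V_i\|_2^j$ closes, at the harmless cost of a factor of $n$ in the constant. So your write-up is not just correct but is, on this point, more rigorous than the paper's own proof; the paper's version could be repaired by inserting precisely your domination step.
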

\begin{proof}
It suffices to bound the highest moment, as all other lower order moments are bounded by the highest one by Holder's inequality.

First notice that 
\[
    \| \bs{F}^E_k\|_2 = \left\| \begin{bmatrix} \bs{r}^E_k \\ -\gamma \bs{r}^E_k - \nabla V_{I_k}(\bs{\theta}^E_k) \end{bmatrix} \right\|_2 \le  \sqrt{1 + \gamma^2} \|\bs{r}^E_k\|_2 + \|\nabla V_{I_k}(\theta^E_k)\|_2 
\]
Hence
\begin{align*}
    \mathbb{E} \| \bs{F}^E_k\|_2^{\frac{p}{2}} \le & \mathbb{E} \left(\sqrt{1 + \gamma^2} \|\bs{r}^E_k\|_2 + \|\nabla V_{I_k}(\theta^E_k)\|_2 \right)^\frac{p}{2} \\
    =& \mathbb{E} \left\{ \sum_{i=0}^{\frac{p}{2}} {\frac{p}{2} \choose i} \|\bs{r}^E_k\|_2^i \|\nabla V_{I_k}(\theta^E_k)\|_2 ^{\frac{p}{2}-i} \right\} \\
    =& \sum_{i=0}^{\frac{p}{2}} {\frac{p}{2} \choose i} \mathbb{E} \left[\|\nabla V_{I_k}(\theta^E_k)\|_2 ^{\frac{p}{2}-i}  \|\bs{r}^E_k\|_2^i \right] \\
    \le& \sum_{i=0}^{\frac{p}{2}} {\frac{p}{2} \choose i} \sqrt{\mathbb{E} \left[\|\nabla V_{I_k}(\theta^E_k)\|_2 ^{p-2i} \right]} \sqrt{\mathbb{E} \left[\|\bs{r}^E_k\|_2^{2i} \right]} \quad \mbox{(Cauchy-Schwarz inequality)}
\end{align*}
By assumption, we know each $\mathbb{E} \left[\|\nabla V_{I_k}(\theta^E_k)\|_2^l \right], \mathbb{E} \|\bs{r}^E_k\|^l_2, l=0,1,\cdots,p$ is bounded, so we conclude there exists a constant $M>0$ that bounds the $\frac{p}{2}$-th order moment of $\bs{F}^E_k, \forall k=0,1,\cdots,$
\end{proof}

Using Taylor's expansion for $\psi$, we have
\begin{equation*}
    \psi_{k+1} = \psi_k + D\psi_k [\bs{\delta}_k] + \frac{1}{2}D^2\psi_k[\bs{\delta}_k, \bs{\delta}_k] + \frac{1}{6}D^3\psi_k[\bs{\delta}_k, \bs{\delta}_k, \bs{\delta}_k] + R_{k+1}
\end{equation*}
where 
\[
    R_{k+1} = \left( \frac{1}{6}\int_0^1 s^3 D^4\psi(s\bs{X}^E_k + (1-s)\bs{X}^E_{k+1})ds \right)[\bs{\delta}_k, \bs{\delta}_k, \bs{\delta}_k, \bs{\delta}_k]
\]
is the remainder term. Therefore, we have
\begin{align}\label{eq:taylor}
    \psi_{k+1} =& \psi_k + h \mathcal{L}_k \psi_k + h^{\frac{1}{2}} D\psi_k [G_k\bs{\eta}_{k+1}] + h^{\frac{3}{2}} D^2\psi_k [\bs{F}^E_k, G_k\bs{\eta}_{k+1}] \\
    +& \frac{1}{2} h^2 D^2\psi_k [\bs{F}^E_k, \bs{F}^E_k] + \frac{1}{6}D^3\psi_k[\bs{\delta}_k, \bs{\delta}_k, \bs{\delta}_k] + r_{k+1} + R_{k+1} \nonumber
\end{align}
where
\[
    r_{k+1} = \frac{h}{2} \big(  D^2\psi_k [G_k\bs{\eta}_{k+1}, G_k\bs{\eta}_{k+1}] - A:\nabla\nabla \psi_k\big)
\]

Summing Equation \eqref{eq:taylor} ove the first $K$ terms, dividing by $Kh$ and use Poisson equation, we have
\begin{equation}
    \frac{1}{Kh} (\psi_K - \psi_0) = \frac{1}{K}\sum_{k=0}^{K-1}(\phi_k - \bar{\phi}) + \frac{1}{K}\sum_{k=0}^{K-1}\Delta\mathcal{L}_k \psi_k + \frac{1}{Kh} \sum_{i=1}^3 (M_{i,K} + S_{i, K}),
\end{equation}
where 
\begin{align*}
    M_{1, K} = \sum_{k=0}^{K-1} r_{k+1}, \, M_{2,K} = h^{\frac{1}{2}} \sum_{k=0}^{K-1} D\psi_k [G_k \bs{\eta}_{k+1}], \, M_{3,K} = h^{\frac{3}{2}} \sum_{k=0}^{K-1} D^2\psi_k [\bs{F}^E_k, G_k \bs{\eta}_{k+1}], \\
    S_{1,K} = \frac{h^2}{2} \sum_{k=0}^{K-1} D^2\psi_k [\bs{F}^E_k, \bs{F}^E_k], \, S_{2,K} = \sum_{k=0}^{K-1} R_{k+1}, \, S_{3,K} = \frac{1}{6}\sum_{k=0}^{K-1} D^3\psi_k[\bs{\delta}_k, \bs{\delta}_k, \bs{\delta}_k]
\end{align*}
Furthermore, it will be convenient to decompose
\[
    S_{3,K} = M_{0,K} + S_{0,K}
\]
where 
\begin{align*}
    S_{0,K} =& h^2 \sum_{k=0}^{K-1} \big( hD^3\psi_k [\bs{F}^E_k, \bs{F}^E_k, \bs{F}^E_k] + 3 D^3\psi_k[\bs{F}^E_k, G_k\bs{\eta}_{k+1}, G_k\bs{\eta}_{k+1}] \big) \\
    M_{0,K} =& h^{\frac{3}{2}} \sum_{k=0}^{K-1} \big( D^3\psi_k [G_k\bs{\eta}_{k+1}, G_k\bs{\eta}_{k+1}, G_k\bs{\eta}_{k+1}] + 3h D^3\psi_k[\bs{F}^E_k, \bs{F}^E_k, G_k\bs{\eta}_{k+1}] \big)
\end{align*}

Rearrange terms in Equation \eqref{eq:taylor}, square on both sides, use Cauchy-Schwarz inequality and take expectation, we have
\begin{align}
    \mathbb{E}\big( \hat{\phi}_K - \bar{\phi} \big)^2 \le& C \left[  \mathbb{E} \frac{(\psi_K - \psi_0)^2}{(Kh)^2} + \frac{1}{K^2} \mathbb{E} \left( \sum_{k=0}^{K-1}(\Delta \mathcal{L}_k \psi_k) \right)^2  + \frac{1}{(Kh)^2} \sum_{i=0}^2 \mathbb{E}S^2_{i,K} + \frac{1}{(Kh)^2} \sum_{i=0}^3 \mathbb{E}M^2_{i,K}  \right] \nonumber \\
    =& C \left[  \mathbb{E} \frac{(\psi_K - \psi_0)^2}{T^2} + \frac{1}{K^2} \mathbb{E} \left( \sum_{k=0}^{K-1}(\Delta \mathcal{L}_k \psi_k) \right)^2  + \frac{1}{T^2} \sum_{i=0}^2 \mathbb{E} S^2_{i,K} + \frac{1}{T^2} \sum_{i=0}^3 \mathbb{E} M^2_{i,K}  \right]
\end{align}
where $T = kh$, the corresponding time of the underlying continuous dynamics. 

We now show how each term is bounded. By boundedness of $\psi$, we have
\[
   \mathbb{E} \frac{(\psi_K - \psi_0)^2}{T^2} \le \frac{4\|\psi\|_\infty^2}{T^2} = \mathcal{O}(\frac{1}{T^2})
\] 

The second term $\frac{1}{K^2} \mathbb{E} \big( \sum_{k=0}^{K-1}(\Delta \mathcal{L}_k \psi_k) \big)^2$ is critical in showing the advantage of EWSG, and we will show how to derive its bound in detail later.

The technique we use to bound $\frac{1}{T^2} \mathbb{E} S_{i,K}^2, i=0,1,2$ are all similar, we will first show an upper bound for $|S_{i,K}|$ in terms of powers of $\|\bs{F}^E_k\|$, then take square and expectation, and finally expand squares and use Lemma \ref{lemma} extensively to derive bounds. As a concrete example, we will show how to bound $\frac{1}{T^2}\mathbb{E} S_{0, K}^2$. Other bounds follow in a similar fashion and details are omitted.

To bound the term containing $S_{0,K}$, we first note that
\begin{align*}
    |S_{0,K}| \le& h^2 \sum_{k=0}^{K-1} \big( h|D^3\psi_k [\bs{F}^E_k, \bs{F}^E_k, \bs{F}^E_k]| + 3 |D^3\psi_k[\bs{F}^E_k, G_k\bs{\eta}_{k+1}, G_k\bs{\eta}_{k+1}]| \big) \\
    \le & h^2 \|D^3\psi\|_\infty \sum_{k=0}^{K-1} \big( h \|\bs{F}^E_k\|_2^3 + 3 \|\bs{F}^E_k\|_2 \|G_k\bs{\eta}_{k+1}\|^2_2 \big)
\end{align*}
Square both sides of the above inequality and take expectation, we obtain
\begin{align}\label{eq:S_0}
    & \frac{1}{T^2}\mathbb{E} |S_{0,K}|^2 \\
    \le&  \frac{h^4}{T^2} \|D^3\psi\|_\infty^2 \mathbb{E}\big( \sum_{k=0}^{K-1} h \|\bs{F}^E_k\|_2^3 + 3 \|\bs{F}^E_k\|_2 \|G_k\bs{\eta}_{k+1}\|^2_2 \big)^2 \nonumber\\
    \le& \frac{h^4}{T^2} \|D^3\psi\|_\infty^2 K  \sum_{k=0}^{K-1} \mathbb{E}( h \|\bs{F}^E_k\|_2^3 + 3 \|\bs{F}^E_k\|_2 \|G_k\bs{\eta}_{k+1}\|^2_2)^2 \quad \mbox{(Cauchy-Schwarz inequality)} \nonumber\\
    =& \frac{h^4}{T^2} \|D^3\psi\|_\infty^2 K  \sum_{k=0}^{K-1} \mathbb{E}[ h^2 \|\bs{F}^E_k\|_2^6 + 6 \|\bs{F}^E_k\|_2^4 \|G_k\bs{\eta}_{k+1}\|^2_2 + 9 \|\bs{F}^E_k\|_2^2 \|G_k\bs{\eta}_{k+1}\|^2_4] \nonumber\\
    =& \frac{h^4}{T^2} \|D^3\psi\|_\infty^2 K  \sum_{k=0}^{K-1} h^2 \mathbb{E} \|\bs{F}^E_k\|_2^6 + 6\mathbb{E}\|\bs{F}^E_k\|_2^4 \mathbb{E}\|G_k\bs{\eta}_{k+1}\|^2_2 + 9 \mathbb{E}\|\bs{F}^E_k\|_2^2 \mathbb{E}\|G_k\bs{\eta}_{k+1}\|^2_4 \nonumber\\
    =& \frac{1}{T^2} \mathcal{O}(K^2h^4) \nonumber\\
    =&\mathcal{O}(h^2) \nonumber
\end{align}

To bound the term containing $S_{1,K}$ and $S_{2, K}$, we have
\begin{align*}
    |S_{1,K}| \le& \frac{h^2}{2} \sum_{k=0}^{K-1} \|D^2\psi\|_\infty \|\bs{F}^E_k\|_2^2  \\
    |S_{2,K}| \le& \frac{1}{24} \|D^4\psi\|_\infty \sum_{k=0}^{K-1}\|\bs{\delta}_k\|_2^4 
    \le \frac{1}{24} h^2 \|D^4\psi\|_\infty \sum_{k=0}^{K-1} \|\sqrt{h}\bs{F}^E_k + G_k \bs{\eta}_{k+1}\|_2^4
\end{align*}
Then we can obtain the following bound in a similar fashion as in Equation \eqref{eq:S_0}
\begin{align*}
    \frac{1}{T^2} \mathbb{E} S_{1,K}^2 =& \mathcal{O}(h^2) \\
    \frac{1}{T^2} \mathbb{E} S_{2,K}^2 =& \mathcal{O}(h^2)
\end{align*}

Now we will use martingale argument to bound $\frac{1}{T^2}\mathbb{E}M^2_{i,K}, i=0,1,2,3$.  There are two injected randomness at $k$-th iteration, the Gaussian noise $\bs{\eta}_{k+1}$ and the stochastic gradient term determined by the stochastic index $I_k$. Denote the sigma algebra at $k$-th iteration by $\mathcal{F}_k$. For both SGHMC and EWSG we have
\[
    \bs{\eta}_{k+1} \perp \mathcal{F}_k \mbox{ and } I_k \perp \bs{\eta}_{k+1}
\]
hence 
\begin{align*}
\mathbb{E} [\bs{\eta}_{k+1} | \mathcal{F}_k] =& \bs{0} \\ \mathbb{E}[D^3\psi_k[G_k\bs{\eta}_{k+1}, G_k\bs{\eta}_{k+1}, G_k\bs{\eta}_{k+1}]| \mathcal{F}_k] =& 0\\
\mathbb{E} [D^2\psi_k [\bs{F}^E_k, G_k\bs{\eta}_{k+1}]|\mathcal{F}_k] =& 0\\ \mathbb{E} [D^3\psi_k [\bs{F}^E_k, \bs{F}^E_k, G_k\bs{\eta}_{k+1}]|\mathcal{F}_k] =& 0
\end{align*}

Therefore, it is clear that $M_{i,K}, i=0,1,2,3$ are all martingales. Due to martingale properties, we have
\begin{align*}
    \frac{1}{T^2} \mathbb{E}M_{0,K}^2 =& \frac{h^3}{T^2} \sum_{k=0}^{K-1} \mathbb{E} \big( D^3\psi_k [G_k\bs{\eta}_{k+1}, G_k\bs{\eta}_{k+1}, G_k\bs{\eta}_{k+1}] + 3h D^3\psi_k[\bs{F}^E_k, \bs{F}^E_k, G_k\bs{\eta}_{k+1}] \big)^2\\
    =& \frac{1}{T^2}\mathcal{O}(h^3K) = \mathcal{O}(\frac{h^2}{T}) \\
    \frac{1}{T^2}\mathbb{E} M_{1, K}^2 =& \frac{1}{T^2}\sum_{k=0}^{K-1} \mathbb{E} r_{k+1}^2 = \frac{1}{T^2}\mathcal{O}(h^2 K) = \mathcal{O}(\frac{h}{T})\\
    \frac{1}{T^2}\mathbb{E} M_{2,K}^2 =& \frac{h}{T^2}\sum_{k=0}^{K-1} \mathbb{E} (D\psi_k [G_k \bs{\eta}_{k+1}])^2 = \frac{1}{T^2}\mathcal{O}(hK) = \mathcal{O}(\frac{1}{T})\\
    \frac{1}{T^2}\mathbb{E} M^2_{3,K} =& \frac{1}{T^2}h^3 \sum_{k=0}^{K-1} \mathbb{E}(D^2\psi_k [\bs{F}^E_k, G_k \bs{\eta}_{k+1}])^2 = \frac{1}{T^2}\mathcal{O}(h^3 K) = \mathcal{O}(\frac{h^2}{T})
\end{align*}

We now collect all bounds derived so far and obtain
\begin{align} \label{eq:final}
    \mathbb{E}\big( \hat{\phi}_K - \bar{\phi} \big)^2 \le& C \left[\mathcal{O}(\frac{1}{T^2}) + \frac{1}{K^2} \mathbb{E} \left( \sum_{k=0}^{K-1}(\Delta \mathcal{L}_k \psi_k) \right)^2 + \mathcal{O}(h^2) + \mathcal{O}(\frac{h}{T}) + \mathcal{O}(\frac{1}{T}) + \mathcal{O}(\frac{h^2}{T})\right] \nonumber\\
    \le& C \left[ \mathcal{O}(\frac{1}{T}) + \frac{1}{K^2} \mathbb{E} \left( \sum_{k=0}^{K-1}(\Delta \mathcal{L}_k \psi_k) \right)^2 + \mathcal{O}(h^2)\right] 
\end{align}
In the above inequality, we use $\frac{1}{T^2} < \frac{1}{T}$ and $\frac{h}{T} \le \frac{1}{T}, \frac{h^2}{T} \le \frac{1}{T}$ as typically we assume $T \gg 1$ and $h \ll 1$ in non-asymptotic analysis.

Now we focus on the remaining term $ \frac{1}{K^2} \mathbb{E} \big( \sum_{k=0}^{K-1}\Delta \mathcal{L}_k \psi_k \big)^2$. 
For SGHMC, we have that $\mathbb{E}[\Delta\mathcal{L}_k \psi_k|\mathcal{F}_k]= 0$, hence $\sum_{k=0}^{K-1}\Delta \mathcal{L}_k \psi_k $ is a martingale. By martingale property, we have
\[
    \frac{1}{K^2} \mathbb{E} \left( \sum_{k=0}^{K-1}\Delta \mathcal{L}_k \psi_k \right)^2 = \frac{1}{K^2} \sum_{k=0}^{K-1} \mathbb{E}(\Delta \mathcal{L}_k \psi_k)^2
\]
For EWSG, $\sum_{k=0}^{K-1}\Delta \mathcal{L}_k \psi_k $ is no longer a martingale, but we still have the following
\begin{align}
    \frac{1}{K^2} \mathbb{E} \left( \sum_{k=0}^{K-1}\Delta \mathcal{L}_k \psi_k \right)^2 =& \frac{1}{K^2} \sum_{k=0}^{K-1} \mathbb{E}(\Delta\mathcal{L}_k \psi_k)^2 + \frac{2}{K^2} \sum_{i < j} \mathbb{E} (\Delta\mathcal{L}_i \psi_i) (\Delta\mathcal{L}_j \psi_j) \nonumber\\
    =& \frac{1}{K^2} \sum_{k=0}^{K-1} \mathbb{E}(\Delta\mathcal{L}_k \psi_k)^2 + \frac{2}{K^2} \sum_{i < j} \mathbb{E}[  (\Delta\mathcal{L}_i \psi_i) \mathbb{E} [\Delta\mathcal{L}_j \psi_j | \mathcal{F}_j]]  \label{eq:cross_term}
\end{align}

For the term $\mathbb{E} [\Delta\mathcal{L}_j \psi_j | \mathcal{F}_j]$, we have
\[
    \mathbb{E} [\Delta\mathcal{L}_j \psi_j | \mathcal{F}_j]
    = \mathbb{E}[\langle \nabla V(\bs{\theta}^E_j) - n\nabla V_{I_j}(\bs{\theta}^E_j), \nabla_{\bs{r}}\psi_j\rangle | \mathcal{F}_j]
    = \langle \mathbb{E}[\nabla V(\bs{\theta}^E_j) - n\nabla V_{I_j}(\bs{\theta}^E_j) | \mathcal{F}_j], \nabla_{\bs{r}}\psi_j\rangle
\]
as $\psi_j \in \mathcal{F}_j$. Then by Cauchy-Schwarz inequality, boundedness of $\psi$ and the fact $\| \nabla V(\bs{\theta}^E_j) - \mathbb{E}[n\nabla V_{I_j}(\bs{\theta}^E_j) | \mathcal{F}_j] \|_2 = \mathcal{O}(h)$ as shown in the proof of Theorem \ref{thm:local}, we conclude $\mathbb{E} [\Delta\mathcal{L}_j \psi_j | \mathcal{F}_j] = \mathcal{O}(h)$.

Now plug the above result in Equation \eqref{eq:cross_term}, we have
\begin{align*}
    \frac{1}{K^2} \mathbb{E} \left( \sum_{k=0}^{K-1}\Delta \mathcal{L}_k \psi_k \right)^2 =& \frac{1}{K^2} \sum_{k=0}^{K-1} \mathbb{E}(\Delta\mathcal{L}_k \psi_k)^2 + \frac{2}{K^2} \sum_{i < j} \mathbb{E}[  (\Delta\mathcal{L}_i \psi_i) \mathbb{E} [\Delta\mathcal{L}_j \psi_j | \mathcal{F}_j]] \\
    =& \frac{1}{K^2} \sum_{k=0}^{K-1} \mathbb{E}(\Delta\mathcal{L}_k \psi_k)^2 + \frac{2}{K^2} \sum_{i < j} \mathbb{E}[\Delta\mathcal{L}_i \psi_i]  \mathcal{O}(h) \\
    =& \frac{1}{K^2} \sum_{k=0}^{K-1} \mathbb{E}(\Delta\mathcal{L}_k \psi_k)^2 + \frac{2}{K^2} \sum_{i < j}  \mathcal{O}(h^2) \\
    =& \frac{1}{K^2} \sum_{k=0}^{K-1} \mathbb{E}(\Delta\mathcal{L}_k \psi_k)^2 + \frac{2}{K^2} \sum_{i < j}  \mathcal{O}(h^2) \\
    =& \frac{1}{K^2} \sum_{k=0}^{K-1} \mathbb{E}(\Delta\mathcal{L}_k \psi_k)^2 + \mathcal{O}(h^2)
\end{align*}

Combine both cases of SGHMC and EWSG, we obtain
\[
    \frac{1}{K^2} \mathbb{E} \left( \sum_{k=0}^{K-1}\Delta \mathcal{L}_k \psi_k \right)^2 = \frac{1}{K^2} \sum_{k=0}^{K-1} \mathbb{E}(\Delta\mathcal{L}_k \psi_k)^2 + \mathcal{O}(h^2)
\]
Note that $\mathcal{O}(h^2)$ term will later be combined with other error terms with the same order.

The final piece is to bound $\frac{1}{K^2} \sum_{k=0}^{K-1} \mathbb{E}(\Delta\mathcal{L}_k \psi_k)^2$, and we have
\begin{align*}
    \frac{1}{K^2} \sum_{k=0}^{K-1} \mathbb{E}(\Delta\mathcal{L}_k \psi_k)^2
    &= \frac{1}{K^2}\sum_{k=0}^{K-1} \mathbb{E}\langle \nabla V(\bs{\theta}^E_k) - n\nabla V_{I_k}(\bs{\theta}^E_k), \nabla_{\bs{r}}\psi_k\rangle^2\\ 
    &\le \frac{1}{K^2}\sum_{k=0}^{K-1} \mathbb{E}[\|\nabla V(\bs{\theta}^E_k) - n\nabla V_{I_k}(\bs{\theta}^E_k) \|^2_2 \cdot  \|\nabla_{\bs{r}}\psi_k\|^2_2] \quad \mbox{(Cauchy-Schwarz inequality)} \\
    &\le \frac{M_3^2}{K^2}\sum_{k=0}^{K-1} \mathbb{E}[\|\nabla V(\bs{\theta}^E_k) - n\nabla V_{I_k}(\bs{\theta}^E_k) \|^2_2] \\
    &= \frac{M_3^2}{K^2}\sum_{k=0}^{K-1} \mathbb{E}[\mathbb{E}[\|\nabla V(\bs{\theta}^E_k) - n\nabla V_{I_k}(\bs{\theta}^E_k) \|^2_2 \, | \, \mathcal{F}_k]] \\
    &\le \frac{2M_3^2}{K^2}\sum_{k=0}^{K-1} \mathbb{E}[\, \underbrace{\mathbb{E}[ \, \|\nabla V(\bs{\theta}^E_k) - \mathbb{E}[n\nabla V_{I_k}(\bs{\theta}^E_k) \,|\, \mathcal{F}_k] \|^2 \, | \, \mathcal{F}_k]}_{Q_1} ] \\
    &+ \underbrace{\mathbb{E} [ \, \| \mathbb{E}[n\nabla V_{I_k}(\bs{\theta}^E_k) \,|\, \mathcal{F}_k] - n\nabla V_{I_k}(\bs{\theta}^E_k)\|^2_2 \, | \, \mathcal{F}_k]}_{Q_2}]
\end{align*}

The term $Q_1$ captures the bias of stochastic gradient. For SGHMC, uniform gradient subsamping leads to an unbiased gradient estimator, so $Q_1 = 0$ for SGHMC. For EWSG, same as in the proof of Theorem 2, we have that 
\[
    \mathbb{E}\left[ \, \|\nabla V(\bs{\theta}^E_k) - \mathbb{E}[n\nabla V_{I_k}(\bs{\theta}^E_k) \,|\, \mathcal{F}_k] \|^2 \, | \, \mathcal{F}_k\right] = \mathcal{O}(h^2)
\]
Combining two cases, we have 
\[
    Q_1 = \mathcal{O}(h^2)
\]

For a random vector $\bs{v}$ with mean $\mathbb{E}[\bs{v}] = \bs{0}$, we have 
\[
    \mathbb{E} [\|\bs{v}\|^2] = \mathbb{E}\left[\Tr[\bs{v}\bs{v}^T]\right] = \Tr \left[\mathbb{E} [\bs{v}\bs{v}^T] \right] = \Tr\left[\mbox{cov}(\bs{v})\right]
\]
where $\mbox{cov}(\bs{v})$ is the covariance matrix of random vector $\bs{v}$. Therefore, we have that 
\[
    Q_2 = \Tr\left[\mbox{cov}(n\nabla V_{I_k} | \mathcal{F}_k)\right],
\]
i.e., $Q_2$ is the trace of the covariance matrix of stochastic gradient estimate conditioned on current filtration $\mathcal{F}_k$.

Combining $Q_1$ and $Q_2$, we have that 
\begin{align*}
    \frac{1}{K^2} \mathbb{E} \left( \sum_{k=0}^{K-1}\Delta \mathcal{L}_k \psi_k \right)^2 \le& \frac{2M_3^2}{K^2} \sum_{k=0}^{K-1} \left[\mathbb{E}[\Tr[\mbox{cov}(n\nabla V_{I_k} | \mathcal{F}_k)]] + \mathcal{O}(h^2)\right] \\
    =& \frac{2M_3^2 h}{T} \frac{\sum_{k=0}^{K-1} \mathbb{E}[\Tr[\mbox{cov}(n\nabla V_{I_k} | \mathcal{F}_k)]]}{K} + \mathcal{O}(\frac{h^3}{T})
\end{align*}

Now plug this bound into Equation \eqref{eq:final} and we obtain
\[
    \mathbb{E}\big( \hat{\phi}_K - \bar{\phi} \big)^2
    \le C_1 \frac{1}{T} + C_2 \frac{h}{T} \frac{\sum_{k=0}^{K-1} \mathbb{E}\left[\Tr[\mbox{cov}(n\nabla V_{I_k} | \mathcal{F}_k)]\right]}{K} + C_3 h^2
\]
for some constants $C_1, C_2, C_3 > 0$ depending on $M_1, M_2, M_3$.
\end{proof}

\section{Mini Batch Version of EWSG}\label{ewsg:minibatch}
When mini batch size $b > 1$, for each mini batch $\{i_1, i_2, \cdots, i_b\}$, we use $\frac{n}{b}\sum_{j=1}^b \nabla V_{i_j}$ to approximate full gradient $\nabla V$, and assign the mini batch $\{i_1, i_2, \cdots, i_b\}$ probability $p_{i_1i_2,\cdots,i_b}$. We can easily extend the transition probability of $b=1$ to general $b$, simply by replacing $n\nabla V_i$ with $\frac{n}{b}\sum_{j=1}^b \nabla V_{i_j}$ and end up with
\[
    \tilde{P}(\bs{\theta}_{k+1}, \bs{r}_{k+1} | \bs{\theta}_k, \bs{r}_k) = \delta(\bs{\theta}_{k+1} = \bs{\theta}_k + \bs{r}_k h)\times\]
    \[ \sum_{i_1, i_2, \cdots, i_b} p_{i_1 i_2 \cdots i_b} \Phi \left(\bs{x} + n\bs{a}_{i_1 i_2 \cdots i_b}\right) \frac{1}{\sigma \sqrt{h}}
\]
where 
\[ \bs{x} = \frac{\bs{r}_{k+1} - \bs{r}_k + h\gamma \bs{r}_k}{\sigma \sqrt{h}},
\quad \bs{a}_{i_1 i_2 \cdots i_b}= \frac{\sqrt{h}}{\sigma} \frac{1}{b}\sum_{j=1}^b \nabla V_{i_j}(\bs{\theta}_k)
\]
Therefore, to match the transition probability of underdamped Langevin dynamics with stochastic gradient and full gradient, we let $p_{i_1 i_2 \cdots i_b} =$
{\small
\[
\displaystyle \frac{1}{Z} \exp\left\{\frac{1}{2} \left[\|\bs{x} + n \bs{a}_{i_1 i_2 \cdots i_b}\|^2 - \|\bs{x} + \displaystyle\sum_{i_1 i_2 \cdots i_b} \bs{a}_{i_1 i_2 \cdots i_b}\|^2 \right] \right\}\]
}
where $Z$ is a normalization constant.

To sample multidimensional random data indices $I_1, \cdots, I_b$ from $p_{i_1 i_2 \cdots i_b}$, we again use a Metropolis chain, whose acceptance probability only depends on $a_{i_1 i_2 \cdots i_b}$ and $a_{j_1 j_2 \cdots j_b}$ but not the full gradient.

\section{EWSG Version for Overdamped Langevin} \label{ewsg:sgld}
Overdamped Langevin equation is the following SDE
\[
    d\bs{\theta}_t = -\nabla V(\bs{\theta}_t) dt + \sqrt{2} d\bs{B}_t
\]
where $V(\bs{\theta}) = \sum_{i=1}^n V_i(\bs{\theta})$ and $B_t$ is a $d$-dimensional Brownian motion. The Euler-Maruyama discretization is 
\[
    \bs{\theta}_{k+1} = \bs{\theta}_k - h \nabla V(\bs{\theta}_k) + \sqrt{2h} \bs{\xi}_{k+1}
\]
where $\bs{\xi}_{k+1}$ is a $d$-dimensional random Gaussian vector. When stochastic gradient is used, the above numerical schedme turns to
\[
        \bs{\theta}_{k+1} = \bs{\theta}_k - h \nabla V_{I_k}(\bs{\theta}_k) + \sqrt{2h} \bs{\xi}_{k+1}
\]
where $I_k$ is the datum index used in $k$-th iteration to estimate the full gradient.

Denote $\bs{x} = \frac{\bs{\theta}_{k+1} - \bs{\theta}_k}{\sqrt{2h}}$ and $\bs{a}_i = \frac{\sqrt{h}\nabla V_i(\bs{\theta}_k)}{\sqrt{2}}$. If we set
\[
    p_i = \mathbb{P}(I_k = i) \propto\exp \big\{ -\frac{\|\bs{x} + \sum_{j=1}^n \bs{a}_j\|^2}{2} + \frac{\|\bs{x} + n\bs{a}_i\|^2}{2} \big\}
\]
and follow the same steps in Sec.\ref{sec:method:EWSG}, we will see the transition kernel of the full gradient method being approximated by that of the stochastic gradient version.

\section{Variance Reduction (VR)}\label{variance reduction}
We have seen that when step size $h$ is large, EWSG still introduces extra variance. To further  mitigate this inaccuracy, we provide in this section a complementary variance reduction technique.

Locally (i.e., conditioned on the state of the system at the current step),  we have increased variance
\begin{align}
  \mbox{cov}[\bs{r}_{k+1}|\bs{r}_k] &= \mathbb{E}[\mbox{cov}[\bs{r}_{k+1}|I]] + \mbox{cov}[\mathbb{E}[\bs{r}_{k+1}|I]]  \nonumber\\
  &= h(\Sigma_{k+1}^2 + h \, \mbox{cov}[n\nabla V_I(\bs{\theta}_k)])
  \label{eq:varianceAccumulation}
\end{align}
where $\Sigma^2_{k+1} = \frac{1}{h} \mathbb{E}[\mbox{cov}[\bs{r}_{k+1}|I]]$.
The extra randomness due to the randomness of the index $I$ enters the parameter space through the coupling of $\bs{\theta}$ and $\bs{r}$ and eventually deviates the stationary distribution from that of the original dynamics. Adopting the perspective of modified equation \citep{borkar1999strong,mandt2017stochastic,li2017stochastic}, we model this as an enlarged diffusion coefficient. To correct for this enlargement and still sample from the correct distribution, we can either, in each step, shrink the size of intrinsic noise to $\Sigma_{k} \in \mathbb{R}^{d \times d}$ such that
$
\sigma^2 I = \Sigma_{k}^2 + h\mbox{cov}[n \nabla V_I(\bs{\theta}_{k-1})]
$,
or alternatively increase the dissipation. More precisely, due to the matrix version fluctuation dissipation theorem $\Sigma^2 = 2\Gamma T$, one could instead increase the friction coefficient $\Gamma \in \mathbb{R}^{d \times d}$ rather than shrinking the intrinsic noise. The second approach is computationally more efficient because it no longer requires square-rooting / Cholesky decomposition of (possibly large-scale) matrices. Therefore, in each step, we set 
\[
  \Gamma_k = \frac{1}{2T}(\sigma^2 I + h\mbox{cov}[n\nabla V_I(\bs{\theta}_{k-1})]).
\]

Accurately computing $\mbox{cov}[n\nabla V_I(\bs{\theta}_{k-1})]$ is expensive as it requires running $I$ through ${1,\cdots,n}$, which defeats the purpose of introducing a stochastic gradient. To downscale the computation cost from $\mathcal{O}(n)$ to $\mathcal{O}(1)$, we use an SVRG type estimation of the this variance instead. More specifically, we periodically compute $\mbox{cov}[n\nabla V_I(\bs{\theta}_{k-1})]$ only every $L$ data passes, in an outer loop. In every  iteration of an inner loop, which integrates the Langevin, an estimate of $\mbox{cov}[n\nabla V_I(\bs{\theta}_{k-1})]$ is updated in an SVRG fashion.
sof
See Algorithm~\ref{alg:EWSG-VR} for detailed description. We refer variance reduced variant of EWSG as EWSG-VR.

\begin{algorithm}[h]
  \caption{EWSG-VR}\label{alg:EWSG-VR}
\begin{algorithmic}[1]
  \STATE {\bfseries Input:} \{number of data terms $n$, gradient functions $\nabla V_i(\cdot)$, step size $h$, number of data passes K, period of variance calibration $L$, index chain length $M$, friction and noise coefficients $\gamma$ and $\sigma$\}
  \STATE initialize $\bs{\theta}_0, \bs{r}_0, \gamma_0 = \gamma$
  \STATE initialize inner loop index $k=0$
  \FOR{$l = 1, 2, \cdots, K$}
  \IF{$(l-1) \mod L = 0$}
  \STATE compute $\bs{m}_1 \gets \mathbb{E}_I[n\nabla V_I(\bs{\theta}_k)]$,\quad $\bs{m}_2 \gets \mathbb{E}_I[n^2\nabla V_I(\bs{\theta}_k) \nabla V_I(\bs{\theta}_k)^T]$
  \STATE $\bs{\omega} \gets \bs{\theta}_k$
  \ELSE 
  \FOR{$ t = 1,2, \cdots, \lceil \frac{n}{M + 1}\rceil$}
  \STATE $i \gets$ uniformly sampled from ${1,\cdots,n}$, \, compute and store $n\nabla V_i(\bs{\theta}_k)$
    \FOR{$m = 1, 2, \cdots, M$}
  \STATE $j \gets$ uniformly sampled from ${1,\cdots,n}$, \, compute and store $n\nabla V_j(\bs{\theta}_k)$
  \STATE $i \gets j$ with probability in Equation \ref{eq:transition}
  \ENDFOR
    \STATE update $(\bs{\theta}_{k+1}, \bs{r}_{k+1}) \gets (\bs{\theta}_k, \bs{r}_k)$ according to Equation \ref{EM}, using $n\nabla V_i(\bs{\theta}_k)$ as gradient and $\Gamma_k$ as friction 
  \STATE $\bs{m}_1 \gets \bs{m}_1 + \nabla V_i(\bs{\theta}_k) - \nabla V_i(\bs{\omega})$
  \STATE $\bs{m}_2 \gets \bs{m}_2 + n\nabla V_i(\bs{\theta}_k)\nabla V_i(\bs{\theta}_k)^T - n\nabla V_i(\bs{\omega})\nabla V_i(\bs{\omega})^T$
  \STATE $\mbox{covar} \gets \bs{m}_2 - \bs{m}_1 \bs{m}_1^T$
  \STATE $\Gamma_{k+1} \gets \frac{1}{2T}(\sigma^2 \text{I} + h \, \mbox{covar})$
  \STATE $k \gets k + 1$
  \ENDFOR
  \ENDIF
  \ENDFOR
\end{algorithmic}
\end{algorithm}

\begin{figure}[h]
    \centering
    \includegraphics[width=0.4\columnwidth]{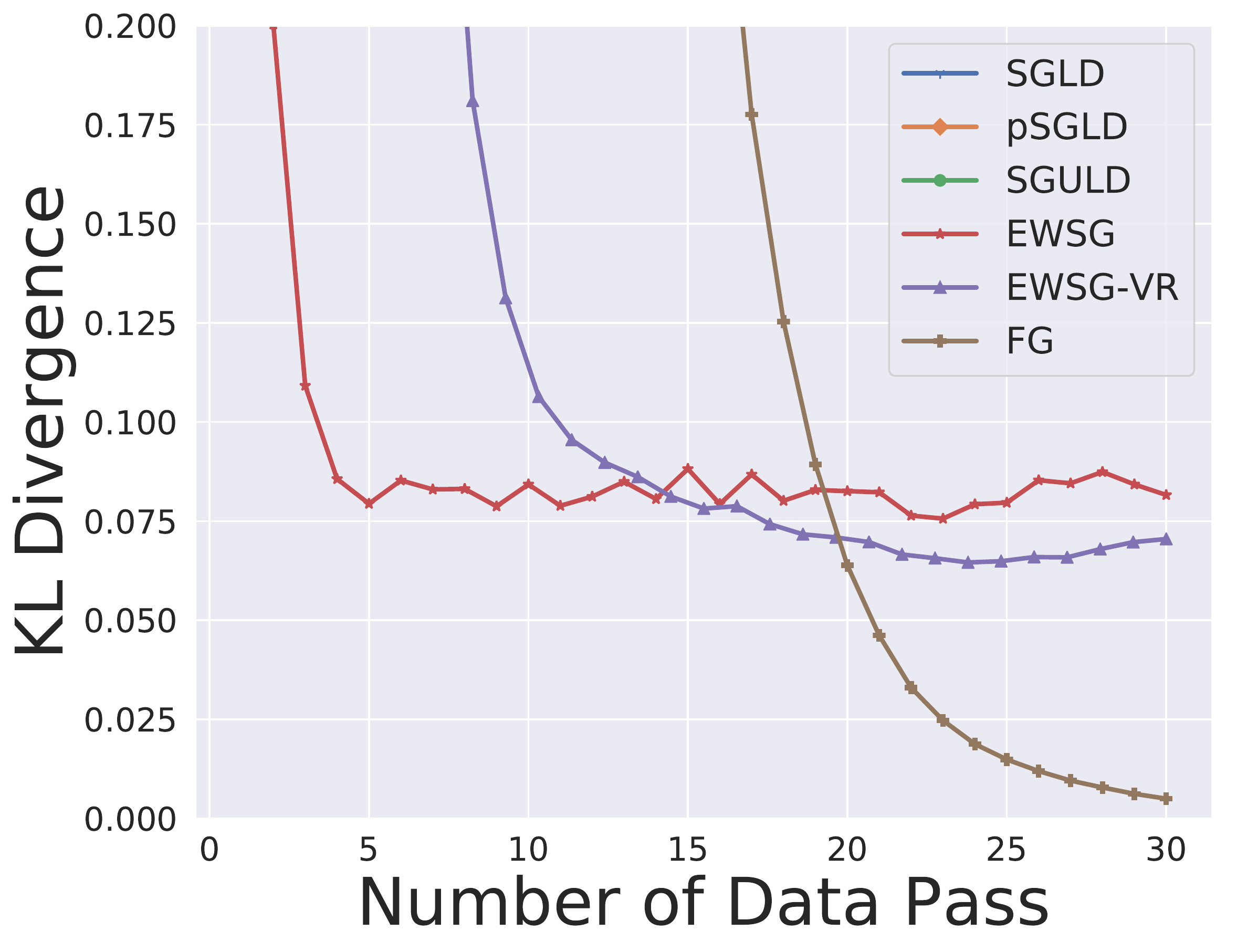}
    \caption{KL divergence}\label{fig:supp_gaussian}
\end{figure}

To demonstrate the performance of EWSG-VR, we reuse the setup of simple Gaussian example in subsection \ref{subsec:simple_gaussian}. As shown in Algorithm \ref{alg:EWSG-VR}, the only hyper-parameter of EWSG-VR additional to EWSG is the period of variance calibration, for which we set $L =1$. All other hyper-parameters (e.g. step size $h$, friction coefficient  $\gamma$) are set the same as EWSG. We also run underdamped Langevin dynamics with full gradient (FG) using the same hyper-parameters of EWSG. We plot the KL divergence in Figure \ref{fig:supp_gaussian}. We see that EWSG-VR further reduces variance and achieves better statistical accuracy measured in KL divergence. Although EWSG-VR periodically use full data set to calibrate variance estimation, it is still significantly faster than the full gradient version. Note that KL divergence of SGLD, pSGLD and SGHMC are too large so that we can not even see them in Figure \ref{fig:supp_gaussian}

We also consider applying EWSG-VR to Bayesian logistic regression problems. We run experiments on two standard classification data sets \texttt{parkinsons} \footnote{https://archive.ics.uci.edu/ml/datasets/parkinsons}, \texttt{pima}\footnote{https://archive.ics.uci.edu/ml/datasets/diabetes} from UCI repository \citep{lichman2013uci}. 

\begin{figure}[h]
    \begin{subfigure}[t]{0.49\textwidth}
		\centering
		\includegraphics[width=\textwidth]{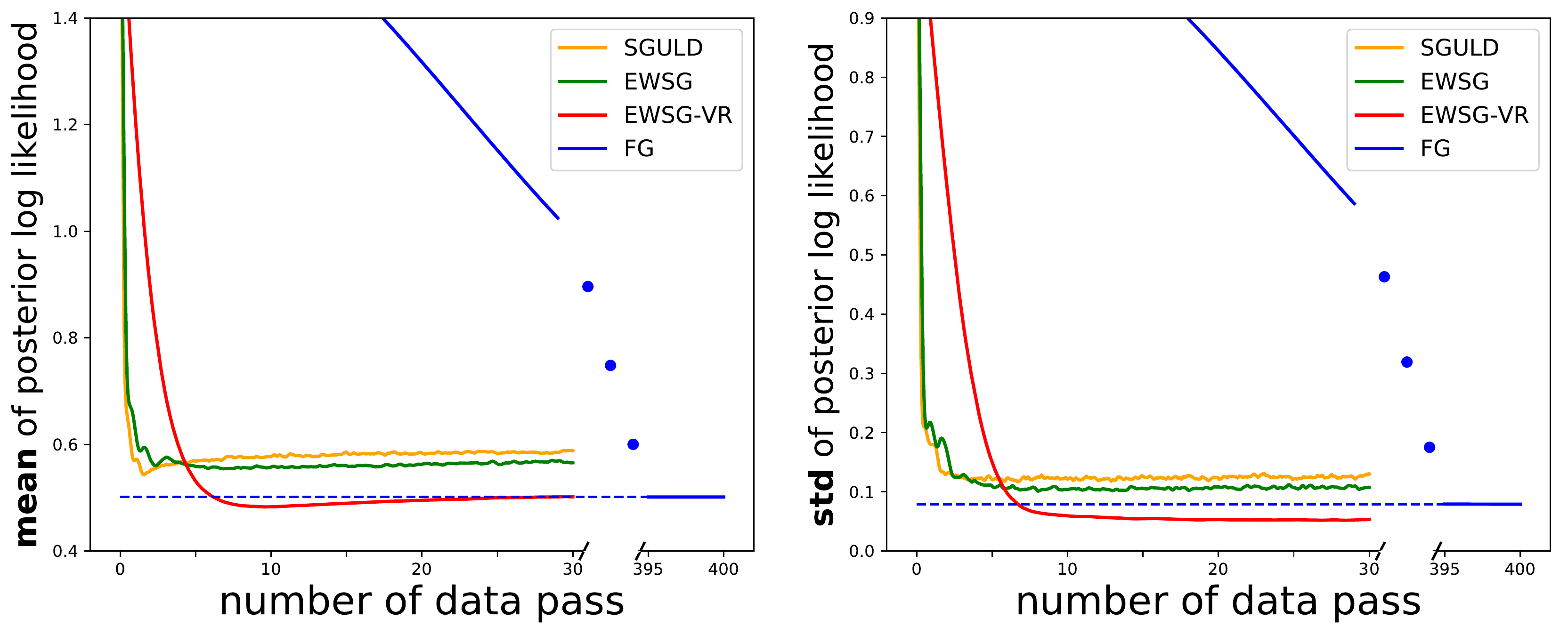}
		\caption{parkinsons} \label{parkinsons}	
	\end{subfigure}
    \begin{subfigure}[t]{0.49\textwidth}
		\centering
		\includegraphics[width=\textwidth]{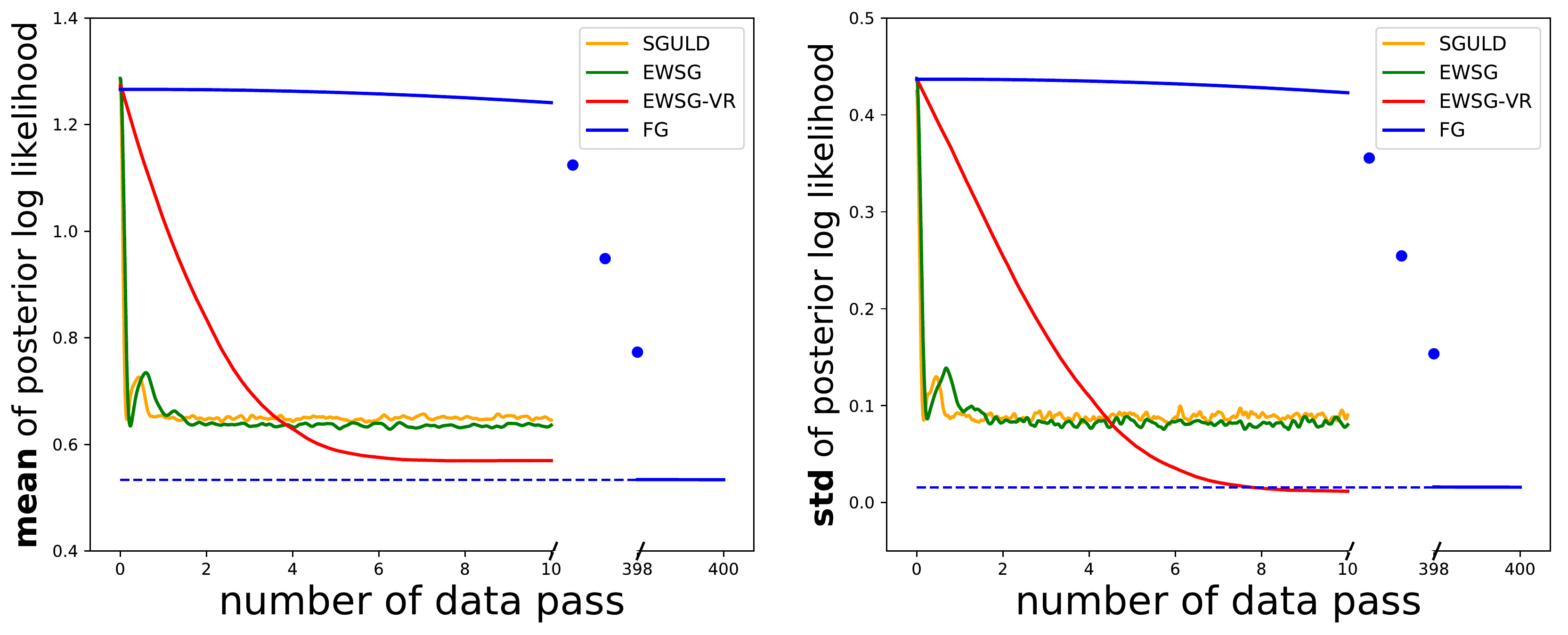}
		\caption{pima} \label{pima}	
	\end{subfigure}

    \caption{Posterior prediction of mean (\textit{left}) and standard deviation (\textit{right}) of log likelihood on test data set generated by SGHMC, EWSG and EWSG-VR on two Bayesian logistic regression tasks. Statistics are computed based on 1000 independent simulations. Minibatch size $b=1$ for all methods except FG. $M=1$ for EWSG and EWSG-VR.}\label{fig:classficiation}
\end{figure}

From Figure \ref{fig:classficiation}, we see stochastic gradient methods (SGHMC, EWSG and EWSG-VR) only take tens of data passes to converge while full gradient version (FG) requires hundreds of data passes to converge. Compared with SGHMC, EWSG produces closer results to FG for which we treat as ground truth, in terms of statistical accuracy. With variance reduction, EWSG-VR is able to achieve even better performance, significantly improving the accuracy of the prediction of mean and standard deviation of log likelihood. It, however, converges slower than EWSG without VR.

One downside of EWSG-VR is that it periodically use whole data set to calibrate variance estimation, so it may not be suitable for very large data sets (e.g. Covertype data set used in subsection \ref{subsec:blr}) for which stochastic gradient methods could converge within one data pass.

\section{Additional Experiments} \label{sec:additional_exp}
\subsection{A Misspecified Gaussian Case} \label{subsec:misspecified}
In this subsection, we follow the same setup as in \citep{bardenet2017markov} and study a misspecified Gaussian model where one fits a one-dimensional normal distribution $p(\theta) = \mathcal{N}(\theta | \mu_0, \sigma_0^2)$ to $10^5$ i.i.d points drawn according to $X_i \sim \log \mathcal{N}(0, 1)$, and flat prior is assigned $p(\mu_0, \log \sigma_0) \propto 1$. It was shown in \citep{bardenet2017markov} that FlyMC algorithm behaves erratically in this case, as ``bright" data points with large values are rarely updated and they drive samples away from the target distribution. Consequently the chain mixes very slowly. One important commonality FlyMC shares with EWSG is that in each iteration, both algorithms select a subset of data in a non-uniform fashion. Therefore, it is interesting to investigate the performance of EWSG in this misspecified model.

For FlyMC\footnote{https://github.com/rbardenet/2017\-JMLR-MCMCForTallData}, a tight lower bound based on Taylor's expansion is used to minimize ``bright" data points used per iteration. At each iteration, 10\% data points are resampled and turned ``on/off" accordingly and the step size is adaptively adjusted. FlyMC algorithm is run for 10000 iterations. Figure \ref{fig:histogram} shows the histogram of number of data points used in each iteration for FlyMC algorithm. On average, FlyMC consumes $10.9\%$ of all data points per iteration. For fair comparison, the minibatch size of EWSG is hence set $10^5 \times 10.9\% = 10900$ and we run EWSG for 1090 data passes. We set step size $h=1\times10^{-4}$ and friction coefficient $\gamma=300$ for EWSG. An isotropic random walk Metropolis Hastings (MH) is also run for sufficiently long and serves as the ground truth. 

Figure \ref{fig:autocorrelation} shows the autocorrelation of three algorithms. The autocorrelation of FlyMC decays very slowly, samples that are even 500 iterations away still show strong correlation. The autocorrelation of EWSG, on the other hand, decays much faster, suggesting EWSG explores parameter space efficiently than FlyMC does. Figure \ref{fig:EWSG_samples} and \ref{fig:FlyMC_samples} show the samples (the first 1000 samples are discarded as burn-in) generated by EWSG and FlyMC respectively. The samples of EWSG center around the mode of the target distribution while the samples of FlyMC are still far away from the true posterior. The experiment shows EWGS works quite well even in misspecified models, and hence is an effective candidate in combining importance sampling with scalable Bayesian inference.

\begin{figure}[h!]
    \begin{subfigure}[t]{0.45\textwidth}
		\centering
		\includegraphics[width=\textwidth]{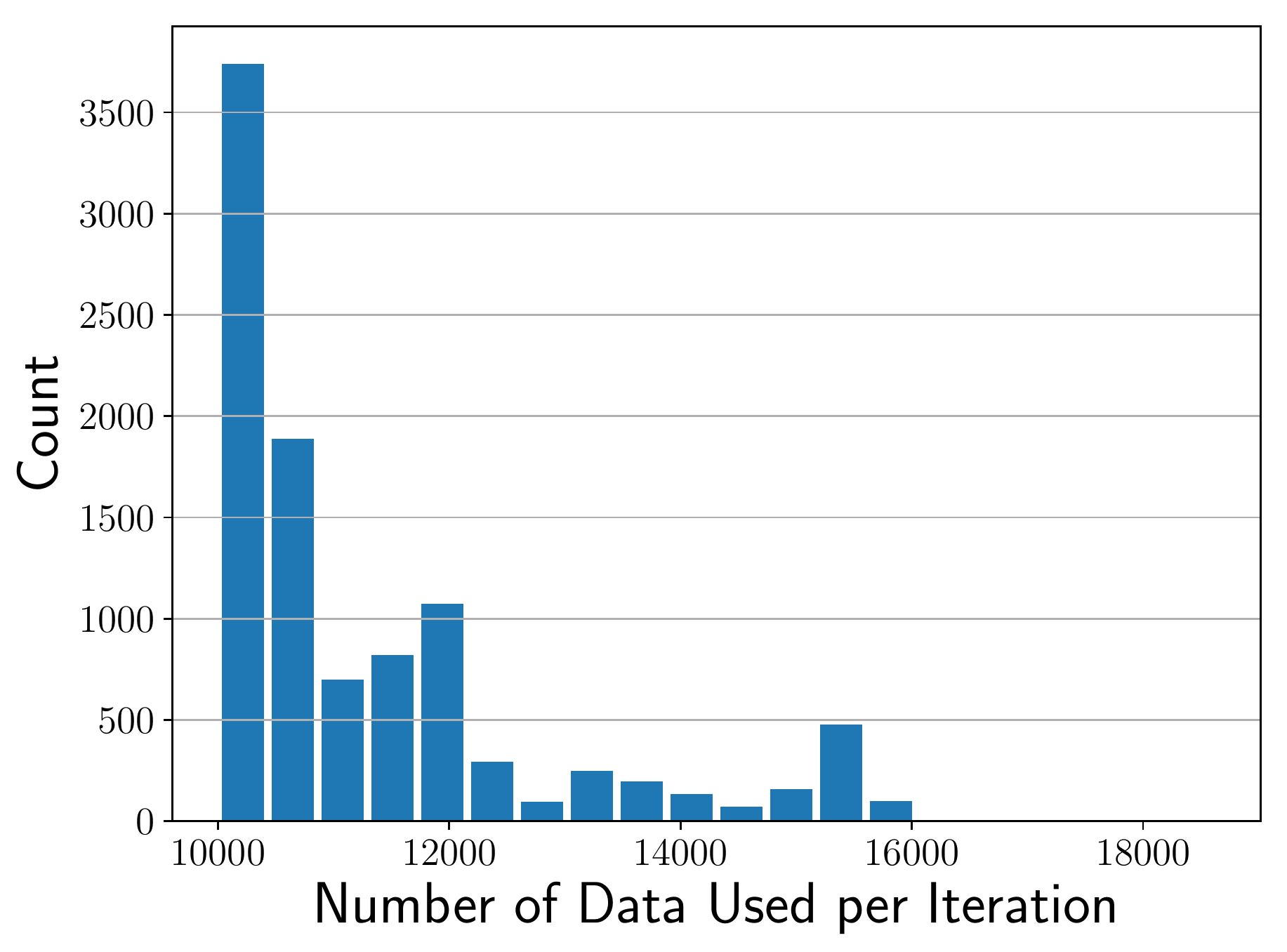}
		\caption{Histogram} \label{fig:histogram}
	\end{subfigure}
    \begin{subfigure}[t]{0.45\textwidth}
		\centering
		\includegraphics[width=\textwidth]{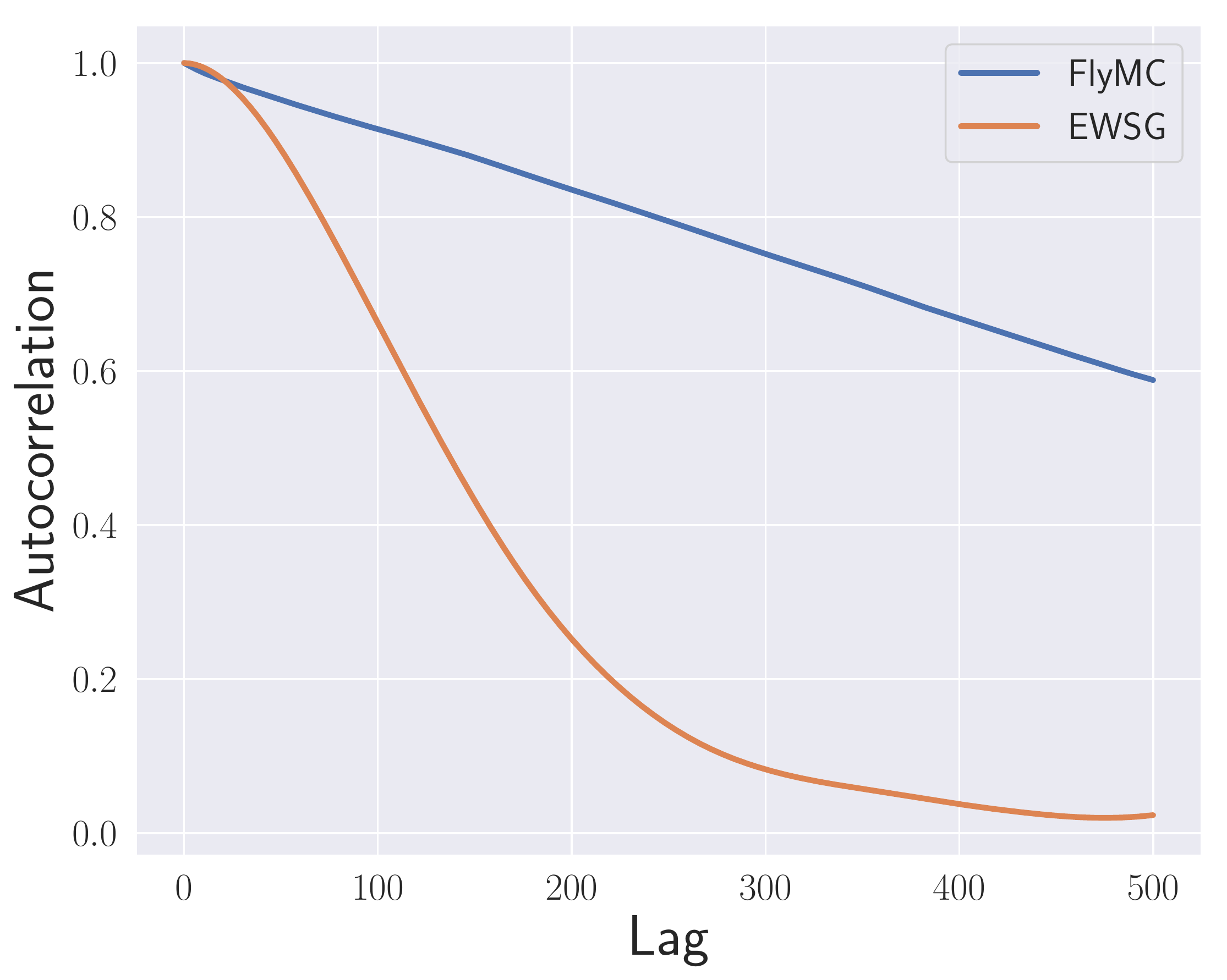}
		\caption{Autocorrelation} \label{fig:autocorrelation}
	\end{subfigure}

    \begin{subfigure}[t]{0.45\textwidth}
		\centering
		\includegraphics[width=\textwidth]{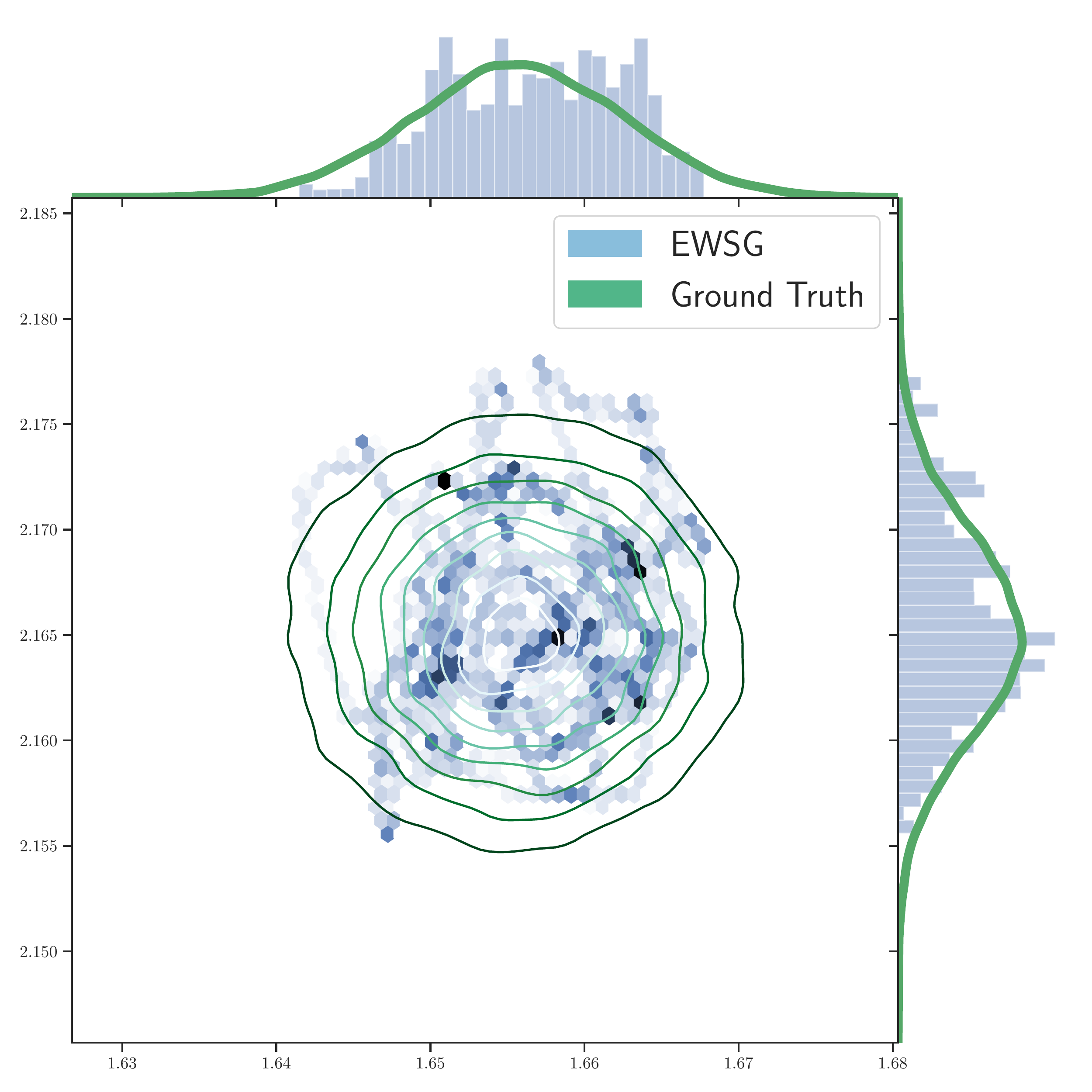}
		\caption{Samples of EWSG} \label{fig:EWSG_samples}
	\end{subfigure}
    \begin{subfigure}[t]{0.45\textwidth}
		\centering
		\includegraphics[width=\textwidth]{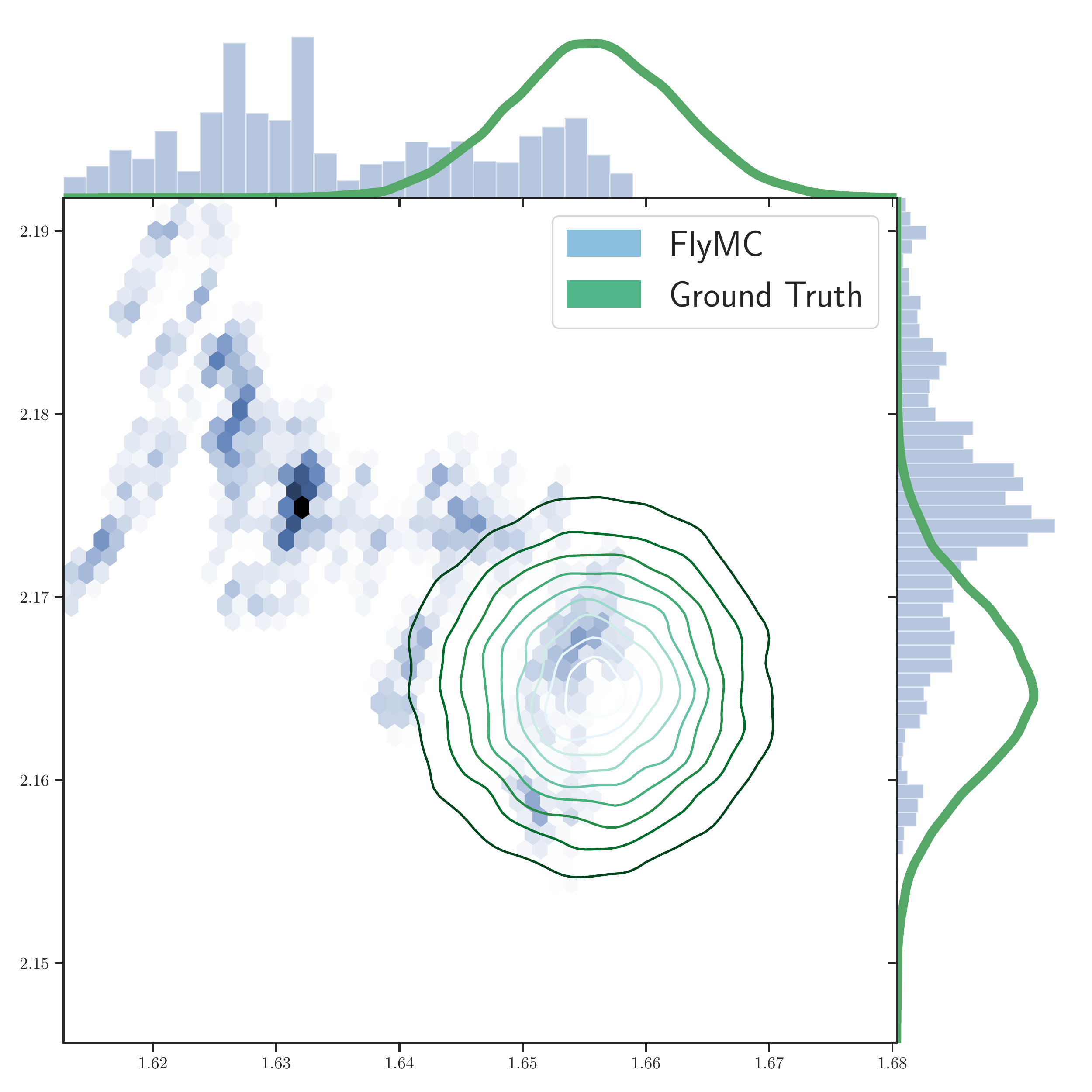}
		\caption{Samples of FlyMC} \label{fig:FlyMC_samples}
	\end{subfigure}
	
    \caption{(a) Histogram of data used in each iteration for FlyMC algorithm. (b) Autocorrelation plot of FlyMC, EWSG and MH. (c) Samples of EWSG. (d) Samples of FlyMC.}
\end{figure}

\subsection{Additional Results of BNN Experiment}\label{subsec:additional-BNN}
We report the test error of various SG-MCMC methods after 200 epochs in Table \ref{tab:test_error}. For both MLP and CNN architecture, EWSG outperforms its uniform counterpart SGHMC as well as other benchmarks SGLD, pSGLD and CP-SGHMC. The results clearly demonstrate the effectiveness of the proposed EWSG on deep models.
\begin{table}[!h]
    \centering
    \caption{Test error (mean $\pm$ standard deviation) after 200 epoches.}\label{tab:test_error}
    \begin{tabular}{ccc}
        \Xhline{3\arrayrulewidth}
        Method  & Test Error(\%), MLP &  Test Error(\%), CNN    \\
        \hline
        SGLD    &  1.976 $\pm$ 0.055     &    0.848 $\pm$ 0.060       \\
        \hline 
        pSGLD   &  1.821 $\pm$ 0.061    &    0.860 $\pm$ 0.052       \\
        \hline
        SGHMC  &  1.833 $\pm$ 0.073       &   0.778 $\pm$ 0.040    \\
        \hline 
        CP-SGHMC &  1.835 $\pm$ 0.047   &   0.772 $\pm$ 0.055                \\
        \hline 
        EWSG &    \textbf{1.793} $\pm$ \textbf{0.100} &  \textbf{0.753 $\pm$ 0.035} \\ 
        \Xhline{3\arrayrulewidth}
    \end{tabular}
\end{table}

\subsection{Additional Experiment on BNN: Tuning $M$} \label{subsec:BNN-M}
In each iteration of EWSG, we run an index Markov chain of length $M$ and select a ``good" minibatch to estimate gradient, therefore EWSG essentially uses $b \times (M + 1)$ data points per iteration where $b$ is minibatch size. How does EWSG compare with its uniform gradient subsampling counterpart with a larger minibatch size ($b \times (M + 1)$)?

We empirically answer this question in the context of BNN with MLP architecture. We use the same step size for SGHMC and EWSG and experiment a large range of values of minibatch size $b$ and index chain length $M$. Each algorithm is run for 200 data passes and 10 independent samples are drawn to estimate test error. The results are shown in Table \ref{tab:fine_compare}. We find that EWSG beats SGHMC with larger minibatch in 8 out of 9 comparison groups, which suggests in general EWSG could be a better way to consuming data compared to increasing minibatch size and may shed light on other areas where stochastic gradient methods are used (e.g. optimization).

\begin{table}[h!]
    \centering
    \begin{tabular}{c|ccc}
    \hline
    $b$ & \small $M+1=2$   & \small $M+1=5$  & \small $M+1=10$ \\ 
    \hline \\[-0.5em]
    $100$  & \shortstack{ \textbf{\ewsg{1.86\%}} \\ \SGHMC{1.94\%}} & \shortstack{ \textbf{\ewsg{1.83\%}} \\ \SGHMC{1.92\%}} &    \shortstack{\textbf{\ewsg{1.80\%}} \\ \SGHMC{1.97\%}} \\ \hline \\[-0.5em]
    $200$  & \shortstack{ \ewsg{1.90\%} \\ \textbf{\SGHMC{1.87}}\%} & \shortstack{ \textbf{\ewsg{1.87\%}} \\ \SGHMC{1.97\%}} &    \shortstack{ \textbf{\ewsg{1.80\%}} \\ \SGHMC{2.07\%}}              \\\hline\\[-0.5em]
    $500$  & \shortstack{ \textbf{\ewsg{1.79\%}} \\ \SGHMC{1.97\%}}  & \shortstack{ \textbf{\ewsg{2.01\%}} \\ \SGHMC{2.17\%}}  & \shortstack{ \textbf{\ewsg{2.36\%}} \\ \SGHMC{2.37\%}}              \\
    \hline 
     \end{tabular}\\
    \caption{Test errors of \ewsg{EWSG} (top of each cell) and \SGHMC{SGHMC} (bottom of each cell) after 200 epoches. $b$ is minibatch size for \ewsg{EWSG}, and minibatch size of \SGHMC{SGHMC} is set as $b\times(M+1)$ to ensure the same number of data used per parameter update for both algorithms. Step size is set $h=\frac{10}{b(M+1)}$ as suggested in \citep{chen2014stochastic}, different from that used to produce Table \ref{tab:test_error}. Results with smaller test error is highlighted in boldface.}\label{tab:fine_compare}
\end{table}

\section{EWSG does not necessarily change the speed of convergence significantly} \label{sec:speed}
Changing the weights of stochastic gradient from uniform to non-uniform, as we saw, can increase the statistical accuracy of the sampling; however, it does not necessarily increase or decrease the speed of convergence to the (altered) limiting distribution. Numerical examples already demonstrated this fact, but on the theoretical side, we note the non-asymptotic bound provided by Theorem \ref{thm:mse} may not be tight in terms of the speed of convergence due to its generality. Therefore, here we quantify the convergence speed on a simple quadratic example:

Consider $V_i(\theta)=\frac{1}{n}(\theta-\mu_i)^2/2$ where $\mu_i$'s are constant scalars. Assume without loss of generality that $\sum_i \mu_i = 0$, and thus $V(\theta)=\sum_{i=1}^n V_i(\theta) = \theta^2/2 + \text{some constant}$. We will show the convergence speed of $\mathbb{E}\theta$ is comparable for uniform and a class of non-uniform SG-MCMC (including EWSG) applied to second-order Langevin equation (overdamped Langevin will be easier and thus omitted):

\begin{theorem}
    Consider, for $0<\gamma<2$, respectively SGHMC and EWSG,
    \[
    \begin{cases}
        \theta'_{k+1} &= \theta'_k + h r'_k \\
        r'_{k+1} &= r'_k - h \gamma r'_k - h (\theta'_k - \mu_{I'_k}) + \sqrt{h}\sigma \xi'_{k+1}
    \end{cases}
    \]
    and
    \[
    \begin{cases}
        \theta_{k+1} &= \theta_k + h r_k \\
        r_{k+1} &= r_k - h \gamma r_k - h (\theta_k - \mu_{I_k}) + \sqrt{h}\sigma \xi_{k+1}
    \end{cases} ,
    \]
    where $I'_k$ are i.i.d. uniform random variable on $[n]$, $I_k$ are $[\theta, r]$ dependent random variable on $[n]$ satisfying $\mathbb{P}(I_k=i)=1/n+\mathcal{O}(h^p)$, and $\xi_{k+1},\xi'_{k+1}$ are standard i.i.d. Gaussian random variables. Denote by $\bar{\theta'}_k = \mathbb{E}\theta'_k$, $\bar{r'}_k = \mathbb{E}r'_k$,  $\bar{\theta}_k = \mathbb{E}\theta_k$, $\bar{r}_k = \mathbb{E}r_k$, $x'_k=[\bar{\theta'}_k, \bar{r'}_k]^T$, and $x_k=[\bar{\theta}_k, \bar{r}_k]^T$, then
    \begin{equation}
        x'_k=(I+A h)^k x'_0,    \quad \text{where } A=\begin{bmatrix} 0 & 1 \\ -1 & -\gamma \end{bmatrix},
        \label{eq_jdkgboirqyuhbgolqu124erbgoulb}
    \end{equation}
    for small enough $h$, $\|x'_k\|$ converges to 0 exponentially with $k\rightarrow\infty$, and $x_k$ converges at a comparable speed in the sense that $\|x_k-x'_k\|=\mathcal{O}(h^p)$ if $x_0=x'_0$.
\end{theorem}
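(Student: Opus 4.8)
The plan is to reduce both systems to deterministic linear recursions on their means and then carry out a spectral analysis of $I+Ah$. First I would take expectations of both update rules. Since the Gaussian increments satisfy $\mathbb{E}[\xi'_{k+1}]=\mathbb{E}[\xi_{k+1}]=0$ and are independent of the current state, they drop out. For SGHMC the index $I'_k$ is uniform, so by the centering assumption $\mathbb{E}[\mu_{I'_k}]=\frac1n\sum_{i=1}^n\mu_i=0$. Writing $x'_k=[\bar{\theta'}_k,\bar{r'}_k]^T$, the two scalar recursions collapse into $x'_{k+1}=(I+Ah)x'_k$ with $I+Ah=\begin{bmatrix}1 & h\\ -h & 1-h\gamma\end{bmatrix}$, which upon iteration gives \eqref{eq_jdkgboirqyuhbgolqu124erbgoulb} and establishes the first claim.

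Next I would analyze the spectrum of $A$. Its characteristic polynomial is $\lambda^2+\gamma\lambda+1$, so for $0<\gamma<2$ the eigenvalues are the complex conjugate pair $\lambda_\pm=-\tfrac{\gamma}{2}\pm i\tfrac{\sqrt{4-\gamma^2}}{2}$, with $\mathrm{Re}(\lambda_\pm)=-\gamma/2<0$ and $|\lambda_\pm|^2=\det(A)=1$. The eigenvalues of $I+Ah$ are $1+h\lambda_\pm$, and $|1+h\lambda_\pm|^2=1+2h\,\mathrm{Re}(\lambda_\pm)+h^2|\lambda_\pm|^2=1-h\gamma+h^2$, which is strictly below $1$ whenever $0<h<\gamma$. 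Crucially, $I+Ah$ is a polynomial in $A$ and hence shares the $h$-independent eigenvectors of $A$; diagonalizing $A=P\Lambda P^{-1}$ once yields $\|(I+Ah)^k\|\le\kappa(P)\,\rho^k$ with $\rho=\sqrt{1-h\gamma+h^2}<1$ and $\kappa(P)=\|P\|\,\|P^{-1}\|$ independent of $h$, giving the claimed exponential decay of $\|x'_k\|$.

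For EWSG the same steps apply, except that the state-dependent weights no longer average $\mu$ to zero exactly. Using $\mathbb{P}(I_k=i)=1/n+\mathcal{O}(h^p)$, centering $\sum_i\mu_i=0$, and boundedness of the scalars $\mu_i$, the conditional mean obeys $\mathbb{E}[\mu_{I_k}\mid\mathcal{F}_k]=\sum_i(1/n+\mathcal{O}(h^p))\mu_i=\mathcal{O}(h^p)$ uniformly in the state, hence $\mathbb{E}[\mu_{I_k}]=\mathcal{O}(h^p)$. The mean recursion then reads $x_{k+1}=(I+Ah)x_k+e_k$ with $e_k=[0,\;h\,\mathbb{E}[\mu_{I_k}]]^T$ and $\|e_k\|=\mathcal{O}(h^{p+1})$. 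Subtracting the SGHMC recursion and setting $d_k=x_k-x'_k$ produces the inhomogeneous linear recursion $d_{k+1}=(I+Ah)d_k+e_k$ with $d_0=0$, whose variation-of-constants solution is $d_k=\sum_{j=0}^{k-1}(I+Ah)^{k-1-j}e_j$.

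Finally I would bound this sum using the contraction estimate from the second step: $\|d_k\|\le\kappa(P)\,\mathcal{O}(h^{p+1})\sum_{j=0}^{k-1}\rho^{k-1-j}\le\kappa(P)\,\mathcal{O}(h^{p+1})\,\tfrac{1}{1-\rho}$. Since $1-\rho=1-\sqrt{1-h\gamma+h^2}=\tfrac{\gamma}{2}h+\mathcal{O}(h^2)$, we have $\tfrac{1}{1-\rho}=\mathcal{O}(h^{-1})$ uniformly in $k$, so the per-step error $\mathcal{O}(h^{p+1})$ accumulates to $\|d_k\|=\mathcal{O}(h^p)$, as claimed. I expect the main obstacle to be exactly this accumulation step: one must verify that the geometric series converges uniformly in $k$ and contributes precisely one inverse power of $h$ through $1/(1-\rho)$, and that the prefactor $\kappa(P)$ is genuinely $h$-independent (which hinges on $I+Ah$ and $A$ sharing eigenvectors). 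It is this cancellation of one power of $h$ against the extra power carried by $e_k$ that produces the final $\mathcal{O}(h^p)$ rate, while the remaining estimates are routine bookkeeping.
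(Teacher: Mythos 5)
Your proposal is correct and follows essentially the same route as the paper's proof: take expectations to get the homogeneous recursion for SGHMC and the inhomogeneous one $x_{k+1}=(I+Ah)x_k+he_k$ for EWSG, solve by variation of constants, diagonalize $I+Ah$ (the paper phrases your $\kappa(P)$ bound as projecting each $e_j$ onto the $h$-independent eigenspaces), and cancel the geometric-series factor $1/(1-\rho)=\mathcal{O}(h^{-1})$ against the extra power of $h$ in the perturbation. The only cosmetic differences are that you compute the eigenvalues of $A$ explicitly and state the condition $0<h<\gamma$, where the paper simply asserts $|\lambda_{1,2}|=\sqrt{1-h\gamma+h^2}<1$ for small enough $h$.
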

\begin{proof}
    Taking the expectation of the $[\theta',r']$ iteration and using the fact that $\sum_i \mu_i=0$ and hence $\mathbb{E}\mu_{I'_k}=0$, one easily obtains \eqref{eq_jdkgboirqyuhbgolqu124erbgoulb}. The geometric convergence of $x'_k$ thus follows from the fact that eigenvalues of $I+Ah$ have less than 1 modulus for small enough $h$.

    Let $e_k = [0, \mathbb{E}\mu_{I_k}]^T$ and then
    \[
        e_k = [0, \sum_{i=1}^n \mathbb{P}(I_k=i) \mu_i]^T = [0, \mathcal{O}(h^p)]^T
    \]
    Now we take the expectation of both sides of the $[\theta,r]$ iteration and obtain $x_{k+1}=(I+Ah) x_k + h e_k$. Therefore
    \begin{align*}
        x_k =& (I+Ah)^k x_0 + (I+Ah)^{k-1} h e_0 + \cdots + (I+Ah) h e_{k-2} + h e_{k-1}\\
        =& x'_k + h \big( (I+Ah)^{k-1} e_0 + \cdots + (I+Ah) e_{k-2} + e_{k-1} \big)
    \end{align*}
    To bound the difference, note $I+Ah$ is diagonalizable with complex eigenvalues $\lambda_{1,2}$ satisfying
    \[
    |\lambda_1|=|\lambda_2|=\sqrt{1-h\gamma+h^2}=1-\gamma h/2+\mathcal{O}(h^2).
    \]
    Projecting $e_j$ to the corresponding eigenspaces via $e_j=v_{1,j}+v_{2,j}$, we can get
    \begin{align*}
        h \| (I+Ah)^{k-1} e_0 + \cdots + e_{k-1} \| 
        & \leq h \left( \| (I+Ah)^{k-1} e_0 \| + \cdots + \|e_{k-1} \| \right) \\
        & = h \left( |\lambda_1|^{k-1} \|v_{1,0}\| + |\lambda_2|^{k-1} \|v_{2,0}\| + \cdots + \|v_{1,k-1}\| + \|v_{2,k-1}\| \right) \\
        & \leq h C h^p (|\lambda_1|^{k-1} + \cdots + 1)
        = h C h^p \frac{1-|\lambda_1|^k}{1-|\lambda_1|} \leq h C h^p \frac{1}{1-|\lambda_1|} \\
        & \leq \hat{C} h^p
    \end{align*}
    for some constant $C$ and $\hat{C}$.
\end{proof}

Important to note is, although this is already a nonlinear example for EWSG (as nonlinearity enters through the $\mu_{I_k}$ term), it is a linear example for SGHMC. For the fully nonlinear cases, a tight quantification of EWSG's convergence speed remains to be an open theoretical challenge (a loose quantification is already given by the general Theorem \ref{thm:mse}).

\end{document}